\documentclass{article}
\usepackage[T1]{fontenc}
\usepackage{styles/iclr2027_conference,times}

\usepackage{amsmath,amsfonts,bm}

\def\eqref#1{equation~\ref{#1}}

\def\1{\bm{1}}

\DeclareMathAlphabet{\mathsfit}{\encodingdefault}{\sfdefault}{m}{sl}
\SetMathAlphabet{\mathsfit}{bold}{\encodingdefault}{\sfdefault}{bx}{n}

\newcommand{\E}{\mathbb{E}}

\newcommand{\KL}{D_{\mathrm{KL}}}

\usepackage{amsthm}
\usepackage{algorithm,algorithmic}
\usepackage{hyperref}
\hypersetup{hidelinks}
\usepackage{url}
\usepackage{xcolor}
\definecolor{darkgreen}{RGB}{0,100,0}
\usepackage{graphicx}

\AddToHook{env/figure/begin}{\setlength{\abovecaptionskip}{2pt}}

\newtheorem{proposition}{Proposition}
\hypersetup{pdftitle={FERPO: Forward Entropy-Regularized Policy Optimization},
  pdfauthor={Sebastian Sanokowski, Alireza Sarmadi, Majid Khadiv}}

\hypersetup{hidelinks}
\let\FERPOOriginalColor\color
\renewcommand{\color}[2][]{\FERPOOriginalColor{black}}
\let\FERPOOriginalTextColor\textcolor
\renewcommand{\textcolor}[3][]{\FERPOOriginalTextColor{black}{#3}}

\newcommand{\FERPORevision}[1]{#1}

\title{FERPO: Forward Entropy-Regularized\\Policy Optimization}
\author{
Sebastian Sanokowski \qquad Alireza Sarmadi \qquad Majid Khadiv\\
\normalfont Applied and Theoretical Aspects of Robot Intelligence (ATARI) Lab\\
\normalfont Munich Institute of Robotics and Machine Intelligence (MIRMI)\\
\normalfont Technical University of Munich
}

\iclrfinalcopy

\begin{document}
\addtocontents{toc}{\protect\setcounter{tocdepth}{-1}}
\maketitle
\lhead{Under review}

\begin{abstract}
Several state-of-the-art methods for online reinforcement learning in continuous control improve policies using action gradients of a learned critic. However, critics are typically trained to predict returns, and accurate value predictions do not necessarily yield accurate action derivatives, potentially leading to unreliable policy updates. We propose Forward Entropy-Regularized Policy Optimization (FERPO), an on-policy maximum entropy reinforcement learning algorithm that performs policy improvement using critic values without differentiating the critic with respect to actions. FERPO derives an optimal target action distribution from a policy-improvement objective regularized by entropy and Kullback--Leibler (KL) divergence. We then fit the actor to this target by minimizing a forward-KL objective, estimated using self-normalized importance sampling (SNIS) with actions drawn from the rollout policy. By limiting the target distribution's deviation from the rollout policy, the KL regularization helps keep these importance weights well behaved. In contrast to reverse-KL objectives, which can favor a subset of the target distribution's modes, the forward-KL objective encourages coverage of multiple high-value modes and thereby promotes exploration. Experiments and ablations on MuJoCo Playground and ManiSkill show competitive performance and sample-efficiency gains. Computational benchmarks also demonstrate faster actor updates than Relative Entropy Pathwise Policy Optimization (REPPO).
Our implementation is publicly available.\footnote{Code: {\urlstyle{same}\url{https://github.com/Atarilab/FERPO}}.}
\end{abstract}

\section{Introduction}
\label{sec:introduction}

Reinforcement learning (RL) learns policies through interaction to
maximize cumulative reward. Its successes range from game playing
\citep{Silver_2017} to autonomous drone racing \citep{Kaufmann_2023}
and robot locomotion \citep{ha2025learning}. In robotics, massively
parallel simulation has accelerated on-policy learning by collecting
trajectories from thousands of environments simultaneously. This has
enabled rapid policy training and successful transfer from simulation
to physical robots \citep{pmlr-v164-rudin22a,zakka2025mujocoplayground}.
These advances make efficient and reliable policy improvement central
to scaling online robot learning.

Relative Entropy Pathwise Policy Optimization (REPPO;
\citealp{voelcker2026relative}) is a recent state-of-the-art on-policy
method for this setting. It learns a soft action-value critic from
on-policy trajectories and improves the actor by differentiating the
critic with respect to actions, while limiting policy changes relative
to the rollout policy. This makes pathwise actor updates compatible
with massively parallel on-policy learning. However, standard critic
training supervises values, whereas these actor updates require accurate
action derivatives. Accurate value predictions do not necessarily imply accurate
derivatives: critic-gradient error can bias policy improvement
\citep{NEURIPS2020_03255088}, and learned action gradients can show poor
directional agreement with rollout-based estimates
\citep{pmlr-v139-sarafian21a}. In on-policy learning, biased gradients
can also interact with changing data distributions and destabilize
training \citep{pmlr-v211-gurumurthy23a}. These limitations motivate
policy improvement that uses critic values without requiring their
action derivatives.

We derive the optimal target distribution for a
standard RL policy-improvement objective with entropy and
Kullback--Leibler (KL) regularization. Making this target explicit lets
us replace reverse-KL projection by forward-KL fitting, which encourages
coverage of target modes \citep{minka2005divergence}.
The KL trust region keeps the target close to the rollout policy,
helping control importance-weight concentration, as in recent
diffusion and normalizing flow based sampling methods
\citep{ICLR2026_8f38c9d6,NEURIPS2025_f1ccd13e}.
We introduce Forward Entropy-Regularized Policy Optimization
(FERPO), which applies this construction to on-policy learning.
Its actor update uses critic values without differentiating the critic
with respect to actions. Our contributions are:

\begin{itemize}
  \item An on-policy actor update that uses critic values to construct
  a regularized target action distribution and fits the actor to this
  target using forward KL, without differentiating the critic with
  respect to actions (Section~\ref{sec:target_and_projection}).
  \item A practical estimator using self-normalized
  importance sampling (SNIS), with Lagrange multipliers that regulate
  policy entropy and importance-weight concentration through normalized
  effective sample size (ESS)
  (Sections~\ref{sec:target_and_projection} and~\ref{sec:log_ess}).
  \item {An evaluation on ManiSkill and MuJoCo Playground
  demonstrating competitive performance and sample-efficiency gains
  (Section~\ref{sec:learning_benchmarks}), supported by ablations
  (Section~\ref{sec:ablations}) and computational benchmarks showing
  faster actor updates than REPPO
  (Section~\ref{sec:computational_cost}).}
  \item {Cached actor updates that reduce computation,
  and a state-value extension based on one-step simulation, with
  promising results on AcrobotSwingupSparse
  (Section~\ref{sec:critic_ablations}).}
\end{itemize}

\section{Problem Description}
\label{sec:problem_description}

\textbf{Maximum-entropy reinforcement learning.}
We consider a discounted Markov decision process with
continuous state space $\mathcal S$, bounded continuous action space
$\mathcal A$, initial-state density $\rho_0$, transition density
$P(s'\mid s,a)$, reward $r(s,a)$, and discount factor
$\gamma\in[0,1)$.
For a policy $\pi$, let $\rho_t^\pi$ denote the state
distribution at time $t$, generated by $s_0\sim\rho_0$,
$a_t\sim\pi(\cdot\mid s_t)$, and
$s_{t+1}\sim P(\cdot\mid s_t,a_t)$. Define its normalized discounted
state-visitation distribution as
$d^\pi(s):=(1-\gamma)\sum_{t=0}^{\infty}\gamma^t\rho_t^\pi(s)$,
with $\int d^\pi(s)\,\mathrm ds=1$.
Using this distribution, the maximum-entropy
objective can be written as
\begingroup
\begin{equation}
  \begin{aligned}
    \max_\theta\quad J_\tau(\pi_\theta)
    &:=\frac{1}{1-\gamma}
      \E_{\substack{s\sim d^{\pi_\theta}\\a\sim\pi_\theta(\cdot\mid s)}}
      \!\left[r(s,a)-\tau\log\pi_\theta(a\mid s)\right]\\
    &=\E_{\rho_0,P,\pi_\theta}\!\left[
      \sum_{t=0}^{\infty}\gamma^t
      \bigl(r(s_t,a_t)-\tau\log\pi_\theta(a_t\mid s_t)\bigr)
      \right],
  \end{aligned}
  \label{eq:maxent_objective}
\end{equation}
\endgroup
where the first expectation draws a state
$s\sim d^{\pi_\theta}$ and then an action
$a\sim\pi_\theta(\cdot\mid s)$. The second expectation is over
trajectories generated by $s_0\sim\rho_0$,
$a_t\sim\pi_\theta(\cdot\mid s_t)$, and
$s_{t+1}\sim P(\cdot\mid s_t,a_t)$.
The factor $1/(1-\gamma)$ accounts for the normalization of
$d^{\pi_\theta}$, making the two forms equal.
{Here $\pi_\theta$ is a neural-network policy
(see Appendix~\ref{app:ferpo_algorithm} for its parameterization),
and $\tau>0$ controls its entropy.} Since
$\mathcal H(\pi(\cdot\mid s))=-\E_{a\sim\pi}[\log\pi(a\mid s)]$,
this objective balances expected reward and policy entropy
\citep{ziebart2010modeling,haarnoja2017reinforcement,haarnoja2018soft}.
Its soft value functions satisfy
\begin{equation*}
  Q_\tau^\pi(s,a)
  =r(s,a)+\gamma\E_{s'\sim P(\cdot\mid s,a)}[V_\tau^\pi(s')],
  \quad
  V_\tau^\pi(s)
  =\E_{a\sim\pi(\cdot\mid s)}
    [Q_\tau^\pi(s,a)-\tau\log\pi(a\mid s)].
\end{equation*}
Thus, $Q_\tau^\pi$ includes future entropy rewards, while the current
action's entropy contribution appears explicitly in $V_\tau^\pi$.

\textbf{KL surrogate.}
Let $\pi_{\mathrm{roll}}:=\pi_{\theta_{\mathrm{roll}}}$ be the frozen
policy that generated the current rollout, with soft critic
$Q_{\mathrm{roll}}:=Q_\tau^{\pi_{\mathrm{roll}}}$.
Applying the policy-gradient theorem \citep{sutton1999policy} to
Eq.~\ref{eq:maxent_objective} yields the local reverse-KL surrogate
\begin{equation}
  \mathcal L_{\mathrm{ME}}(\theta;\theta_{\mathrm{roll}})
  :=\frac{\tau}{1-\gamma}\E_{s\sim d_{\mathrm{roll}}}
    \!\left[\KL\!\left(
      \pi_\theta(\cdot\mid s)\middle\|p_{Q,\tau}(\cdot\mid s)
    \right)\right],
  \label{eq:maxent_surrogate}
\end{equation}
where $d_{\mathrm{roll}}:=d^{\pi_{\theta_{\mathrm{roll}}}}$ is the rollout
policy's normalized discounted state-visitation distribution and
$p_{Q,\tau}(a\mid s)\propto\exp(Q_{\mathrm{roll}}(s,a)/\tau)$ is its
Boltzmann policy-improvement target. The policy $\pi_\theta$ acts as a
variational approximation to this target. With $Q_{\mathrm{roll}}$ and
$d_{\mathrm{roll}}$ held fixed, the surrogate satisfies
$\nabla_\theta\mathcal L_{\mathrm{ME}}(\theta;\theta_{\mathrm{roll}})
=-\nabla_\theta J_\tau(\pi_\theta)$
at $\theta=\theta_{\mathrm{roll}}$.
Thus, the surrogate's descent direction matches the return's ascent
direction at the rollout parameters; see
Appendix~\ref{app:kl_surrogate} for the derivation.

\begingroup

\textbf{Trust regions and total variation.}
Repeated updates on one rollout can change state visitation and reduce
surrogate accuracy. Write $\pi_{\mathrm{curr}}=\pi_\theta$ and define
$D_{\mathrm{TV}}(\pi_{\mathrm{curr}},\pi_{\mathrm{roll}})
:=\frac12\int_{\mathcal A}
|\pi_{\mathrm{curr}}(a)-\pi_{\mathrm{roll}}(a)|\,\mathrm da$,
suppressing state conditioning. Its worst-state value is
$D_{\mathrm{TV}}^{\max}:=\sup_s D_{\mathrm{TV}}(
\pi_{\mathrm{curr}}(\cdot\mid s),\pi_{\mathrm{roll}}(\cdot\mid s))$.
Let
$\Delta J_\tau:=J_\tau(\pi_{\mathrm{curr}})-J_\tau(\pi_{\mathrm{roll}})$
be the actual return change and
$\Delta J_{\mathrm{surrogate}}
:=\mathcal L_{\mathrm{ME}}(\theta_{\mathrm{roll}};\theta_{\mathrm{roll}})
 -\mathcal L_{\mathrm{ME}}(\theta;\theta_{\mathrm{roll}})$
the predicted gain. For an exact critic and soft advantage bounded
in magnitude by $\epsilon_A$, the performance-difference argument
\citep{schulman2015trust,cayci2024convergence} gives
\begin{equation}
  \Delta J_\tau
  \geq\Delta J_{\mathrm{surrogate}}
    -\frac{4\gamma\epsilon_A}{(1-\gamma)^2}
       (D_{\mathrm{TV}}^{\max})^2
    -\frac{\tau}{1-\gamma}D_{\mathrm{KL}}^{\max},
  \label{eq:trust_region_bound}
\end{equation}
where $D_{\mathrm{KL}}^{\max}
:=\sup_s\KL(\pi_{\mathrm{curr}}(\cdot\mid s)
            \|\pi_{\mathrm{roll}}(\cdot\mid s))$.
The TV term bounds the advantage error; the KL term is the entropy
correction (Appendix~\ref{app:trust_region_bound}). Pinsker's inequality,
$2(D_{\mathrm{TV}}^{\max})^2\leq D_{\mathrm{KL}}^{\max}$
\citep[Theorem~31]{vanerven2014renyi}, upper-bounds the total penalty by
$\bigl(\frac{2\gamma\epsilon_A}{(1-\gamma)^2}
       +\frac{\tau}{1-\gamma}\bigr)D_{\mathrm{KL}}^{\max}$.
This motivates minimizing $\mathcal L_{\mathrm{ME}}$ subject to
$D_{\mathrm{KL}}^{\max}\leq\delta_{\mathrm{KL}}$.
{For tanh-squashed Gaussians, as frequently used in
MaxEnt-RL, uniform forward-KL constraints or TV bounds strictly below one also control the
reverse-KL entropy correction (Appendix~\ref{app:gaussian_trust_regions}).}
\endgroup

\textbf{Practical objective.}
Replacing the supremum by an expectation over rollout states gives the
average-KL constraint
$\bar D_\theta:=\E_{s\sim d_{\mathrm{roll}}}
[\KL(\pi_\theta(\cdot\mid s)\|\pi_{\mathrm{roll}}(\cdot\mid s))]
\leq\delta_{\mathrm{KL}}$.
We also require a minimum average entropy
$\bar{\mathcal H}_\theta:=\E_{s\sim d_{\mathrm{roll}}}
[\mathcal H(\pi_\theta(\cdot\mid s))]\geq\mathcal H_{\mathrm{target}}$,
interpreting $\tau$ as its Lagrange multiplier, as in automatic entropy
tuning \citep{haarnoja2018applications}.
With $Q_{\mathrm{roll}}$ and $d_{\mathrm{roll}}$ held fixed, maximizing
expected critic value under these constraints gives the Lagrangian
\begin{equation}
  \mathcal L(\pi_\theta,\tau,\lambda)
  :=-\E_{\substack{s\sim d_{\mathrm{roll}}\\a\sim\pi_\theta(\cdot\mid s)}}
    [Q_{\mathrm{roll}}(s,a)]
    +\tau(\mathcal H_{\mathrm{target}}-\bar{\mathcal H}_\theta)
    +\lambda(\bar D_\theta-\delta_{\mathrm{KL}}),
  \label{eq:constrained_lagrangian}
\end{equation}
where $\tau,\lambda\geq0$. The policy minimizes $\mathcal L$, while
the multipliers maximize it: $\tau$ increases when entropy is below
target, and $\lambda$ increases when the KL bound is exceeded.
The rollout-average constraint and estimated critic approximate the
trust-region problem but do not guarantee its worst-state bound.
This motivates a shared design principle for practical
on-policy algorithms: control deviation from the rollout policy when
making multiple updates on the same data, so that the local surrogate
remains informative.

\textbf{Connection to existing methods.}
TRPO controls policy changes through an explicit KL
constraint \citep{schulman2015trust}, while Proximal Policy Optimization
(PPO) clips policy likelihood ratios in the actor surrogate to
discourage large changes \citep{schulman_proximal_2017}.
Relative entropy pathwise policy optimization (REPPO)
learns a soft critic $Q_\phi$ from on-policy rollouts. At each state,
let $\widehat k_\theta(s)$ estimate the forward KL
$\KL(\pi_{\mathrm{roll}}(\cdot\mid s)\|\pi_\theta(\cdot\mid s))$,
using actions drawn from $\pi_{\mathrm{roll}}$.
For $\widehat k_\theta(s)<\delta_{\mathrm{KL}}$, REPPO minimizes the unconstrained
loss $\ell_{\mathrm{ME}}(\theta;s)=\E_{a\sim\pi_\theta(\cdot\mid s)}
[\tau\log\pi_\theta(a\mid s)-Q_\phi(s,a)]$ using pathwise gradients.
Otherwise, it replaces this loss by $\lambda\widehat k_\theta(s)$,
with adaptive $\lambda>0$, pulling the policy back toward
$\pi_{\mathrm{roll}}$ \citep{voelcker2026relative}.

\section{Method: FERPO}
\label{sec:method}

Pathwise actor updates require accurate critic action derivatives,
although standard critic learning supervises values. Value accuracy
alone does not ensure derivative accuracy
\citep{NEURIPS2020_03255088,pmlr-v139-sarafian21a}.
This motivates an actor update that uses critic values without requiring
$\nabla_a Q_\phi(s,a)$.
We introduce \emph{Forward Entropy-Regularized Policy Optimization}
(\textbf{FERPO}). FERPO first minimizes the Lagrangian in
Eq.~\ref{eq:constrained_lagrangian} over unrestricted action
distributions $q(\cdot\mid s)$ with fixed multipliers, then fits
the parametric policy to the resulting target using the forward KL
divergence combined with self-normalized importance
sampling (SNIS). The
temperature and trust-region multiplier are adapted alongside the actor.
The actor differentiates the policy log density with the target weights
held fixed; it still depends on critic-value accuracy and adequate
sampling of the target.

\subsection{Target construction and actor updates}
\label{sec:target_and_projection}

We define the target as
$q^*=\arg\min_q\mathcal L(q,\tau,\lambda)$, with $\mathcal L$ from
Eq.~\ref{eq:constrained_lagrangian} and the minimization over all
conditional action densities $q(\cdot\mid s)$, unrestricted by the
parametric policy family. With the critic, rollout policy, $\tau$, and
$\lambda$ fixed, this minimization separates across states.
Introducing a Lagrange multiplier for the normalization constraint
$\int_{\mathcal A}q(a\mid s)\,\mathrm da=1$ and setting the functional
derivative with respect to $q$ to zero yields the unique optimal density
\begin{equation}
  q^*(a\mid s)
  =\frac{1}{Z_*(s)}
    \exp\!\left(\frac{Q_{\mathrm{roll}}(s,a)}{\tau+\lambda}\right)
    \pi_{\mathrm{roll}}(a\mid s)^{\lambda/(\tau+\lambda)}.
  \label{eq:optimal_target}
\end{equation}
Here $Z_*(s)$ normalizes the density. Thus, $q^*$ is the unrestricted
minimizer of $\mathcal L(\cdot,\tau,\lambda)$, which we subsequently
approximate within the policy family. See Appendix~\ref{app:optimal_target}
for a derivation. The target combines high-value
actions with proximity to the rollout policy: $\lambda=0$ recovers
$p_{Q,\tau}$, while increasing $\lambda$ makes the update more
conservative.
Under the assumptions of Appendix~\ref{app:exact_fit_proof}, repeated
exact unrestricted updates with a fixed critic and temperature and a
refreshed reference converge to $p_{Q,\tau}$. The fixed KL multiplier
$\lambda$ controls the convergence rate without changing the limit.

For fixed multipliers, Eq.~\ref{eq:constrained_lagrangian} equals
$(\tau+\lambda)\E_{s\sim d_{\mathrm{roll}}}[\KL(\pi_\theta\|q^*)]$
plus a constant independent of the policy.
FERPO replaces this reverse-KL projection by a forward-KL fit, keeping the
critic and rollout parameters fixed during each actor update:
\begin{equation*}
  \mathcal L_{\mathrm{F}}(\theta)
  :=\E_{s\sim d_{\mathrm{roll}}}\!\left[
    \KL\!\left(q^*(\cdot\mid s)\middle\|\pi_\theta(\cdot\mid s)\right)
  \right]
  =-\E_{\substack{s\sim d_{\mathrm{roll}}\\a\sim q^*(\cdot\mid s)}}
    [\log\pi_\theta(a\mid s)]+\mathrm{const}.
\end{equation*}
This is a weighted maximum-likelihood fit to the optimal regularized
target. All state expectations below use the frozen distribution
$d_{\mathrm{roll}}$ and are estimated using rollout batches.
To estimate the action expectation, draw $K$ candidate actions
$a_i\sim\pi_{\mathrm{roll}}(\cdot\mid s)$ and minimize the SNIS loss
\begin{equation}
  \boxed{
    \widehat{\mathcal L}_{\mathrm F}(\theta)
    :=-\E_{s\sim d_{\mathrm{roll}}}\!\left[
      \sum_{i=1}^{K}w_i(s)\log\pi_\theta(a_i\mid s)\right],
    \quad\text{where}\quad
    w_i(s):=\frac{\exp(\ell_i(s))}{\sum_{j=1}^{K}\exp(\ell_j(s))}
  },
  \label{eq:sampled_forward_kl}
\end{equation}
with $\ell_i(s):=[Q_\phi(s,a_i)-\tau\log\pi_{\mathrm{roll}}(a_i\mid s)]/(\tau+\lambda)$.
The weights are held fixed during the actor update.

\textbf{Multiplier losses and complete actor update.}
As suggested by Eq.~\ref{eq:constrained_lagrangian}, we adapt the
temperature $\tau$ and trust-region coefficient $\lambda$ as Lagrange
multipliers alongside the policy.
We parameterize $\tau=\exp(\eta)$ and $\lambda=\exp(\xi)$ to keep both
multipliers positive.
The temperature update follows automatic entropy tuning in SAC
\citep{haarnoja2018applications}. Using fresh actions
$\widetilde a_s\sim\pi_\theta(\cdot\mid s)$, we estimate the policy entropy as
$\widehat{\mathcal H}_\theta:=-\E_{s\sim d_{\mathrm{roll}}}
[\log\pi_\theta(\widetilde a_s\mid s)]$ and minimize
\begin{equation}
    \mathcal L_\tau(\eta)
    :=\tau\,\operatorname{stopgrad}
       [\widehat{\mathcal H}_\theta-\mathcal H_{\mathrm{target}}],
    \label{eq:temperature_controller}
\end{equation}
where $\operatorname{stopgrad}$ holds its argument fixed during
differentiation. Gradient descent increases $\tau$ when entropy is below
$\mathcal H_{\mathrm{target}}$ and decreases it when entropy exceeds the target.

For $\lambda$, however, we replace direct KL control with an update
based on the log normalized effective sample size (ESS) of the importance
weights. This regulates weight concentration;
Section~\ref{sec:log_ess} gives the detailed motivation and its connection
to trust-region control. Using the weights from
Eq.~\ref{eq:sampled_forward_kl}, define
$\widehat e(s):=(K\sum_{i=1}^{K}w_i(s)^2)^{-1}$, which ranges from
$1/K$ when one action carries all weight to $1$ for uniform weights.
For $K\geq2$ and a target $e_{\mathrm{target}}\in(1/K,1)$, we minimize
\begin{equation}
    \mathcal L_\lambda(\xi)
    :=\lambda\,\operatorname{stopgrad}\!\left[
      \E_{s\sim d_{\mathrm{roll}}}[\log\widehat e(s)]
        -\log e_{\mathrm{target}}\right].
    \label{eq:log_ess_controller}
\end{equation}
This increases $\lambda$ when average log ESS falls below
$\log e_{\mathrm{target}}$, flattening the weights, and decreases it when
average log ESS exceeds the target.
Together with the forward fit, these multiplier losses give the complete
actor objective
\begin{equation*}
  \mathcal L_{\mathrm{policy}}(\theta,\eta,\xi)
  :=\widehat{\mathcal L}_{\mathrm F}(\theta)
    +\mathcal L_\tau(\eta)+\mathcal L_\lambda(\xi).
\end{equation*}
The weights are detached from both the policy and multipliers, and the
entropy and log-ESS gaps are detached in their respective losses.
The critic and rollout policy stay fixed within each actor update; updated
$\tau$ and $\lambda$ are used to construct the next target weights.
Algorithm~\ref{alg:ferpo} in
Appendix~\ref{app:ferpo_algorithm} summarizes the full training loop.

\subsection{Log ESS and trust-region control}
\label{sec:log_ess}

\textbf{Controlling importance-weight concentration.}
Using the optimal regularized distribution $q^*$ as the forward-KL
target tempers the importance weights needed to fit the Boltzmann
target $p_{Q,\tau}$ directly. For $\tau>0$,
$q^*(a\mid s)/\pi_{\mathrm{roll}}(a\mid s)
\propto [p_{Q,\tau}(a\mid s)/\pi_{\mathrm{roll}}(a\mid s)]^{\tau/(\tau+\lambda)}$.
For $\lambda>0$, the exponent lies below one, reducing log-weight
differences and flattening the normalized weights on the same candidate
actions. With zero entropy regularization ($\tau=0$, $\lambda>0$), the
explicit rollout-density factor cancels entirely, leaving
$w_i(s)\propto\exp(Q_\phi(s,a_i)/\lambda)$.
For positive $\tau$, the residual factor is
$\pi_{\mathrm{roll}}(a_i\mid s)^{-\tau/(\tau+\lambda)}$.
For a fixed critic and rollout policy, increasing $\tau$ at fixed
$\lambda$ strengthens this inverse-density dependence and can increase
weight concentration, although it also softens the value weighting,
so the overall effect need not be monotone.
The update of the Lagrange multiplier $\lambda$ in
Eq.~\ref{eq:log_ess_controller} uses this flattening effect to limit
weight concentration.

At the population level, a lower bound on normalized ESS also gives an
upper bound on the target's KL divergence from the rollout policy
\FERPORevision{\citep{metelli2018policy,watson2023inferring,drolet2026daps}}.
{The following proposition also transfers this control
to the fitted actor in TV when its population fitting loss does not
increase. This motivates using ESS to adapt $\lambda$ and connects
it to the policy-proximity motivation in
Section~\ref{sec:problem_description}.}

\begin{proposition}[{ESS control of target and actor}]
\label{prop:log_ess_chi_squared}
\label{prop:ess_actor_tv}
{Fix a state and a target $q^*\ll\pi_{\mathrm{roll}}$
with population normalized ESS
$e_\infty:=1/\E_{\pi_{\mathrm{roll}}}
[(q^*/\pi_{\mathrm{roll}})^2]>0$.
If fitting to this fixed target satisfies
$\KL(q^*\|\pi_{\mathrm{curr}})
\leq\KL(q^*\|\pi_{\mathrm{roll}})$,
the triangle inequality and Pinsker's inequality give
\begin{equation}
  \frac12D_{\mathrm{TV}}(\pi_{\mathrm{curr}},\pi_{\mathrm{roll}})^2
  \leq \KL(q^*\|\pi_{\mathrm{roll}})
  \leq D_2(q^*\|\pi_{\mathrm{roll}})=-\log e_\infty,
  \label{eq:ess_actor_tv_bound}
\end{equation}
where $D_2$ is the order-two R\'enyi divergence
\citep{vanerven2014renyi}.}
\end{proposition}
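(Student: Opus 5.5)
The plan is to prove the two inequalities in Eq.~\ref{eq:ess_actor_tv_bound} separately, and then identify $D_2$ with $-\log e_\infty$ by a direct computation. Throughout, $D_{\mathrm{TV}}$ is half the $L^1$ distance, as defined in Section~\ref{sec:problem_description}. In this normalization Pinsker's inequality \citep[Theorem~31]{vanerven2014renyi} reads $D_{\mathrm{TV}}(p,r)\le\sqrt{\KL(p\|r)/2}$.

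\emph{Step 1: TV bound on the actor.} I apply the triangle inequality for $D_{\mathrm{TV}}$ with $q^*$ as the intermediate point:
\begin{equation*}
D_{\mathrm{TV}}(\pi_{\mathrm{curr}},\pi_{\mathrm{roll}})\le D_{\mathrm{TV}}(q^*,\pi_{\mathrm{curr}})+D_{\mathrm{TV}}(q^*,\pi_{\mathrm{roll}}).
\end{equation*}
I bound each term with Pinsker's inequality, using KL divergences that have $q^*$ as the first argument. This gives
\begin{equation*}
D_{\mathrm{TV}}(\pi_{\mathrm{curr}},\pi_{\mathrm{roll}})\le\sqrt{\KL(q^*\|\pi_{\mathrm{curr}})/2}+\sqrt{\KL(q^*\|\pi_{\mathrm{roll}})/2}.
\end{equation*}
The fitting hypothesis $\KL(q^*\|\pi_{\mathrm{curr}})\le\KL(q^*\|\pi_{\mathrm{roll}})$ lets me replace the first term by the second. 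The right-hand side is then at most $2\sqrt{\KL(q^*\|\pi_{\mathrm{roll}})/2}=\sqrt{2\,\KL(q^*\|\pi_{\mathrm{roll}})}$. Squaring and dividing by two gives the first inequality.

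If $\KL(q^*\|\pi_{\mathrm{roll}})=\infty$, the bound is trivial. Step~2 shows this case cannot occur when $e_\infty>0$.

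\emph{Step 2: KL versus R\'enyi-2.} Write $\rho:=q^*/\pi_{\mathrm{roll}}$, which is well defined because $q^*\ll\pi_{\mathrm{roll}}$. Then
\begin{equation*}
D_2(q^*\|\pi_{\mathrm{roll}})=\log\E_{\pi_{\mathrm{roll}}}[\rho^2]=\log\E_{q^*}[\rho],
\end{equation*}
and this equals $-\log e_\infty$ by the definition of $e_\infty$. In particular $D_2$ is finite whenever $e_\infty>0$.

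For the middle inequality, note that $\KL(q^*\|\pi_{\mathrm{roll}})=\E_{q^*}[\log\rho]$. Jensen's inequality for the concave function $\log$ gives $\E_{q^*}[\log\rho]\le\log\E_{q^*}[\rho]$. Equivalently, I can invoke the monotonicity of R\'enyi divergences in their order, $D_1\le D_2$ \citep{vanerven2014renyi}. This settles the chain in Eq.~\ref{eq:ess_actor_tv_bound}.

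\emph{Main obstacle.} I expect the main obstacle to be bookkeeping rather than any genuine difficulty. First, Pinsker's inequality must be applied with $q^*$ in the first argument of both KL terms, so that the fitting hypothesis (a forward-KL statement) matches. Second, the constant must work out to exactly $\tfrac12$ under the half-$L^1$ normalization of $D_{\mathrm{TV}}$. Third, I need to check integrability: the hypothesis that $\KL(q^*\|\pi_{\mathrm{curr}})$ is finite already implies $q^*\ll\pi_{\mathrm{curr}}$, so Jensen's inequality is applied to a finite quantity.
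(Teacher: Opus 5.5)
Your proposal is correct and follows essentially the same route as the paper. Both use the triangle inequality through $q^*$, apply Pinsker to both terms with $q^*$ as the first KL argument, and use the fitting hypothesis to collapse the sum to $\sqrt{2\KL(q^*\|\pi_{\mathrm{roll}})}$. Both then get the middle inequality from Jensen, $\E_{q^*}[\log\rho]\le\log\E_{q^*}[\rho]=\log\E_{\pi_{\mathrm{roll}}}[\rho^2]=-\log e_\infty$.

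One small correction to your last remark: the integrability Jensen needs in Step 2 comes from $\E_{q^*}[\rho]=1/e_\infty<\infty$, not from $q^*\ll\pi_{\mathrm{curr}}$. Absolute continuity with respect to $\pi_{\mathrm{curr}}$ bears only on the first Pinsker term, and Pinsker holds trivially there even if that KL is infinite.
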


{The proof is given in Appendix~\ref{app:log_ess_proof}.}
\textbf{What the ESS target controls.}
FERPO uses a shared $\lambda$ to regulate log ESS averaged over rollout
states. {Under the population fitting condition, higher
population ESS tightens the bound on the total variation between
$\pi_{\mathrm{curr}}$ and $\pi_{\mathrm{roll}}$.
{For Gaussian policies, a uniform TV bound below one also
bounds $\KL(\pi_{\mathrm{curr}}\|\pi_{\mathrm{roll}})$.}
In practice, ESS is estimated from sampled actions and averaged across
states, so the controller does not guarantee this KL bound at every state.}

\subsection{Critic learning}
\label{sec:critic_learning}

To improve learning of the soft action-value function $Q_\phi$, we follow
the critic-training approach of REPPO \citep{voelcker2026relative},
combining multi-step temporal-difference (TD) targets
with trace parameter $\lambda_{\mathrm{TD}}$, the histogram
loss with Gaussian targets (HL-Gauss) \citep{farebrother2024stop},
and auxiliary prediction losses.
The targets are computed from on-policy rollouts and include the entropy
contributions to the soft return. Here $\lambda_{\mathrm{TD}}$ is the
trace-decay parameter, separate from the actor's KL coefficient $\lambda$.
HL-Gauss encodes each scalar soft-return target as a
Gaussian-smoothed histogram over value bins and trains the critic's
categorical output using cross-entropy. The scalar $Q_\phi(s,a)$ is
recovered as the mean of the predicted histogram. Following REPPO,
auxiliary self-prediction trains the critic representation to predict
the next transition's latent representation, providing an additional
learning signal alongside the value targets
\citep{schwarzer2021dataefficient,voelcker2024when,fujimoto2025towards}.

\section{Related Work}
\label{sec:related_work}

\textbf{Constrained measure transport (CMT).}
Related methods use similar constrained target construction
and distribution fitting for sampling with normalizing flows
\citep{ICLR2026_8f38c9d6} and diffusion models
\citep{NEURIPS2025_f1ccd13e}. Both adapt the trust-region multiplier
$\lambda$ using KL constraints. The normalizing-flow method additionally
regulates temperature through a bound on entropy decrease.

\textbf{Maximum a posteriori policy optimization.}
MPO is an off-policy method that alternates KL-constrained target
construction with forward-KL policy fitting, using replay data and
Retrace for critic learning \citep{abdolmaleki2018maximum}.
Using our notation, its variational objective uses the augmented reward
$r(s,a)-\lambda\log(q(a\mid s)/\pi_{\mathrm{roll}}(a\mid s))$,
with unrestricted improvement distribution $q$ and frozen reference
policy $\pi_{\mathrm{roll}}$. In MPO, this reference can differ from
the behavior policies represented in replay.
On-policy MPO would set $q=\pi_{\mathrm{roll}}$ where the log ratio vanishes,
so its critic estimates unregularized environment returns. MPO uses a KL
constraint to limit the improvement of $q$ away from $\pi_{\mathrm{roll}}$ while
this critic is held fixed \citep{abdolmaleki2018maximum}.
FERPO also uses forward-KL fitting, but learns from on-policy rollouts
with the entropy-augmented return $r(s,a)-\tau\log\pi_\theta(a\mid s)$.
Its critic therefore retains
future entropy contributions even when evaluating the rollout policy,
while a separate $\lambda$ controls policy deviation during improvement.

\FERPORevision{\textbf{Policy fitting with exponential weights.}
Relative entropy policy search (REPS; \citealp{peters2010relative})
is an early example of KL-constrained policy search, deriving
exponential reweighting of the joint state-action distribution.
Advantage-weighted regression (AWR; \citealp{peng2019advantage}) and
advantage-weighted actor--critic (AWAC; \citealp{nair2020awac}) fit
policies to replay actions using exponential advantage weights.}
AWR can learn online from scratch, while AWAC targets offline
pretraining followed by online fine-tuning. Their underlying target is
$q^*(a\mid s)\propto\pi_{\mathrm{beh}}(a\mid s)\exp(A_{\mathrm{env}}(s,a)/\lambda)$,
where $\pi_{\mathrm{beh}}$ is the conditional action
distribution represented in replay, potentially combining multiple
behavior policies, and $A_{\mathrm{env}}$ estimates unregularized
advantage. \FERPORevision{Their weighting coefficient $\lambda$ arises as a KL multiplier
but is held fixed in both methods.} The state-dependent advantage
baseline cancels on normalization, so FERPO at $\tau=0$ has the same
target for a common reference and critic.
AWR and AWAC omit the state-dependent normalizer,
thereby reweighting states in the policy-fitting objective
(see Appendix~\ref{app:state_normalization} for details). FERPO instead learns from
fresh on-policy rollouts, samples candidate actions from the frozen
rollout policy, and uses SNIS to normalize their weights separately
within each state. For $\tau>0$, FERPO
also favors less likely actions at equal critic value, providing an
explicit exploration incentive.
\FERPORevision{Policy Optimization via Importance Sampling
(POIS; \citealp{metelli2018policy}) relates ESS to R\'enyi-2
divergence and uses this connection to penalize uncertainty
in importance-sampled return estimates.}
\FERPORevision{\citet{watson2023inferring} use ESS to adapt exponential weighting
in sampling-based control and relate ESS to R\'enyi-2 divergence.}
\FERPORevision{Discrete Autoencoding via Policy Search
(DAPS; \citealp{drolet2026daps}) combines an entropy- and
KL-regularized target of the same form as Eq.~\ref{eq:optimal_target}
with self-normalized importance-weighted maximum likelihood and
ESS-based multiplier adaptation for discrete variational encoders.
DAPS matches mean ESS, whereas FERPO regulates mean log normalized ESS.
DAPS also relates population ESS to R\'enyi-2 divergence and bounds
the target's KL from its sampling reference, corresponding to
$\KL(q^*\|\pi_{\mathrm{roll}})$ in our paper.
FERPO further relates ESS to an upper bound on the TV between
the current and rollout policies
(Proposition~\ref{prop:ess_actor_tv}), linking ESS-based trust-region
control to TRPO-style policy-improvement bounds
(Section~\ref{sec:problem_description}).}

\begingroup
\section{Experiments}
\label{sec:experiments}

\subsection{Update directions and sampling}
\label{sec:motivating_experiment}

We compare actor updates on the bimodal cost $c(a)=(a^2-1)^2-0.1a$,
whose right well is better. We use an exact critic $Q=-c$ and Gaussian
policies $\pi_\mu=\mathcal N(\mu,\sigma^2)$, with fixed $\sigma=0.4$,
$\tau=0.1$, $\lambda=0.05$, and $T=\tau+\lambda$.

\textbf{Forward fitting changes the update direction.}
Let $\nu$ denote the candidate mean and $m_*:=\E_{q_\mu^*}[a]$.
We differentiate with respect to $\nu$, holding the reference $\pi_\mu$
and target $q_\mu^*$ fixed. At $\nu=\mu$, REPPO's KL regularization is inactive:
\begin{equation}
  \begin{aligned}
    \left.\nabla_\nu L_{\mathrm{REPPO}}(\nu)\right|_{\nu=\mu}
      &=T\left.\nabla_\nu\KL(\pi_\nu\|q_\mu^*)\right|_{\nu=\mu}
        =\E_{\pi_\mu}[c'(a)],\\
    \left.\nabla_\nu L_{\mathrm{FERPO}}(\nu;\mu)\right|_{\nu=\mu}
      &=\left.\nabla_\nu\KL(q_\mu^*\|\pi_\nu)\right|_{\nu=\mu}
        =\frac{\mu-m_*}{\sigma^2}
        =\frac{1}{\lambda}\E_{q_\mu^*}[c'(a)].
  \end{aligned}
  \label{eq:bimodal_update_directions}
\end{equation}
Both methods average the same cost derivative under different
distributions (derivation: Appendix~\ref{app:projection_geometry}).
FERPO's value-based target can weight distant low-cost regions:
at B ($\mu=-0.6$), $\E_{\pi_\mu}[c']=0.284$ and
$\E_{q_\mu^*}[c']=-0.0419$. Gradient descent moves REPPO left and
FERPO right toward $m_*\simeq-0.466$
(Figure~\ref{fig:bimodal_actor_losses}).

\begin{figure}[t]
  \centering
  \setlength{\abovecaptionskip}{2pt}
  \includegraphics[width=0.82\textwidth]{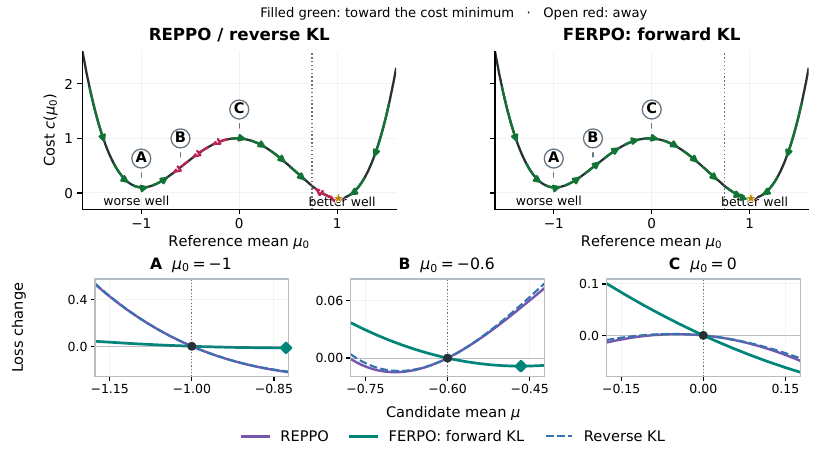}
  \caption{\textbf{Actor directions and local losses.}
  Top: green/red arrows point toward/away from the cost minimum
  (star); lengths are illustrative, not convergence guarantees.
  Bottom: frozen-reference losses at A--C, zeroed at the reference;
  KL losses are scaled by $T$.
  Details: Appendix~\ref{app:bimodal_protocol}.}
  \label{fig:bimodal_actor_losses}
\end{figure}

\textbf{Recovering the direction with SNIS.}
At B, SNIS with $K=1{,}024$ rollout-policy samples recovers the
rightward direction in only $26.4\%$ of batches: the better well
is rarely sampled. Keeping the target fixed and sampling from
$r_\kappa=\mathcal N(\mu,\kappa^2\sigma^2)$ with proposal-corrected
importance weights raises recovery to $98.9\%$ for $\kappa=2$
and $99.9\%$ for $\kappa=3$.
Appendix~\ref{app:bimodal_sampling} gives the estimator and sampling
results (Figure~\ref{fig:bimodal_snis}).

\subsection{Computational cost}
\label{sec:computational_cost}

We time updates on fixed synthetic rollouts using REPPO's published
configuration \citep{voelcker2026relative}, including its 16 KL samples
per state. Uncached FERPO samples $K$ actions and evaluates $Q$ at every
actor epoch; cached FERPO reuses actions, rollout log probabilities,
and $Q$ values after critic training, recomputing weights as the
multipliers change. Cached actor updates are faster than REPPO for
$K=4$--$64$; uncached updates become slower at $K\geq32$
(Figure~\ref{fig:actor_cost_main}; protocol in Appendix~\ref{app:actor_cost}).
At $K=16$, uncached FERPO reduces combined actor-and-critic update time
by 6.9\% relative to REPPO. Caching provides a further 16.4\% reduction,
bringing the total saving to 22.2\%. A drawback is higher GPU memory
usage: on CheetahRun, peak GPU memory usage is 22.4\% higher for
uncached FERPO and 25.6\% higher for cached FERPO than for REPPO,
because PyTorch retains additional memory after updates.

\begin{figure}[t]
  \centering
  \setlength{\abovecaptionskip}{2pt}
  \includegraphics[width=\textwidth]{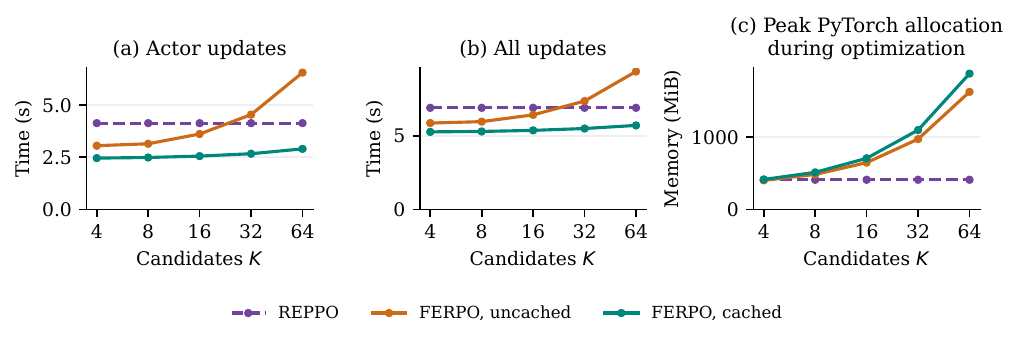}
  \caption{\textbf{Update cost on an A100.}
  Medians and ranges over three repetitions with 131,072 rollout states.
  Actor time includes cache construction; total updates include critic
  training. PyTorch allocation excludes simulation and unused reservations.}
  \label{fig:actor_cost_main}
\end{figure}

\subsection{Ablations}
\label{sec:ablations}
\label{sec:critic_ablations}

\textbf{Number of SNIS samples.}
To test whether a larger sampling budget improves learning, we vary
the number of rollout-policy action samples per state,
$K\in\{2,4,8,16\}$, on G1 flat terrain with adaptive temperature.
Mean return improves as $K$ increases, with smaller gains at larger
sample counts suggesting diminishing returns. At 50.07M environment
steps, increasing $K$ from 8 to 16 raises mean return from
$34.32\pm0.18$ to $34.81\pm0.24$ (mean $\pm$ standard error of the
mean, SEM; Figure~\ref{fig:learning_k}, left), while doubling the
number of candidate actions evaluated.

\textbf{Temperature.}
To assess the contribution of entropy regularization, we compare adaptive
temperature with $\tau=0$ at $K=16$ using the same G1 evaluation budget.
Without entropy regularization, mean return falls to $30.32\pm0.27$
(mean $\pm$ SEM; Figure~\ref{fig:learning_k}, left), 4.49 points below
the adaptive control. FERPO's maximum-entropy formulation therefore
achieves higher mean return than the variant without entropy regularization
on this task (see Appendix~\ref{app:learning_ablation_details} for details).

\begin{samepage}
{\textbf{ESS versus KL control.}
On HopperHop and AcrobotSwingupSparse, both controllers support learning
at suitable settings, with task-dependent best returns. We retain ESS
control because it directly regulates SNIS weight concentration and
shows lower temporal variation at comparable achieved ESS in these
experiments (Appendix~\ref{app:ess_kl_ablation}).}\par
\end{samepage}

\textbf{Cached $Q$ values.}
To test whether caching speeds up training while retaining performance,
we complete critic training before the actor phase, freeze the critic,
and cache candidate actions, $Q$ values, and rollout log probabilities.
Actor updates reuse these quantities while recomputing SNIS weights.
On G1 with $K=8$, cached and uncached updates reach $33.58\pm0.40$ and
$34.03\pm0.21$ at 50.07M steps (mean $\pm$ SEM over five seeds;
Figure~\ref{fig:cached_q_ablation}, middle).
Caching reduces mean time to
complete this budget from 68.4 to 55.7 minutes, an 18.6\% saving.
It therefore trades a modest observed return gap for faster training. Details are provided in Appendix~\ref{app:cached_q_timing}.

\textbf{Cached $V$ extension.}
A learned $Q$ function must approximate both the immediate reward and
future returns, and may therefore fail to preserve the reward's shape
across actions. Learning this shape can be particularly difficult in
sparse reward settings, where few sampled transitions yield nonzero
rewards. To mitigate this source of error, we extend the cached $Q$
variant above by replacing its critic with a soft state-value function.
For each rollout state, we simulate $K$ candidate actions
and, after fitting and freezing $V$, cache $r(s,a)+\gamma V(s')$
{as a one-sample Monte Carlo estimate of $Q_V(s,a)$}
for reuse throughout actor updates. This incorporates the observed
immediate reward directly, at the cost of $K$ additional environment
calls per rollout state.
In massively parallel online RL, these independent one-step simulations
can be batched on the GPU \citep{zakka2025mujocoplayground}, and caching
amortizes their cost across all actor epochs.
With Warp graph execution and batched candidate simulation
during rollout, cached $V$ is only 8.4\% slower per training iteration
than uncached $Q$ on AcrobotSwingupSparse with $K=16$: 8.51 versus
7.85 seconds on an A100, measured after warmup. The $K$ additional transitions nevertheless increase
the total simulation budget by a factor of $K+1=17$.
{Figure~\ref{fig:value_acrobot_ablation} (right) compares returns over
12 seeds. See
Appendix~\ref{app:state_value_ablation} for more details.}

\begin{figure}[t]
  \centering
  \includegraphics[width=\textwidth]{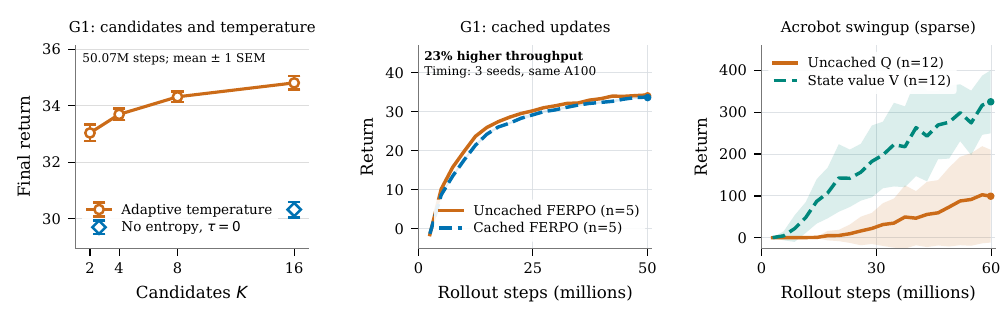}
  \caption{\textbf{Ablations.} Left: G1 candidate count and
  temperature at 50.07M steps (eight seeds for $K=2,16$ and $\tau=0$;
  ten for $K=4,8$). Middle: cached versus uncached G1 updates
  ($K=8$). Right: state-value versus $Q$ critics on AcrobotSwingupSparse
  (sparse rewards, $K=16$).
  Means with one SEM (left, middle) or one SD (right); panels use separate
  controls. The state-value variant uses $17\times$ more simulator transitions
  per rollout step.}
  \label{fig:learning_k}
  \label{fig:cached_q_ablation}
  \label{fig:value_acrobot_ablation}
\end{figure}

\subsection{Learning benchmarks}
\label{sec:learning_benchmarks}

We evaluate whether FERPO's policy updates translate into competitive
final performance across three benchmark suites: 22 DeepMind Control
Suite (DMC) tasks \citep{tassa2018deepmindcontrolsuite} and four G1/T1
locomotion tasks in MuJoCo Playground \citep{zakka2025mujocoplayground},
and eight ManiSkill manipulation tasks \citep{tao2025maniskill3}.
All baseline results come from the experiments accompanying the REPPO
paper \citep{voelcker2026relative}. For each benchmark suite, we train
FERPO with exactly the same actor and critic architectures as REPPO.
Alongside REPPO, we compare with PPO \citep{schulman_proximal_2017}
and PPO (Brax) separately on DMC, PPO on ManiSkill, and FastTD3
\citep{seo2025fasttd3}, a scalable variant of TD3
\citep{pmlr-v80-fujimoto18a}, on DMC.
For each of these additional baseline families, we select one configuration
per suite using performance at 50M steps and keep that configuration
fixed across the suite's tasks.
Appendix~\ref{app:learning_results} provides the selection protocol and
all per-environment learning curves.
{The G1/T1 FERPO results use $K=16$ and normalized ESS
target $0.65$, with eight completed seeds per task.}

Figure~\ref{fig:learning_aggregate} aggregates results separately within
each suite using the interquartile mean (IQM). Every environment receives
equal total weight, divided equally among its available seeds; returns
and success rates retain their original units. We focus the comparison
on the final common evaluation budget: 50M interactions for DMC and
G1/T1, and 49,971,200 interactions for ManiSkill.
Individual IQM uncertainties use pointwise 95\% bootstrap confidence
intervals from 10,000 seed resamples within
environments, matching Figure~\ref{fig:learning_aggregate}.
Below, estimate $\pm$ half the bootstrap interval width gives a symmetric
approximation; all significance statements use the original interval bounds.

\begin{figure}[t]
  \centering
  \includegraphics[width=\textwidth]{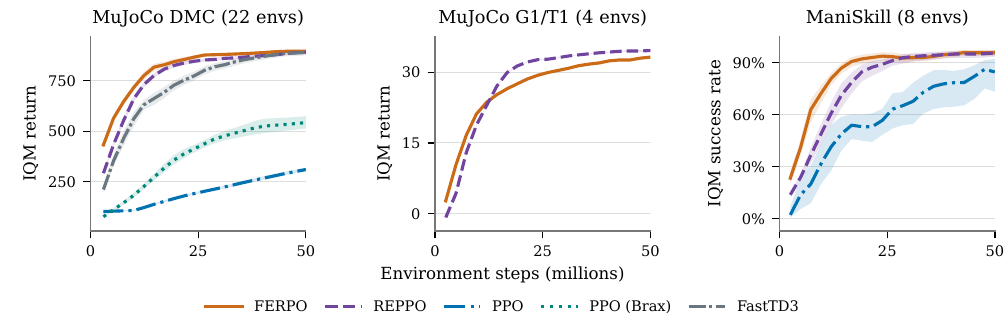}
  \caption{\textbf{Benchmark learning curves.} IQM with equal environment
  weights and equal seed weights within environments. Bands: pointwise
  95\% confidence intervals from 10,000 within-environment bootstrap resamples.
  {G1/T1 FERPO uses $K=16$, ESS target $0.65$, and eight
  seeds per task.}
  Details: Appendix~\ref{app:learning_results}.}
  \label{fig:learning_aggregate}
\end{figure}

\textbf{Final performance.}
On DMC, FERPO reaches an IQM return of $897.7\pm6.7$.
REPPO reaches $891.0\pm6.4$, FastTD3 $890.6\pm12.8$,
PPO (Brax) $542.8\pm30.2$, and PPO $309.8\pm9.4$.
FERPO therefore has the highest point estimate, with a small numerical
lead over REPPO and FastTD3 and a much larger lead over both PPO
families.
{On G1/T1, REPPO reaches $34.63\pm0.17$, compared with
$33.22\pm0.28$ for FERPO. FERPO's endpoint estimate is slightly
higher on T1 flat and similar on T1 rough, while REPPO leads on both
G1 tasks (Figure~\ref{fig:learning_locomotion}).} On ManiSkill, FERPO's success-rate
IQM is $(95.67\pm1.83)\%$; REPPO reaches $(95.52\pm1.85)\%$,
and PPO reaches $(84.61\pm9.49)\%$. The two leading ManiSkill point
estimates differ by only 0.15 percentage points.

Beyond final performance, FERPO reaches high returns on DMC and high
success rates on ManiSkill with fewer environment interactions than
REPPO (Figure~\ref{fig:learning_aggregate}). This sample-efficiency
advantage complements its lower optimization cost: at $K=16$,
combined actor-and-critic updates take 6.9\% less time in the matched
A100 benchmark (Section~\ref{sec:computational_cost}). Together, these
results suggest improved wall-clock efficiency.

\endgroup

\section{Conclusion, limitations, and future work}
\label{sec:conclusion}

FERPO performs on-policy maximum-entropy policy improvement by fitting
an entropy- and KL-regularized target through forward KL. Critic values
weight actions sampled from the rollout policy using self-normalized
importance sampling, avoiding critic action derivatives. An ESS
controller regulates the concentration of these weights.

On DMC and ManiSkill, FERPO reaches high performance with fewer
environment interactions than REPPO and achieves competitive final
results. At $K=16$, combined actor-and-critic update time is 6.9\%
lower than REPPO for uncached FERPO and we also evaluate cached FERPO with a 22.2\% actor update speedup. Ablations on G1 flat terrain highlight
the benefit of entropy regularization and gains from
larger SNIS candidate sets. Limitations include critic-value errors,
finite-sample importance-sampling errors, restricted Gaussian policies,
and higher GPU memory use. {With eight seeds per task,
FERPO remains below REPPO on the G1/T1 suite aggregate at the same
environment-interaction budget. The gap is concentrated on the G1
tasks; T1 endpoints are similar or slightly higher for FERPO under
the recorded evaluation protocols.}

{State-value FERPO combines one-step simulation with a
learned state-value critic to score candidate actions. {Its higher returns
on AcrobotSwingupSparse and the modest iteration-time overhead measured
for its optimized implementation make it a particularly promising
direction for future work.}}
{Two further directions are FERPO in discrete action spaces and combining FERPO with diffusion policies.}
For small discrete action spaces, target normalization and the forward-KL
objective could be evaluated by exact summation over actions and larger ones through sampling.
Diffusion-augmented MDPs (DA-MDPs)
\citep{sanokowski2026diffusionaugmentedmarkovdecisionprocesses}
offer a promising and relatively straightforward route to combining
FERPO with expressive diffusion policies: FERPO's forward-KL updates
could be applied to individual denoising transitions in the augmented MDP.

\section*{Acknowledgements}
We acknowledge the EuroHPC Joint Undertaking for providing access to
the Vega supercomputer at the Institute of Information Science (IZUM),
Slovenia, and the Deucalion supercomputer at the Minho Advanced Computing
Centre (MACC), Portugal. These computing resources supported the
experiments reported in this work.

\clearpage
\bibliographystyle{styles/iclr2027_conference}
\bibliography{bibliography/references}

\begin{thebibliography}{41}
\providecommand{\natexlab}[1]{#1}
\providecommand{\url}[1]{\texttt{#1}}
\expandafter\ifx\csname urlstyle\endcsname\relax
  \providecommand{\doi}[1]{doi: #1}\else
  \providecommand{\doi}{doi: \begingroup \urlstyle{rm}\Url}\fi

\bibitem[Abdolmaleki et~al.(2018)Abdolmaleki, Springenberg, Tassa, Munos,
  Heess, and Riedmiller]{abdolmaleki2018maximum}
Abbas Abdolmaleki, Jost~Tobias Springenberg, Yuval Tassa, Remi Munos, Nicolas
  Heess, and Martin Riedmiller.
\newblock Maximum a posteriori policy optimisation.
\newblock In \emph{International Conference on Learning Representations}, 2018.
\newblock URL \url{https://arxiv.org/abs/1806.06920}.

\bibitem[Blessing et~al.(2025)Blessing, Berner, Richter, Domingo~i Enrich, Du,
  Vahdat, and Neumann]{NEURIPS2025_f1ccd13e}
Denis Blessing, Julius Berner, Lorenz Richter, Carles Domingo~i Enrich, Yuanqi
  Du, Arash Vahdat, and Gerhard Neumann.
\newblock Trust region constrained measure transport in path space for
  stochastic optimal control and inference.
\newblock In D.~Belgrave, C.~Zhang, H.~Lin, R.~Pascanu, P.~Koniusz,
  M.~Ghassemi, and N.~Chen (eds.), \emph{Advances in Neural Information
  Processing Systems}, volume 38, Main Conference, pp.\  165462--165510. Curran
  Associates, Inc., 2025.
\newblock \doi{10.52202/085713-5516}.
\newblock URL
  \url{https://proceedings.neurips.cc/paper_files/paper/2025/file/f1ccd13e51f610ff4a7a08db9a8b458b-Paper-Conference.pdf}.

\bibitem[Cayci et~al.(2024)Cayci, He, and Srikant]{cayci2024convergence}
Semih Cayci, Niao He, and R.~Srikant.
\newblock Convergence of entropy-regularized natural policy gradient with
  linear function approximation.
\newblock \emph{SIAM Journal on Optimization}, 34\penalty0 (3):\penalty0
  2729--2755, 2024.
\newblock \doi{10.1137/22M1540156}.
\newblock URL \url{https://doi.org/10.1137/22M1540156}.

\bibitem[Devroye et~al.(2023)Devroye, Mehrabian, and
  Reddad]{devroye2023gaussiantv}
Luc Devroye, Abbas Mehrabian, and Tommy Reddad.
\newblock The total variation distance between high-dimensional {Gaussians}
  with the same mean, 2023.
\newblock URL \url{https://arxiv.org/abs/1810.08693v7}.
\newblock Version 7.

\bibitem[D'Oro \& Ja\'{s}kowski(2020)D'Oro and
  Ja\'{s}kowski]{NEURIPS2020_03255088}
Pierluca D'Oro and Wojciech Ja\'{s}kowski.
\newblock How to learn a useful critic? model-based action-gradient-estimator
  policy optimization.
\newblock In H.~Larochelle, M.~Ranzato, R.~Hadsell, M.F. Balcan, and H.~Lin
  (eds.), \emph{Advances in Neural Information Processing Systems}, volume~33,
  pp.\  313--324. Curran Associates, Inc., 2020.
\newblock URL
  \url{https://proceedings.neurips.cc/paper_files/paper/2020/file/03255088ed63354a54e0e5ed957e9008-Paper.pdf}.

\bibitem[Drolet et~al.(2026)Drolet, Al-Hafez, Bhatt, Peters, and
  Arenz]{drolet2026daps}
Michael Drolet, Firas Al-Hafez, Aditya Bhatt, Jan Peters, and Oleg Arenz.
\newblock Discrete variational autoencoding via policy search.
\newblock In \emph{International Conference on Learning Representations}, 2026.
\newblock URL \url{https://openreview.net/forum?id=wJhhCmbFzY}.

\bibitem[Farebrother et~al.(2024)Farebrother, Orbay, Vuong, Ali~Ta{\"i}ga,
  Chebotar, Xiao, Irpan, Levine, Castro, Faust, Kumar, and
  Agarwal]{farebrother2024stop}
Jesse Farebrother, Jordi Orbay, Quan Vuong, Adrien Ali~Ta{\"i}ga, Yevgen
  Chebotar, Ted Xiao, Alex Irpan, Sergey Levine, Pablo~Samuel Castro,
  Aleksandra Faust, Aviral Kumar, and Rishabh Agarwal.
\newblock Stop regressing: Training value functions via classification for
  scalable deep {RL}.
\newblock In \emph{Proceedings of the 41st International Conference on Machine
  Learning}, volume 235 of \emph{Proceedings of Machine Learning Research},
  pp.\  13049--13071. PMLR, 2024.
\newblock URL \url{https://proceedings.mlr.press/v235/farebrother24a.html}.

\bibitem[Fujimoto et~al.(2018)Fujimoto, van Hoof, and
  Meger]{pmlr-v80-fujimoto18a}
Scott Fujimoto, Herke van Hoof, and David Meger.
\newblock Addressing function approximation error in actor-critic methods.
\newblock In Jennifer Dy and Andreas Krause (eds.), \emph{Proceedings of the
  35th International Conference on Machine Learning}, volume~80 of
  \emph{Proceedings of Machine Learning Research}, pp.\  1587--1596. PMLR,
  10--15 Jul 2018.
\newblock URL \url{https://proceedings.mlr.press/v80/fujimoto18a.html}.

\bibitem[Fujimoto et~al.(2025)Fujimoto, D'Oro, Zhang, Tian, and
  Rabbat]{fujimoto2025towards}
Scott Fujimoto, Pierluca D'Oro, Amy Zhang, Yuandong Tian, and Michael Rabbat.
\newblock Towards general-purpose model-free reinforcement learning.
\newblock In Y.~Yue, A.~Garg, N.~Peng, F.~Sha, and R.~Yu (eds.),
  \emph{International Conference on Learning Representations}, volume 2025,
  pp.\  83509--83552, 2025.
\newblock URL
  \url{https://proceedings.iclr.cc/paper_files/paper/2025/file/cf6501108fced72ee5c47e2151c4e153-Paper-Conference.pdf}.

\bibitem[Gurumurthy et~al.(2023)Gurumurthy, Manchester, and
  Kolter]{pmlr-v211-gurumurthy23a}
Swaminathan Gurumurthy, Zachary Manchester, and J~Zico Kolter.
\newblock Practical critic gradient based actor critic for on-policy
  reinforcement learning.
\newblock In Nikolai Matni, Manfred Morari, and George~J. Pappas (eds.),
  \emph{Proceedings of The 5th Annual Learning for Dynamics and Control
  Conference}, volume 211 of \emph{Proceedings of Machine Learning Research},
  pp.\  625--638. PMLR, 15--16 Jun 2023.
\newblock URL \url{https://proceedings.mlr.press/v211/gurumurthy23a.html}.

\bibitem[Ha et~al.(2025)Ha, Lee, van~de Panne, Xie, Yu, and
  Khadiv]{ha2025learning}
Sehoon Ha, Joonho Lee, Michiel van~de Panne, Zhaoming Xie, Wenhao Yu, and Majid
  Khadiv.
\newblock Learning-based legged locomotion: State of the art and future
  perspectives.
\newblock \emph{The International Journal of Robotics Research}, 44\penalty0
  (8):\penalty0 1396--1427, 2025.

\bibitem[Haarnoja et~al.(2017)Haarnoja, Tang, Abbeel, and
  Levine]{haarnoja2017reinforcement}
Tuomas Haarnoja, Haoran Tang, Pieter Abbeel, and Sergey Levine.
\newblock Reinforcement learning with deep energy-based policies.
\newblock In \emph{Proceedings of the 34th International Conference on Machine
  Learning}, volume~70 of \emph{Proceedings of Machine Learning Research}, pp.\
   1352--1361. PMLR, 2017.
\newblock URL \url{https://proceedings.mlr.press/v70/haarnoja17a.html}.

\bibitem[Haarnoja et~al.(2018{\natexlab{a}})Haarnoja, Zhou, Abbeel, and
  Levine]{haarnoja2018soft}
Tuomas Haarnoja, Aurick Zhou, Pieter Abbeel, and Sergey Levine.
\newblock Soft actor-critic: Off-policy maximum entropy deep reinforcement
  learning with a stochastic actor.
\newblock In \emph{Proceedings of the 35th International Conference on Machine
  Learning}, volume~80 of \emph{Proceedings of Machine Learning Research}, pp.\
   1861--1870. PMLR, 2018{\natexlab{a}}.
\newblock URL \url{https://proceedings.mlr.press/v80/haarnoja18b.html}.

\bibitem[Haarnoja et~al.(2018{\natexlab{b}})Haarnoja, Zhou, Hartikainen,
  Tucker, Ha, Tan, Kumar, Zhu, Gupta, Abbeel, and
  Levine]{haarnoja2018applications}
Tuomas Haarnoja, Aurick Zhou, Kristian Hartikainen, George Tucker, Sehoon Ha,
  Jie Tan, Vikash Kumar, Henry Zhu, Abhishek Gupta, Pieter Abbeel, and Sergey
  Levine.
\newblock Soft actor-critic algorithms and applications.
\newblock \emph{arXiv preprint arXiv:1812.05905}, 2018{\natexlab{b}}.
\newblock URL \url{https://arxiv.org/abs/1812.05905}.

\bibitem[Kaufmann et~al.(2023)Kaufmann, Bauersfeld, Loquercio, M{\"u}ller,
  Koltun, and Scaramuzza]{Kaufmann_2023}
Elia Kaufmann, Leonard Bauersfeld, Antonio Loquercio, Matthias M{\"u}ller,
  Vladlen Koltun, and Davide Scaramuzza.
\newblock Champion-level drone racing using deep reinforcement learning.
\newblock \emph{Nature}, 620\penalty0 (7976):\penalty0 982--987, August 2023.
\newblock ISSN 1476-4687.
\newblock \doi{10.1038/s41586-023-06419-4}.
\newblock URL \url{https://doi.org/10.1038/s41586-023-06419-4}.

\bibitem[Khan \& Rue(2023)Khan and Rue]{khan2023bayesian}
Mohammad~Emtiyaz Khan and H{\aa}vard Rue.
\newblock The {Bayesian} learning rule.
\newblock \emph{Journal of Machine Learning Research}, 24\penalty0
  (281):\penalty0 1--46, 2023.
\newblock URL \url{https://jmlr.org/papers/v24/22-0291.html}.

\bibitem[Metelli et~al.(2018)Metelli, Papini, Faccio, and
  Restelli]{metelli2018policy}
Alberto~Maria Metelli, Matteo Papini, Francesco Faccio, and Marcello Restelli.
\newblock Policy optimization via importance sampling.
\newblock In \emph{Advances in Neural Information Processing Systems},
  volume~31, 2018.
\newblock URL
  \url{https://papers.nips.cc/paper_files/paper/2018/hash/6aed000af86a084f9cb0264161e29dd3-Abstract.html}.

\bibitem[Minka(2005)]{minka2005divergence}
Tom Minka.
\newblock Divergence measures and message passing.
\newblock Technical Report MSR-TR-2005-173, Microsoft Research, 2005.
\newblock URL
  \url{https://www.microsoft.com/en-us/research/publication/divergence-measures-and-message-passing/}.

\bibitem[Nair et~al.(2020)Nair, Gupta, Dalal, and Levine]{nair2020awac}
Ashvin Nair, Abhishek Gupta, Murtaza Dalal, and Sergey Levine.
\newblock {AWAC}: Accelerating online reinforcement learning with offline
  datasets.
\newblock \emph{arXiv preprint arXiv:2006.09359}, 2020.
\newblock URL \url{https://arxiv.org/abs/2006.09359}.

\bibitem[Owen(2013)]{owen2013monte}
Art~B. Owen.
\newblock \emph{Monte Carlo Theory, Methods and Examples}.
\newblock Online book, 2013.
\newblock URL \url{https://artowen.su.domains/mc/}.

\bibitem[Peng et~al.(2019)Peng, Kumar, Zhang, and Levine]{peng2019advantage}
Xue~Bin Peng, Aviral Kumar, Grace Zhang, and Sergey Levine.
\newblock Advantage-weighted regression: Simple and scalable off-policy
  reinforcement learning.
\newblock \emph{arXiv preprint arXiv:1910.00177}, 2019.
\newblock URL \url{https://arxiv.org/abs/1910.00177}.

\bibitem[Peters et~al.(2010)Peters, M{\"u}lling, and Altun]{peters2010relative}
Jan Peters, Katharina M{\"u}lling, and Yasemin Altun.
\newblock Relative entropy policy search.
\newblock In \emph{Proceedings of the AAAI Conference on Artificial
  Intelligence}, volume~24, pp.\  1607--1612, 2010.
\newblock \doi{10.1609/aaai.v24i1.7727}.
\newblock URL \url{https://ojs.aaai.org/index.php/AAAI/article/view/7727}.

\bibitem[Rudin et~al.(2022)Rudin, Hoeller, Reist, and
  Hutter]{pmlr-v164-rudin22a}
Nikita Rudin, David Hoeller, Philipp Reist, and Marco Hutter.
\newblock Learning to walk in minutes using massively parallel deep
  reinforcement learning.
\newblock In Aleksandra Faust, David Hsu, and Gerhard Neumann (eds.),
  \emph{Proceedings of the 5th Conference on Robot Learning}, volume 164 of
  \emph{Proceedings of Machine Learning Research}, pp.\  91--100. PMLR, 08--11
  Nov 2022.
\newblock URL \url{https://proceedings.mlr.press/v164/rudin22a.html}.

\bibitem[Sanokowski \& Patil(2026)Sanokowski and
  Patil]{sanokowski2026diffusionaugmentedmarkovdecisionprocesses}
Sebastian Sanokowski and Kaustubh Patil.
\newblock Diffusion-augmented {Markov} decision processes for maximum entropy
  reinforcement learning, 2026.
\newblock URL \url{https://arxiv.org/abs/2512.02019}.

\bibitem[Sarafian et~al.(2021)Sarafian, Keynan, and
  Kraus]{pmlr-v139-sarafian21a}
Elad Sarafian, Shai Keynan, and Sarit Kraus.
\newblock Recomposing the reinforcement learning building blocks with
  hypernetworks.
\newblock In Marina Meila and Tong Zhang (eds.), \emph{Proceedings of the 38th
  International Conference on Machine Learning}, volume 139 of
  \emph{Proceedings of Machine Learning Research}, pp.\  9301--9312. PMLR,
  18--24 Jul 2021.
\newblock URL \url{https://proceedings.mlr.press/v139/sarafian21a.html}.

\bibitem[Schulman et~al.(2015)Schulman, Levine, Abbeel, Jordan, and
  Moritz]{schulman2015trust}
John Schulman, Sergey Levine, Pieter Abbeel, Michael Jordan, and Philipp
  Moritz.
\newblock Trust region policy optimization.
\newblock In \emph{Proceedings of the 32nd International Conference on Machine
  Learning}, volume~37 of \emph{Proceedings of Machine Learning Research}, pp.\
   1889--1897. PMLR, 2015.
\newblock URL \url{https://proceedings.mlr.press/v37/schulman15.html}.

\bibitem[Schulman et~al.(2017)Schulman, Wolski, Dhariwal, Radford, and
  Klimov]{schulman_proximal_2017}
John Schulman, Filip Wolski, Prafulla Dhariwal, Alec Radford, and Oleg Klimov.
\newblock Proximal policy optimization algorithms.
\newblock \emph{arXiv preprint arXiv:1707.06347}, 2017.
\newblock URL \url{https://arxiv.org/abs/1707.06347}.

\bibitem[Schwarzer et~al.(2021)Schwarzer, Anand, Goel, Hjelm, Courville, and
  Bachman]{schwarzer2021dataefficient}
Max Schwarzer, Ankesh Anand, Rishab Goel, R.~Devon Hjelm, Aaron Courville, and
  Philip Bachman.
\newblock Data-efficient reinforcement learning with self-predictive
  representations.
\newblock In \emph{International Conference on Learning Representations}, 2021.
\newblock URL \url{https://openreview.net/forum?id=uCQfPZwRaUu}.

\bibitem[Seo et~al.(2025)Seo, Sferrazza, Geng, Nauman, Yin, and
  Abbeel]{seo2025fasttd3}
Younggyo Seo, Carmelo Sferrazza, Haoran Geng, Michal Nauman, Zhao-Heng Yin, and
  Pieter Abbeel.
\newblock {FastTD3}: Simple, fast, and capable reinforcement learning for
  humanoid control.
\newblock \emph{arXiv preprint arXiv:2505.22642}, 2025.
\newblock URL \url{https://arxiv.org/abs/2505.22642}.

\bibitem[Silver et~al.(2017)Silver, Schrittwieser, Simonyan, Antonoglou, Huang,
  Guez, Hubert, Baker, Lai, Bolton, Chen, Lillicrap, Hui, Sifre, van~den
  Driessche, Graepel, and Hassabis]{Silver_2017}
David Silver, Julian Schrittwieser, Karen Simonyan, Ioannis Antonoglou, Aja
  Huang, Arthur Guez, Thomas Hubert, Lucas Baker, Matthew Lai, Adrian Bolton,
  Yutian Chen, Timothy Lillicrap, Fan Hui, Laurent Sifre, George van~den
  Driessche, Thore Graepel, and Demis Hassabis.
\newblock Mastering the game of {Go} without human knowledge.
\newblock \emph{Nature}, 550\penalty0 (7676):\penalty0 354--359, October 2017.
\newblock ISSN 1476-4687.
\newblock \doi{10.1038/nature24270}.
\newblock URL \url{https://doi.org/10.1038/nature24270}.

\bibitem[Sutton et~al.(1999)Sutton, McAllester, Singh, and
  Mansour]{sutton1999policy}
Richard~S. Sutton, David~A. McAllester, Satinder~P. Singh, and Yishay Mansour.
\newblock Policy gradient methods for reinforcement learning with function
  approximation.
\newblock In \emph{Advances in Neural Information Processing Systems},
  volume~12, 1999.
\newblock URL
  \url{https://proceedings.neurips.cc/paper/1999/hash/464d828b85b0bed98e80ade0a5c43b0f-Abstract.html}.

\bibitem[Tao et~al.(2025)Tao, Xiang, Shukla, Qin, Hinrichsen, Yuan, Bao, Lin,
  Liu, Chan, Gao, Li, Mu, Xiao, Gurha, Rajesh, Choi, Chen, Huang, Calandra,
  Chen, Luo, and Su]{tao2025maniskill3}
Stone Tao, Fanbo Xiang, Arth Shukla, Yuzhe Qin, Xander Hinrichsen, Xiaodi Yuan,
  Chen Bao, Xinsong Lin, Yulin Liu, Tse-Kai Chan, Yuan Gao, Xuanlin Li,
  Tongzhou Mu, Nan Xiao, Arnav Gurha, Viswesh~Nagaswamy Rajesh, Yong~Woo Choi,
  Yen-Ru Chen, Zhiao Huang, Roberto Calandra, Rui Chen, Shan Luo, and Hao Su.
\newblock Demonstrating {GPU} parallelized robot simulation and rendering for
  generalizable embodied {AI} with {ManiSkill3}.
\newblock In \emph{Proceedings of Robotics: Science and Systems}, Los Angeles,
  CA, USA, June 2025.
\newblock \doi{10.15607/RSS.2025.XXI.021}.
\newblock URL \url{https://www.roboticsproceedings.org/rss21/p021.html}.

\bibitem[Tassa et~al.(2018)Tassa, Doron, Muldal, Erez, Li, de~Las~Casas,
  Budden, Abdolmaleki, Merel, Lefrancq, Lillicrap, and
  Riedmiller]{tassa2018deepmindcontrolsuite}
Yuval Tassa, Yotam Doron, Alistair Muldal, Tom Erez, Yazhe Li, Diego
  de~Las~Casas, David Budden, Abbas Abdolmaleki, Josh Merel, Andrew Lefrancq,
  Timothy Lillicrap, and Martin Riedmiller.
\newblock {DeepMind Control Suite}, 2018.
\newblock URL \url{https://arxiv.org/abs/1801.00690}.

\bibitem[van Erven \& Harremo{\"e}s(2014)van Erven and
  Harremo{\"e}s]{vanerven2014renyi}
Tim van Erven and Peter Harremo{\"e}s.
\newblock {R\'enyi} divergence and {Kullback--Leibler} divergence.
\newblock \emph{IEEE Transactions on Information Theory}, 60\penalty0
  (7):\penalty0 3797--3820, 2014.
\newblock \doi{10.1109/TIT.2014.2320500}.
\newblock URL \url{https://arxiv.org/abs/1206.2459}.

\bibitem[Voelcker et~al.(2024)Voelcker, Kastner, Gilitschenski, and
  Farahmand]{voelcker2024when}
Claas~A. Voelcker, Tyler Kastner, Igor Gilitschenski, and Amir-massoud
  Farahmand.
\newblock When does self-prediction help? understanding auxiliary tasks in
  reinforcement learning.
\newblock \emph{Reinforcement Learning Journal}, 4:\penalty0 1567--1597, 2024.
\newblock URL \url{https://rlj.cs.umass.edu/2024/papers/Paper197.html}.

\bibitem[Voelcker et~al.(2026)Voelcker, Brunnbauer, Hussing, Nauman, Abbeel,
  Eaton, Grosu, Farahmand, and Gilitschenski]{voelcker2026relative}
Claas~A. Voelcker, Axel Brunnbauer, Marcel Hussing, Michal Nauman, Pieter
  Abbeel, Eric Eaton, Radu Grosu, Amir-massoud Farahmand, and Igor
  Gilitschenski.
\newblock Relative entropy pathwise policy optimization.
\newblock In \emph{The Fourteenth International Conference on Learning
  Representations}, 2026.
\newblock URL
  \url{https://proceedings.iclr.cc/paper_files/paper/2026/hash/3c2fd72a28fb98facf98546727320249-Abstract-Conference.html}.

\bibitem[von Klitzing et~al.(2026)von Klitzing, Blessing, Schopmans,
  Friederich, and Neumann]{ICLR2026_8f38c9d6}
Christopher von Klitzing, Denis Blessing, Henrik Schopmans, Pascal Friederich,
  and Gerhard Neumann.
\newblock Learning {Boltzmann} generators via constrained mass transport.
\newblock In C.~Vondrick, B.~Hariharan, C.~Raffel, L.~Pinto, D.~Yang, and
  A.~Faust (eds.), \emph{International Conference on Learning Representations},
  volume 2026, pp.\  88771--88810, 2026.
\newblock URL
  \url{https://proceedings.iclr.cc/paper_files/paper/2026/file/8f38c9d65d89ce60a630b2619b383e55-Paper-Conference.pdf}.

\bibitem[Watson \& Peters(2023)Watson and Peters]{watson2023inferring}
Joe Watson and Jan Peters.
\newblock Inferring smooth control: {Monte Carlo} posterior policy iteration
  with {Gaussian} processes.
\newblock In \emph{Proceedings of the 6th Conference on Robot Learning}, volume
  205 of \emph{Proceedings of Machine Learning Research}, pp.\  67--79. PMLR,
  2023.
\newblock URL \url{https://proceedings.mlr.press/v205/watson23a.html}.

\bibitem[Zakka et~al.(2025)Zakka, Tabanpour, Liao, Haiderbhai, Holt, Luo,
  Allshire, Frey, Sreenath, Kahrs, Sferrazza, Tassa, and
  Abbeel]{zakka2025mujocoplayground}
Kevin Zakka, Baruch Tabanpour, Qiayuan Liao, Mustafa Haiderbhai, Samuel Holt,
  Jing~Yuan Luo, Arthur Allshire, Erik Frey, Koushil Sreenath, Lueder~A. Kahrs,
  Carmelo Sferrazza, Yuval Tassa, and Pieter Abbeel.
\newblock {MuJoCo} playground, 2025.
\newblock URL \url{https://arxiv.org/abs/2502.08844}.

\bibitem[Zhang et~al.(2023)Zhang, Pan, Li, Liu, Chen, Liu, and
  Wang]{zhang2023gaussiankl}
Yufeng Zhang, Jialu Pan, Kenli Li, Wanwei Liu, Zhenbang Chen, Xinwang Liu, and
  Ji~Wang.
\newblock On the properties of {Kullback--Leibler} divergence between
  multivariate {Gaussian} distributions.
\newblock In A.~Oh, T.~Naumann, A.~Globerson, K.~Saenko, M.~Hardt, and
  S.~Levine (eds.), \emph{Advances in Neural Information Processing Systems},
  volume~36, pp.\  58152--58165. Curran Associates, Inc., 2023.
\newblock \doi{10.52202/075280-2535}.

\bibitem[Ziebart(2010)]{ziebart2010modeling}
Brian~D. Ziebart.
\newblock \emph{Modeling Purposeful Adaptive Behavior with the Principle of
  Maximum Causal Entropy}.
\newblock PhD thesis, Carnegie Mellon University, December 2010.
\newblock URL
  \url{https://www.cs.cmu.edu/~bziebart/publications/thesis-bziebart.pdf}.

\end{thebibliography}

\clearpage
\appendix
\addtocontents{toc}{\protect\setcounter{tocdepth}{3}}
\begingroup
\renewcommand{\contentsname}{Appendix contents}
\hypersetup{hidelinks,linktoc=all}
\pdfbookmark[1]{Appendix contents}{appendix.contents}
\tableofcontents
\endgroup
\clearpage
\section{Extended paper sections}
\label{app:extended_paper_sections}
\subsection{Motivating experiment details}
\label{app:bimodal_protocol}
\label{app:bimodal_sampling}

We expand on the bimodal experiment in Section~\ref{sec:motivating_experiment}.
The cost is $c(a)=(a^2-1)^2-0.1a$, and the critic is exact,
$Q=-c$. We use Gaussian policies with fixed standard deviation
$\sigma=0.4$, temperature $\tau=0.1$, and FERPO multiplier
$\lambda=0.05$. The reference means are $\mu_0=-1,-0.6,0$ (A--C).
Each loss curve keeps the reference policy and its target fixed
while varying the candidate mean $\nu$.

In Figure~\ref{fig:bimodal_actor_losses}, each direction uses a newly
constructed target, frozen for differentiation. Filled green and open
red arrows indicate geometric direction toward and away from the global
cost minimum, not immediate cost decrease or eventual convergence.
Dotted lines mark the best fixed-width Gaussian mean; black dots mark
the references and diamonds visible FERPO loss minima. All candidates
in the local loss windows satisfy REPPO's KL bound $0.1$.

\textbf{Exact actor losses.}
We compute REPPO's expected cost analytically and FERPO's target
mean by numerical quadrature. REPPO uses its cost-and-entropy loss
while $k(\nu)=\KL(\pi_{\mu_0}\|\pi_\nu)<0.1$, and the loss
$0.1k(\nu)$ otherwise. Figure~\ref{fig:bimodal_full_losses} shows
this full loss at B, together with forward and reverse KL to the
same FERPO target. We scale both KL losses by $T=\tau+\lambda=0.15$
to express them in cost units. At B, the target mean is
$m_*\simeq-0.466$, so forward fitting moves right, whereas REPPO
and reverse KL initially move left.

\textbf{Estimating the target from samples.}
At B, sampling from the rollout policy recovers FERPO's rightward
direction in only $26.4\%$ of batches with $K=1{,}024$.
To study sampling coverage, we draw actions from
$r_\kappa=\mathcal N(\mu_0,\kappa^2\sigma^2)$ with
$\kappa\in\{1,2,3\}$, keeping the rollout policy and target fixed.
The proposal-corrected weights and estimated target mean are
\begin{equation}
 u_i=\frac{e^{-c(a_i)/T}\pi_{\mu_0}(a_i)^{\lambda/T}}{r_\kappa(a_i)},
 \qquad w_i=\frac{u_i}{\sum_j u_j},
 \qquad \widehat m=\sum_iw_i a_i.
 \label{eq:bimodal_proposal_weights}
\end{equation}
We use 5,000 batches for each $K\in\{64,1{,}024\}$, with paired
standard-normal draws across proposal widths. An update points right
when $\widehat m>\mu_0$. Table~\ref{tab:bimodal_snis} reports the
resulting direction frequencies and target-mean estimates.
For each batch, we use the SNIS weights to estimate FERPO's fitting
loss as a function of the candidate policy mean.
Figure~\ref{fig:bimodal_snis} shows the median and pointwise
5th--95th percentiles of these estimated loss curves across 5,000
batches, together with the exact population loss.
Wider proposals sample the better right well more often and recover
the rightward direction in $98.9\%$ and $99.9\%$ of batches at
$K=1{,}024$.

\begin{figure}[htbp]
 \centering
 \includegraphics[width=\textwidth]{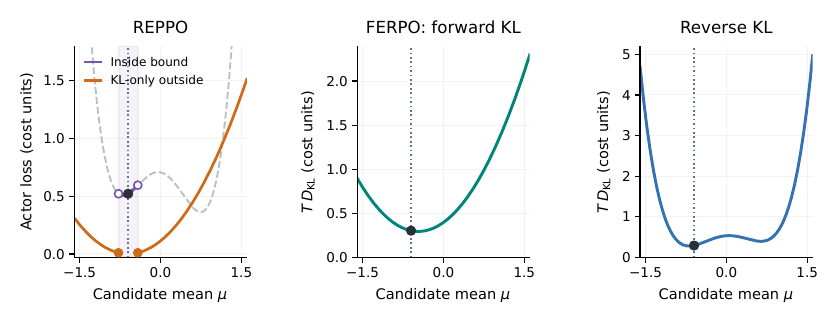}
 \caption{\textbf{Frozen-reference losses at B.} Left: REPPO's
 active expected-cost/entropy branch (purple) and KL-only branch
 (orange); the pale dashed curve extends the inactive cost branch.
 Open and filled endpoints show the loss switch. Middle: forward KL
 from the fixed FERPO target to the candidate Gaussian. Right: reverse
 KL to that same target. Both KL curves are scaled by $T=0.15$;
 this positive scaling preserves directions. Vertical dotted lines
 and black dots mark the reference $\mu_0=-0.6$.}
 \label{fig:bimodal_full_losses}
\end{figure}

\begin{figure}[htbp]
  \centering
  \includegraphics[width=\textwidth]{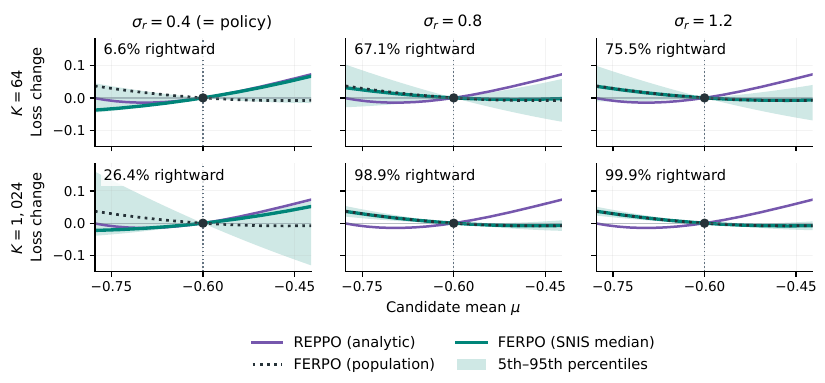}
  \caption{\textbf{Wider proposals recover the population fitting loss.}
  Losses at B with the same reference policy and target throughout. Teal: SNIS
  median and pointwise 5th--95th percentiles across 5,000 batches;
  dotted black: population FERPO; purple: analytic REPPO.
  Percentages give the frequency of rightward SNIS directions.
  Shading shows batch variation, not confidence intervals for the mean.
  Losses are zeroed at the reference, and forward losses are scaled by $T$.}
  \label{fig:bimodal_snis}
\end{figure}

\begin{table}[htbp]
 \centering
 \small
 \begin{tabular}{rrrcr}
\hline
Proposal $\sigma_r$ & $K$ & Median $\widehat m$ & 5th--95th percentiles & Rightward \\
\hline
0.4 & 64 & $-0.916$ & $[-0.963, -0.416]$ & 6.6\% \\
0.8 & 64 & $-0.498$ & $[-0.863, -0.066]$ & 67.1\% \\
1.2 & 64 & $-0.472$ & $[-0.753, -0.091]$ & 75.5\% \\
0.4 & 1,024 & $-0.824$ & $[-0.923, 0.275]$ & 26.4\% \\
0.8 & 1,024 & $-0.468$ & $[-0.565, -0.369]$ & 98.9\% \\
1.2 & 1,024 & $-0.466$ & $[-0.546, -0.381]$ & 99.9\% \\
\hline
\end{tabular}

 \caption{\textbf{SNIS target-mean estimates at B.} The population
 target mean is $-0.465943\ldots$ and rightward movement means
 $\widehat m>-0.6$. Intervals are the empirical 5th--95th percentiles
 across 5,000 batches, not confidence intervals. Policy standard
 deviation remains $0.4$ in every row.}
 \label{tab:bimodal_snis}
\end{table}

\subsection{Computation and memory on the A100}
\label{app:actor_cost}

\subsubsection{Update-cost protocol}
\label{app:actor_cost_protocol}

The update-cost measurements use one NVIDIA A100-SXM4-40GB with AMD
EPYC 7H12 host CPUs, PyTorch 2.9.1+cu128, FP32 parameters, matmul
precision \texttt{medium}, and one PyTorch CPU thread, without AMP
or compilation. We use the official REPPO PyTorch implementation
at commit \texttt{05a5776} and the published large-data MuJoCo
Playground configuration described in Section~\ref{sec:computational_cost}.\footnote{\href{https://github.com/cvoelcker/reppo/tree/05a5776c8efd2f56024d038c982b7d4c500f8221}{Pinned REPPO source};
the \href{https://github.com/cvoelcker/reppo/blob/05a5776c8efd2f56024d038c982b7d4c500f8221/src/torchrl/reppo.py\#L299-L311}{PyTorch}
and \href{https://github.com/cvoelcker/reppo/blob/05a5776c8efd2f56024d038c982b7d4c500f8221/src/jaxrl/reppo.py\#L505-L524}{JAX}
implementations both use 16 samples for the KL estimate.}
Each rollout has 131,072 states, with eight epochs of 64 minibatches
of 2048. The shared baseline has 17 observation and six action
dimensions, a three-layer width-512 actor, and two-layer width-512
critic encoder, Q head, and auxiliary predictor.
FERPO evaluates all $2{,}048K$ candidate state-action pairs in each
minibatch together.

FERPO completes critic training before its actor phase; REPPO
retains its interleaved update schedule. Three warmup minibatches
precede each trial. We use seeds 1--3
and randomize method order within each repetition. CUDA events
measure actor time; synchronized wall time measures the complete
update. Peak allocated PyTorch memory covers the complete update,
including rollout tensors and optimizer state, but excludes
simulator memory and allocator reservation.

\subsubsection{Memory cost}
\label{app:actor_memory}

REPPO backpropagates through Q for one action per state; its 16 KL
samples require policy evaluations only. FERPO instead evaluates
many actions together through Q. Even without retaining a backward
graph, these larger batches require larger temporary tensors.
FERPO's peak allocated memory therefore increases with $K$
(Figure~\ref{fig:actor_cost_main}).

Cached FERPO additionally stores $K$ actions, reference log
probabilities, and Q values for each of $S$ rollout states. With
action dimension $d$ and FP32 storage, this requires
\begin{equation}
  M_{\mathrm{cache}}=4SK(d+2)\ \text{bytes}.
  \label{eq:actor_cache_memory}
\end{equation}
For the measured rollout size at $K=16$, the cache occupies
64 MiB for $d=6$ and 528 MiB for $d=64$.

\subsubsection{Memory across rollout and update phases}
\label{app:actor_memory_cycles}

We separately profile CheetahRun with the same network and rollout
configuration at $K=16$. The matched comparison uses seeds 2 and 3
on one A100 node, with a fresh process per method and seed, two warmup
iterations, and three measured iterations. The simulator and a
32-environment evaluation instance remain resident; initialization and
evaluation are excluded from the measurement windows. NVML samples
total process memory every 5 ms, including runtime and allocator
reservations; these sampled peaks are lower bounds. PyTorch records
allocated and reserved peaks separately for each phase.
We exclude seed 1 from this comparison because uncached FERPO ran
on a second A100 with a different driver.

\begin{table}[htbp]
  \centering
  \small
  \setlength{\tabcolsep}{5pt}
  \begin{tabular}{lrrrrr}
    \hline
    & \multicolumn{2}{c}{Process peak}
    & \multicolumn{2}{c}{PyTorch allocated peak} & PyTorch reserved \\
    Method & Rollout & Update & Rollout & Update & Rollout \\
    \hline
    REPPO & 1,438 & 1,408 & 601 & 422 & 614 \\
    FERPO & 1,760 & 1,728 & 600 & 658 & 936 \\
    Cached FERPO & 1,806 & 1,774 & 600 & 723 & 982 \\
    \hline
  \end{tabular}
  \caption{\textbf{Memory across rollout and update phases (MiB).}
  Peaks are identical across the two matched seeds. Reserved memory
  includes active tensors.}
  \label{tab:actor_memory_cycles}
\end{table}

All three methods reach their process peak during rollouts
(Table~\ref{tab:actor_memory_cycles}). Active PyTorch rollout allocations
are nearly equal, while larger pools remain reserved after FERPO's
updates. These process footprints therefore reflect allocator retention
as well as live tensors, rather than minimum GPU capacity requirements.

\clearpage
\subsection{Learning ablations and benchmarks}
\label{app:learning_results}

\subsubsection{Ablation details}
\label{app:learning_ablation_details}

The candidate-count ablation uses per-seed G1JoystickFlatTerrain
evaluation logs for $K=2,4,8,16$. All four settings use actor/critic and temperature
learning rates of $0.0003$, entropy target factor $-0.5$, and normalized
ESS target $0.85$. The replacements correct the historical $K=2$
entropy target of $-1.0$ and $K=16$ temperature learning rate of $0.003$;
those superseded observations are excluded from the plots.
We report arithmetic means with one-SEM error bars and bands.
The temperature comparison reuses the corrected $K=16$ control.
These ablation cohorts remain separate from the G1/T1 benchmark
cohort below, which uses normalized ESS target $0.65$.

The candidate-count plot in Figure~\ref{fig:learning_k} (left) uses the
final common evaluation at 50,069,504 environment steps, rather
than selecting a best checkpoint or averaging a trailing window.
There are eight seeds at this endpoint for $K=2,16$, ten for $K=4,8$,
and eight for the additional $\tau=0$ marker at $K=16$.
The no-entropy learning curve in Figure~\ref{fig:learning_temperature}
uses the same eight completed seeds at every evaluation.
The corrected adaptive-temperature control uses seeds 1--8. Both temperature
curves and all candidate-count endpoints show mean $\pm$ one SEM.
All eight no-entropy runs reach 50,069,504 steps, with final mean
$30.32\pm0.27$. They disable entropy regularization and its temperature
update. The entropy target changed from $-1.0$ in the historical runs
to $-0.5$ in the new runs, but is inactive when $\tau=0$.
For all G1 ablations, including cached-Q and critic-width comparisons,
we compute $\mathrm{SEM}=s/\sqrt{n}$, where $s$ is the sample standard
deviation (with $n-1$ in its variance denominator) and $n$ is the
number of contributing seeds. These are standard errors of the mean,
not 95\% confidence intervals. Benchmark curves use IQM and bootstrap
intervals as specified below.

\begin{figure}[htbp]
  \centering
  \begin{minipage}[c]{0.47\textwidth}
    \includegraphics[width=\linewidth]{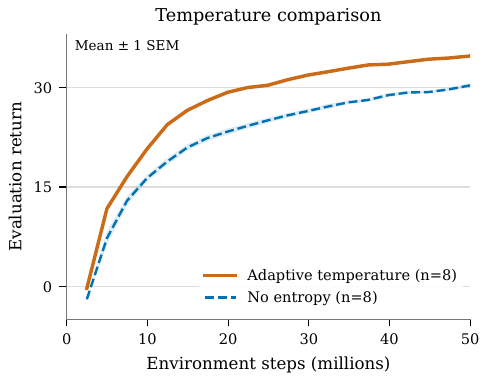}
  \end{minipage}\hfill
  \begin{minipage}[c]{0.49\textwidth}
    \caption{\textbf{Temperature learning curves.}
    Adaptive temperature versus no entropy regularization at $K=16$.
    Both curves show seed means with one-SEM bands.
    Both curves use eight completed seeds at every evaluation;
    the adaptive control uses corrected temperature learning rate $0.0003$.
    Final returns are $34.81\pm0.24$ with adaptive temperature and
    $30.32\pm0.27$ without entropy regularization.
    The final evaluations
    also appear in Figure~\ref{fig:learning_k} (left).}
    \label{fig:learning_temperature}
  \end{minipage}
\end{figure}
\clearpage
\begingroup
\subsubsection{{Critic-width protocol}}
\label{app:critic_width}

{The width comparison uses the uploaded G1 preset
on G1JoystickFlatTerrain with learning rate $3\times10^{-4}$, target
entropy factor $-0.5$, eight critic epochs, HL-Gauss loss, and auxiliary
weight one. Both methods use the full rollout. FERPO uses $K=8$;
REPPO retains its pathwise actor update and KL bound 0.1.
We vary hidden width in the critic encoder, $Q$ head, and auxiliary
predictor together; depth and the 512-wide actor are fixed.
Each endpoint is the final deterministic evaluation at 50,069,504
rollout interactions. Seeds 1--2 contribute at each width except
REPPO width 1,024, for which only seed 1 is available.
These are dedicated critic-width runs, separate from the published
REPPO benchmark results. We have not measured action-gradient
approximation error in these runs.}

At 50.07M interactions, reducing critic width from 124 to 62 lowers
FERPO's mean return from 23.04 to 6.29 and REPPO's from 22.12 to 4.19.
Both methods therefore rely on sufficient critic capacity.

\begin{figure}[htbp]
  \centering
  \includegraphics[width=0.85\textwidth]{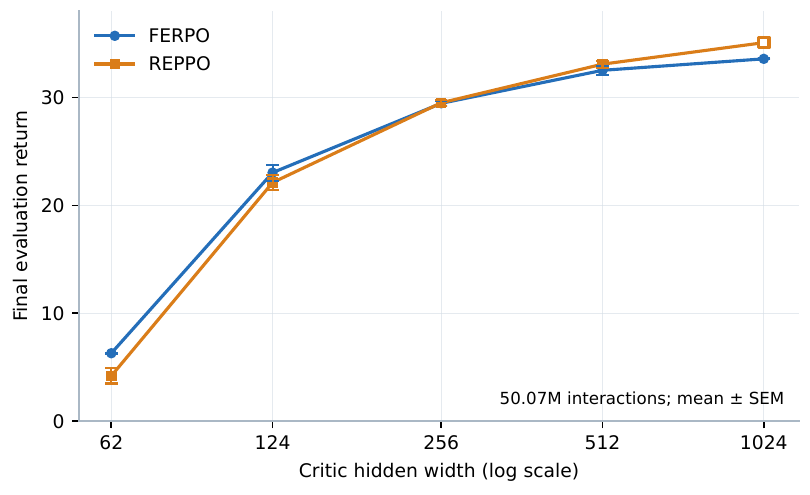}
  \caption{{\textbf{Final return versus critic width on G1.}
  Points show seed means and error bars show one SEM.
  The open square marks the single-seed REPPO reference at width
  1,024, for which no error bar is reported. Both methods lose
  substantial return at width 62; FERPO reaches 6.29 and REPPO 4.19.
  These endpoints do not identify the source of critic approximation error.}}
  \label{fig:critic_width_ablation}
\end{figure}
\endgroup

\clearpage
\subsubsection{Cached-Q timing protocol and elapsed-time curves}
\label{app:cached_q_timing}

We compare interleaved critic/actor updates with a staged variant
that first fits the critic, then freezes it and caches $K=8$ actions,
their $Q$ values, and rollout log probabilities. Actor training
recomputes SNIS weights as the multipliers change. Both methods use
eight full-rollout critic epochs, 1,024-wide critics, learning rate
$3\times10^{-4}$, and target entropy factor $-0.5$. The five-seed
rollout curves are unsmoothed. Final mean returns at 50,069,504 steps
are $34.03\pm0.21$ uncached and $33.58\pm0.40$ cached (mean $\pm$ SEM).

The timing annotation in Figure~\ref{fig:cached_q_ablation}
(middle) and elapsed-time curves in Figure~\ref{fig:cached_q_time}
use seeds 3--5 for each method. These six runs shared the
same Vega allocation and node, with one NVIDIA A100-SXM4-40GB and
a separate CPU set per run. Seeds 1--2 remain in the five-seed
rollout comparison in Figure~\ref{fig:cached_q_ablation} (middle),
but are excluded from timing because
their cached and uncached runs used different clusters.
The per-run clock starts before environment construction and includes
simulation, learning, evaluation, and checkpointing; scheduler waiting
time is excluded. Mean completion times at 50,069,504 rollout steps
are 55.7 minutes cached and 68.4 minutes uncached. The time reduction
is $1-T_{\mathrm{cached}}/T_{\mathrm{uncached}}=18.6\%$; the
throughput increase is
$T_{\mathrm{uncached}}/T_{\mathrm{cached}}-1=22.9\%$.
These are measured runtimes for this allocation.

We use recorded evaluation timestamps for the time curves. Within
each method, we linearly interpolate each seed's return to common
elapsed times, then compute the mean and SEM over all three
seeds. The bold mean curve ends when the first run in that method
finishes; faint individual curves continue to their own final
evaluation. We do not extrapolate or carry completed returns forward.
The arrow separately marks the mean completion times from the run
summaries, including the final checkpoint. The five-seed curves in
Figure~\ref{fig:cached_q_ablation} (middle) use recorded rollout-step coordinates
directly, without interpolation.

\begin{figure}[htbp]
  \centering
  \begin{minipage}[c]{0.47\textwidth}
    \includegraphics[width=\linewidth]{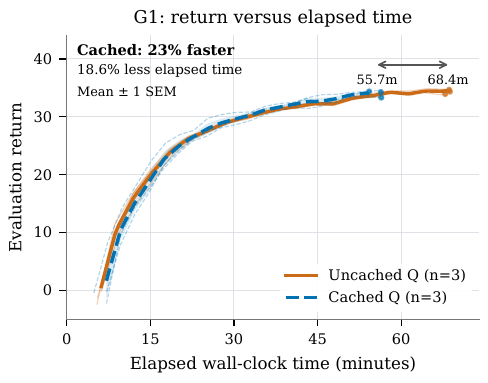}
  \end{minipage}\hfill
  \begin{minipage}[c]{0.49\textwidth}
    \caption{\textbf{Cached versus uncached Q against elapsed time.}
    G1 deterministic evaluation return for seeds 3--5 per method on the
    same Vega A100 node. Both use $K=8$ and eight critic epochs on the
    full rollout. Bold curves and one-SEM bands use linear interpolation
    at common times; faint curves show individual runs. The arrow marks
    mean completion times at 50,069,504 interactions: 55.7 versus
    68.4 minutes. At 50 minutes, interpolated mean returns are 33.55
    cached and 32.86 uncached. Figure~\ref{fig:cached_q_ablation} (middle)
    shows all five seeds against environment interactions.}
    \label{fig:cached_q_time}
  \end{minipage}
\end{figure}

\clearpage
\subsubsection{State-value critic with candidate simulation}
\label{app:state_value_ablation}

{We compare standard uncached FERPO with its cached
$V$ extension on AcrobotSwingupSparse.}
{For a fixed fitted critic, the ideal candidate score is
$Q_V(s,a)=r(s,a)+\gamma\E_{s'\sim P(\cdot\mid s,a)}[V_\psi(s')]$.
We approximate the transition expectation with one simulated successor
per candidate, giving
$\widehat Q_V(s,a)=r(s,a)+\gamma V_\psi(s')$.
This incorporates the observed immediate reward directly while using
a one-sample Monte Carlo estimate of the future value.}
{Scores are computed after fitting $V_\psi$ and held fixed throughout
actor training. The bootstrap is zero at true terminals; time limits
retain the value of the final observation.}

{Both methods use $K=16$, 512-wide critics, eight
critic epochs, HL-Gauss loss, auxiliary weight one, learning rate
$10^{-3}$, and target entropy factor $-1$. Across 12 matched seed
identifiers, cached $V$ reaches $324.79\pm75.28$ at 59.90 million
rollout steps, compared with $99.46\pm110.75$ for uncached $Q$
(mean $\pm$ one SD; Figure~\ref{fig:value_acrobot_ablation}, right).
The deterministic evaluation curves are unsmoothed. Each rollout
step requires $K$ additional candidate transitions, so cached $V$
uses $17$ times as many simulator transitions at this rollout budget.}

{\textbf{Efficient candidate simulation.}
We combine Warp graph execution with parallel simulation of all
$K$ candidates across environments. Candidates are sampled and
simulated during rollout, branching from each state without altering
the rollout trajectory. Their rewards and next observations remain
on the GPU until the critic has been fitted, then serve to construct
the score cache. This avoids repeated transfers of simulator states
between CPU and GPU. Actor epochs reuse the candidates and scores
while updating their SNIS weights. Since observation normalization
can change during rollout, we retain the sampling densities and
correct these weights for any difference from the frozen rollout
reference.}

{\textbf{Timing protocol.}
{We benchmark both methods on the same NVIDIA A100-SXM4-40GB with
Warp graphs enabled.} Both use 1,024 environments, 128 rollout steps
per iteration, actor and critic widths of 512, eight actor and critic
epochs, 64 minibatches, and one PyTorch CPU thread. We run three
repetitions in separate processes, randomizing method order and
averaging five iterations after two warmup iterations. GPU
synchronization at iteration boundaries ensures that timings include
all simulation and learning work; startup, compilation warmup,
evaluation, external logging, and checkpointing are excluded.
Cached $V$ takes 8.508 seconds per iteration versus 7.846 seconds
for uncached $Q$, an 8.4\% overhead.}

\subsubsection{{Additional planned ablation}}
\label{sec:planned_ablations}

{The following additional G1 comparison remains
planned. It will use matched seeds, interaction budgets, and evaluation
protocols, keeping other settings fixed.}

\textbf{Forward KL instead of ESS regularization.}
We will replace the log-ESS constraint with
$\E_{s\sim d_{\mathrm{roll}}}[\KL(\pi_{\mathrm{roll}}(\cdot\mid s)\|\pi_\theta(\cdot\mid s))]\leq0.1$,
matching REPPO's bound. The action expectation in this forward KL is
under the old rollout policy. We will adapt $\lambda$ using the KL
constraint gap and compare return, achieved KL, and normalized ESS
against the ESS-controlled baseline.

\clearpage
\begingroup
\subsubsection{{ESS versus policy-KL control}}
\label{app:ess_kl_ablation}

{We compare log-ESS control (Eq.~\ref{eq:log_ess_controller})
at $e_{\mathrm{target}}\in\{0.40,0.50,0.65,0.85\}$ with control of the
rollout-state mean $\KL(\pi_{\mathrm{roll}}\|\pi_{\mathrm{curr}})$ at
$\epsilon\in\{0.03,0.10,0.20,0.30\}$ nats. Both adapt $\lambda$
with the same forward-KL actor objective. ESS controls SNIS weight
concentration; policy KL measures policy displacement.}

{We use uncached FERPO ($K=16$) on HopperHop and
AcrobotSwingupSparse with 12 seeds per setting. All runs use width-512
critics, eight epochs, 64 minibatches, HL-Gauss loss, auxiliary weight
one, actor/critic and temperature learning rates $10^{-3}$, and entropy
target factor $-1$. Rollouts comprise 1,024 environments and 128 steps.
Final returns average the first episode from each of 128 deterministic
evaluation environments, with frozen normalizers and fixed initial
states within each run.}

{Both controllers support learning at suitable settings,
with task-dependent preferred regularization strengths and no consistent
return advantage for either controller (Figure~\ref{fig:ess_kl_end_return}).}

\begin{figure}[htbp]
  \centering
  \includegraphics[width=\textwidth]{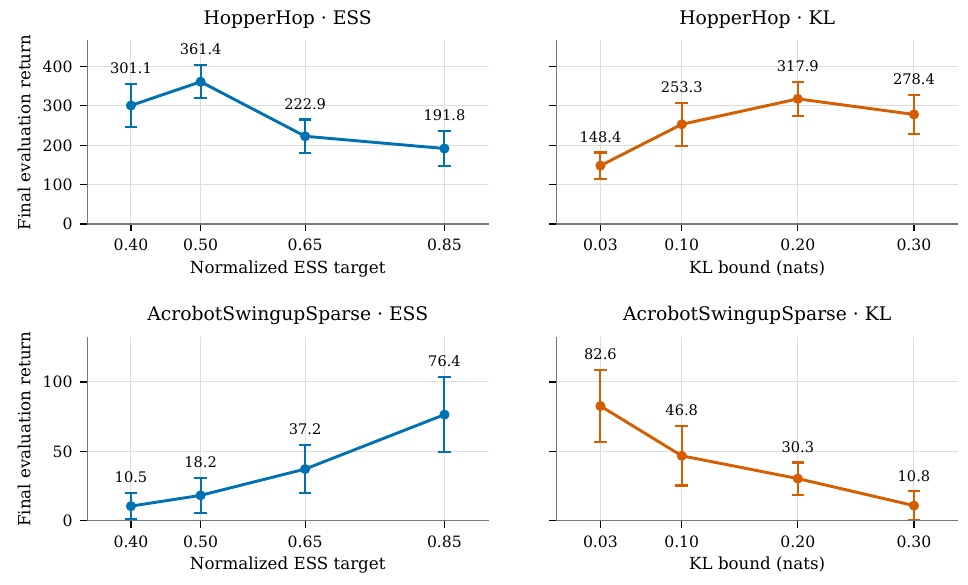}
  \caption{{\textbf{Final return versus controller target.}
  HopperHop (top) and AcrobotSwingupSparse (bottom), with ESS targets
  (left) and policy-KL bounds (right). Points show mean final return
  at 50.07M interactions with one SEM over 12 seeds.}}
  \label{fig:ess_kl_end_return}
\end{figure}

\clearpage
{Direct ESS control generally tracks its target, whereas
fixed policy-KL bounds produce ESS that varies across tasks and training
(Figure~\ref{fig:ess_kl_tracking}). At comparable mean achieved ESS,
direct control shows less temporal variation, although the aggressive
ESS target $0.40$ remains variable on Acrobot. These observations motivate
ESS as a direct, interpretable control of sampling-weight concentration.}

\begin{figure}[htbp]
  \centering
  \includegraphics[width=\textwidth]{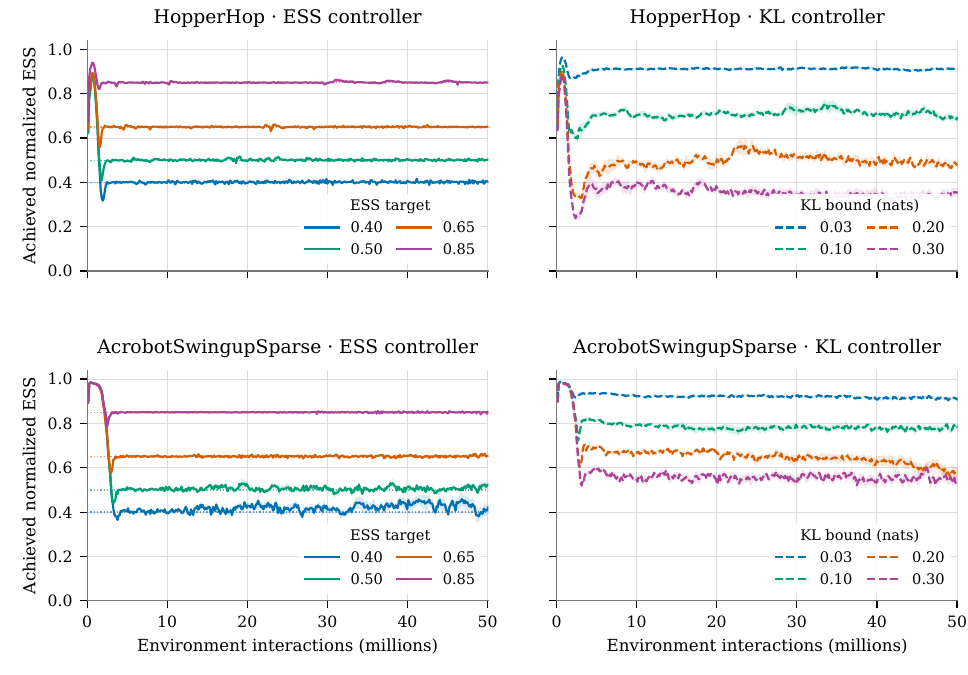}
  \caption{{\textbf{Achieved ESS under both controllers.}
  Per-iteration geometric mean of normalized SNIS ESS under direct ESS
  control (left) and policy-KL control (right). Curves show unsmoothed
  means with one-SEM bands across 12 seeds. Dotted lines mark ESS targets;
  right-panel legends give policy-KL bounds in nats.}}
  \label{fig:ess_kl_tracking}
\end{figure}
\endgroup

\clearpage
\subsubsection{Data and comparison protocol}
\label{app:learning_protocol}

The benchmark comprises 22 DMC, four G1/T1, and eight ManiSkill tasks.
FERPO uses training-log return evaluations for 19 DMC tasks and
success rates for ManiSkill. The three Walker tasks use separate
policy-evaluation CSVs, whose task metadata and reconstructed means
and counts match the original benchmark summaries. Walker has
ten evaluated seeds per task beyond 50M steps, replacing the older
partial training-log evaluations. For Walker, we display the available
positive checkpoint evaluations from 3,014,656 through 50,987,008
interactions; seeds 11--12 have no exported evaluation curves and are
not counted. The other 19 DMC tasks retain their own recorded evaluation
schedules. These existing evaluations use deterministic actions and
report 1,000-step episode lengths, but sampling settings differ:
the older training evaluator uses 1,024 environments, the newer one
uses 128, and the Walker checkpoint evaluator uses 128 episodes.
Initial evaluation states are fixed within a run rather than refreshed
at every checkpoint. Thus these are existing evaluation exports, not
a reevaluation under one identical sampling protocol.
{The G1/T1 benchmark uses the completed $K=16$, normalized
ESS target $0.65$ runs, with seeds 1--8 for each task. Actor/critic and temperature
learning rates are $0.0003$, and the entropy target factor is $-0.5$.
We use all 21 recorded deterministic training evaluations through
50,069,504 interactions, averaging the first episode from each of 128
parallel environments.}
The ManiSkill source label ``FEPPO architecture'' denotes FERPO.
We retain all available evaluated seeds within their observed ranges,
so counts may vary across checkpoints; training-only runs contribute
no observations.
Each DMC panel labels the number of contributing seeds for every method;
a range indicates changing availability along the displayed curve.
We retain the common 50M budget for numerical method comparisons and
do not extrapolate baselines to the last Walker checkpoint.

\textbf{Baseline sources and configuration selection.}
All baseline results are drawn from the experiments accompanying the
REPPO paper \citep{voelcker2026relative}.
REPPO uses the official per-task trial CSVs at commit
\texttt{05a5776}.\footnote{\href{https://github.com/cvoelcker/reppo/tree/05a5776c8efd2f56024d038c982b7d4c500f8221/results}{Official REPPO result CSVs at the pinned commit}.}
The local alternative REPPO configurations and baselines embedded
in FERPO's summary files are excluded. Following Figure~15 of
\citet{voelcker2026relative}, we distinguish PPO and PPO (Brax)
on DMC. We use the available CSV exports at their recorded step
coordinates; these are not verified as the exact trials behind that
published figure. We neither rescale their training horizons nor
combine the two PPO families. For the remaining methods,
we compare configurations separately by method family and suite
(DMC, G1/T1, or ManiSkill). A candidate must cover every task available for its
method in that group and span the 50-million-step selection budget.
We linearly interpolate each task's seed-mean return to that budget.
For DMC, we min--max normalize each task's score across eligible
configurations of the same method, assign 0.5 when all candidates
tie, and average over tasks. For ManiSkill, we average success
rates directly. The highest-scoring configuration is then fixed
across all tasks in its group, with alphabetical tie-breaking.
This is retrospective selection on the displayed runs, rather than
selection on an independent validation set.

\begin{table}[htbp]
  \centering
  \small
  \begin{tabular}{lll}
    \hline
    Group & Method & Selected configuration or source \\
    \hline
    DMC & REPPO & Official published trials \\
    DMC & PPO & \texttt{ppo\_4} \\
    DMC & PPO (Brax) & \texttt{ppo\_brax} \\
    DMC & FastTD3 & \texttt{fasttd3\_small} \\
    G1/T1 & REPPO & Official published trials \\
    ManiSkill & REPPO & Official published trials \\
    ManiSkill & PPO & \texttt{maniskill\_ppo\_short} \\
    \hline
  \end{tabular}
  \caption{Baseline sources, fixed across tasks within each group.
  The archive contains no PPO or FastTD3 G1/T1 curves, and
  no FastTD3 ManiSkill curves.}
  \label{tab:learning_baseline_selection}
\end{table}

\textbf{Statistics and display.}
All benchmark methods show IQM over the available independent seeds.
We compute IQM by integrating the middle 50\% of empirical
probability mass, including fractional observations at quartile
boundaries. FERPO runs are aligned on the union of their recorded
evaluation steps, using linear interpolation only between observed
evaluations of the same run. We do not extrapolate or fill in
missing run endpoints. At each checkpoint, we resample the available
seeds with replacement 10,000 times and report the 2.5th and 97.5th
percentiles, using fixed random seeds. Checkpoints with the same
available seeds share the same trajectory resampling; a changed
seed cohort is resampled separately. At least two observed seeds
are required. These are pointwise 95\% intervals conditional on
the observed seeds and selected configuration; they do not correct
for missing runs or configuration search. Baselines have complete
trial grids and use whole-trajectory resampling. FERPO uses exactly
the same actor and critic architectures as REPPO for each benchmark
suite.

REPPO evaluations labeled step zero are placed at 2,490,368
environment steps; nonzero REPPO coordinates and all other
methods' coordinates remain as recorded. Figures display up to
50 million steps without smoothing or extrapolation, except that the
DMC task panels extend to the recorded Walker checkpoint at 50,987,008.
FERPO's
ManiSkill curves end at their last recorded evaluation,
49,971,200 steps. Bootstrap success-rate intervals lie within
$[0,1]$ without clipping.

\textbf{Aggregation across environments and seeds.}
Figure~\ref{fig:learning_aggregate} pools scores within each suite.
For $T$ environments and $n_t$ available seeds in environment $t$,
each seed receives weight $1/(T n_t)$ before IQM trimming.
Scores retain their original units without cross-task normalization.
We use 21 shared checkpoints, requiring at least two seeds per
environment and method. The common ranges are 3.01--50M steps
for DMC, 2.49--50M for G1/T1, and 2.49--49.97M for ManiSkill.
The 10,000-resample seed bootstrap is stratified by environment,
keeping the environment set and its equal weights fixed.
In particular, each of the three Walker tasks contributes $1/22$
of the DMC probability mass before trimming, split among its ten
evaluated seeds: $1/220$ per seed. A task with twenty evaluated
seeds still contributes $1/22$, with $1/440$ per seed. We do not
duplicate Walker seeds or discard seeds from other tasks to equalize
counts. At 50M, FERPO contributes 372 task--seed observations across
the 22 tasks (7--20 per task). The aggregate uses the same positive
FERPO evaluations as the task panels; its first checkpoint is
3,014,656 steps, and it ends at the common 50M budget without
extrapolating baselines to Walker's last checkpoint.

\clearpage
\subsubsection{Pairwise comparisons of final benchmark performance}
\label{app:endpoint_comparisons}

The endpoint comparison uses the same tasks, available seeds, configuration
choices, and weighting as Figure~\ref{fig:learning_aggregate}. All methods
within a suite are compared at its final common budget: 50,000,000 steps
for DMC and G1/T1, and 49,971,200 steps for ManiSkill. Each seed contributes
its linearly interpolated evaluation score only when that budget lies
within its observed range. Individual IQMs and their marginal 95\%
intervals in Section~\ref{sec:learning_benchmarks} are the existing
10,000-resample Figure~\ref{fig:learning_aggregate} estimates.

For the pairwise analysis, we independently resample seeds with replacement
within each fixed environment and method, preserving each environment's
seed count and total weight. Numerical seed labels do not establish
paired runs across methods, so we do not use a paired-method bootstrap.
For each of 100,000 resamples, we compute the suite IQM for each method
and subtract the two IQMs of interest. We retain the environment set
rather than resampling environments. The procedure uses deterministic
method-specific random seeds and reproduces the plotted endpoint IQMs
before computing differences.

There are fourteen comparisons: ten on DMC, one on G1/T1, and three on
ManiSkill. We use quantiles $0.025/14$ and $1-0.025/14$ of each bootstrap
difference distribution. These are approximately 99.643\% per-comparison intervals,
corresponding to a Bonferroni correction that targets at least 95\%
coverage for the family of fourteen comparisons, subject to the bootstrap
approximation. A difference interval entirely above zero supports higher
IQM for method A; an interval entirely below zero supports method B.
An interval spanning zero is inconclusive and does not establish equal
performance. The intervals condition on the observed seeds and selected
configurations; they do not account for configuration search or
informative missingness.

\begin{table}[htbp]
  \centering
  \small
  \setlength{\tabcolsep}{4pt}
  \begin{tabular}{llrr}
\hline
Suite & Method A $-$ method B & IQM difference & Adjusted interval \\
\hline
MuJoCo DMC & FERPO $-$ REPPO & $6.67$ & $[-6.89,20.65]$ \\
MuJoCo DMC & FERPO $-$ FastTD3 & $7.10$ & $[-11.82,30.90]$ \\
MuJoCo DMC & FERPO $-$ PPO & $587.89$ & $[570.19,604.70]$ \\
MuJoCo DMC & FERPO $-$ PPO (Brax) & $354.86$ & $[307.06,398.14]$ \\
MuJoCo DMC & REPPO $-$ FastTD3 & $0.44$ & $[-18.30,23.87]$ \\
MuJoCo DMC & REPPO $-$ PPO & $581.22$ & $[563.44,597.76]$ \\
MuJoCo DMC & REPPO $-$ PPO (Brax) & $348.19$ & $[300.83,391.24]$ \\
MuJoCo DMC & FastTD3 $-$ PPO & $580.78$ & $[554.61,602.24]$ \\
MuJoCo DMC & FastTD3 $-$ PPO (Brax) & $347.76$ & $[297.07,393.03]$ \\
MuJoCo DMC & PPO $-$ PPO (Brax) & $-233.02$ & $[-281.80,-188.12]$ \\
{MuJoCo G1/T1} & {FERPO $-$ REPPO} & {$-1.41$} & {$[-1.94,-0.94]$} \\
ManiSkill (pp) & FERPO $-$ REPPO & $0.15$ & $[-3.75,4.31]$ \\
ManiSkill (pp) & FERPO $-$ PPO & $11.07$ & $[0.67,27.83]$ \\
ManiSkill (pp) & REPPO $-$ PPO & $10.91$ & $[0.40,27.78]$ \\
\hline
\end{tabular}

  \caption{\textbf{Pairwise differences in final suite IQM.}
  Intervals use 100,000 seed bootstrap resamples and a Bonferroni
  correction across all fourteen comparisons (95\% family-wise target).
  MuJoCo differences are in return units; ManiSkill differences are
  in percentage points (pp). Positive values favor method A.
  {The G1/T1 comparison uses eight FERPO seeds per task.}}
  \label{tab:endpoint_comparisons}
\end{table}

\clearpage
\subsubsection{DMC tasks}

\begin{figure}[htbp]
  \centering
  \includegraphics[width=\textwidth]{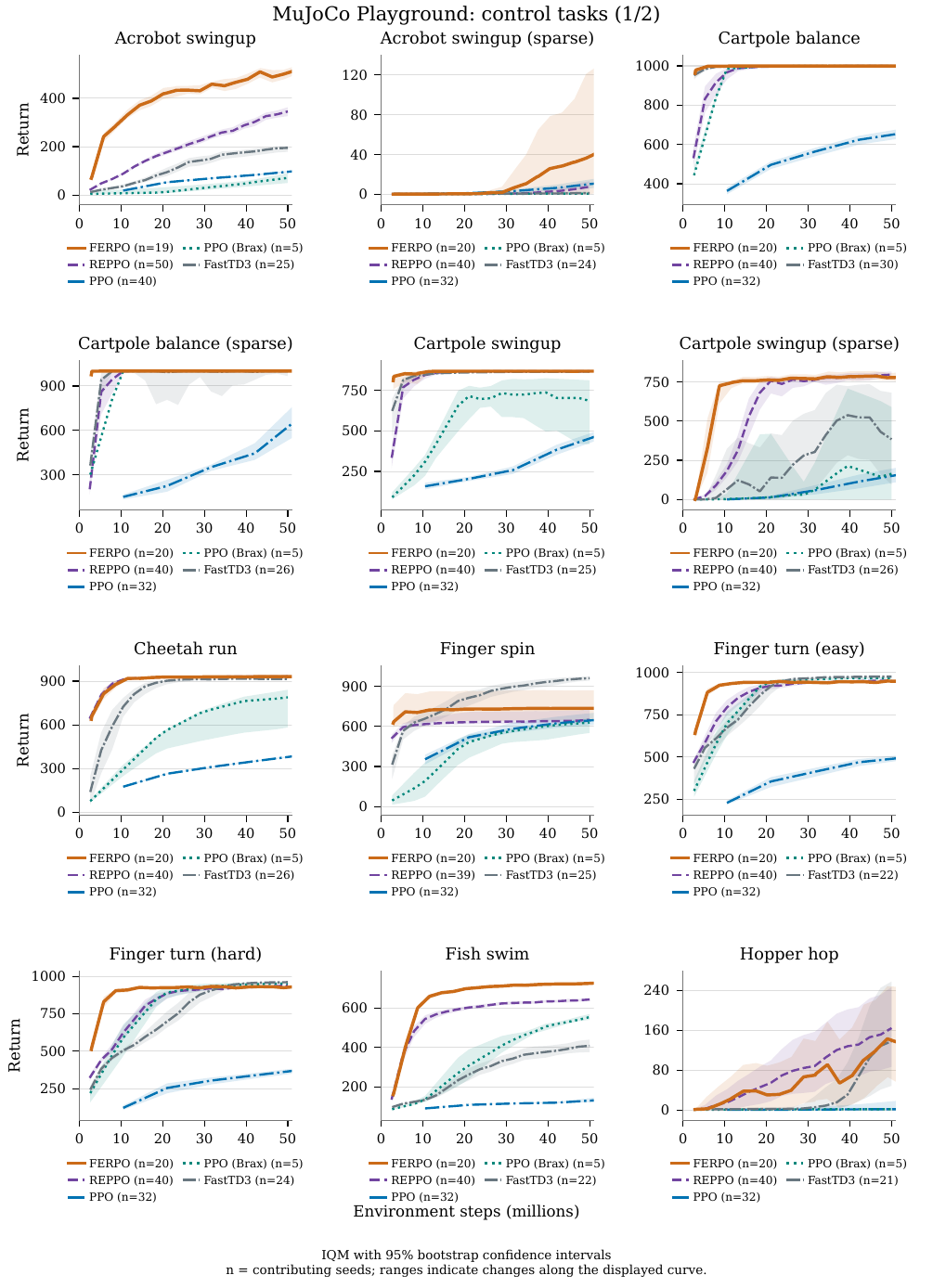}
  \caption{\textbf{DMC learning curves (1/2).} The first 12 of
  22 tasks. All methods show IQM with pointwise 95\% bootstrap
  confidence intervals. Legends report contributing seeds $n$;
  ranges indicate changes across the displayed evaluations.
  Curves retain their source evaluation schedules.
  Baseline configurations are fixed across all DMC tasks as
  specified in Table~\ref{tab:learning_baseline_selection}.}
  \label{fig:learning_control_1}
\end{figure}
\clearpage
\begin{figure}[p]
  \centering
  \includegraphics[width=\textwidth]{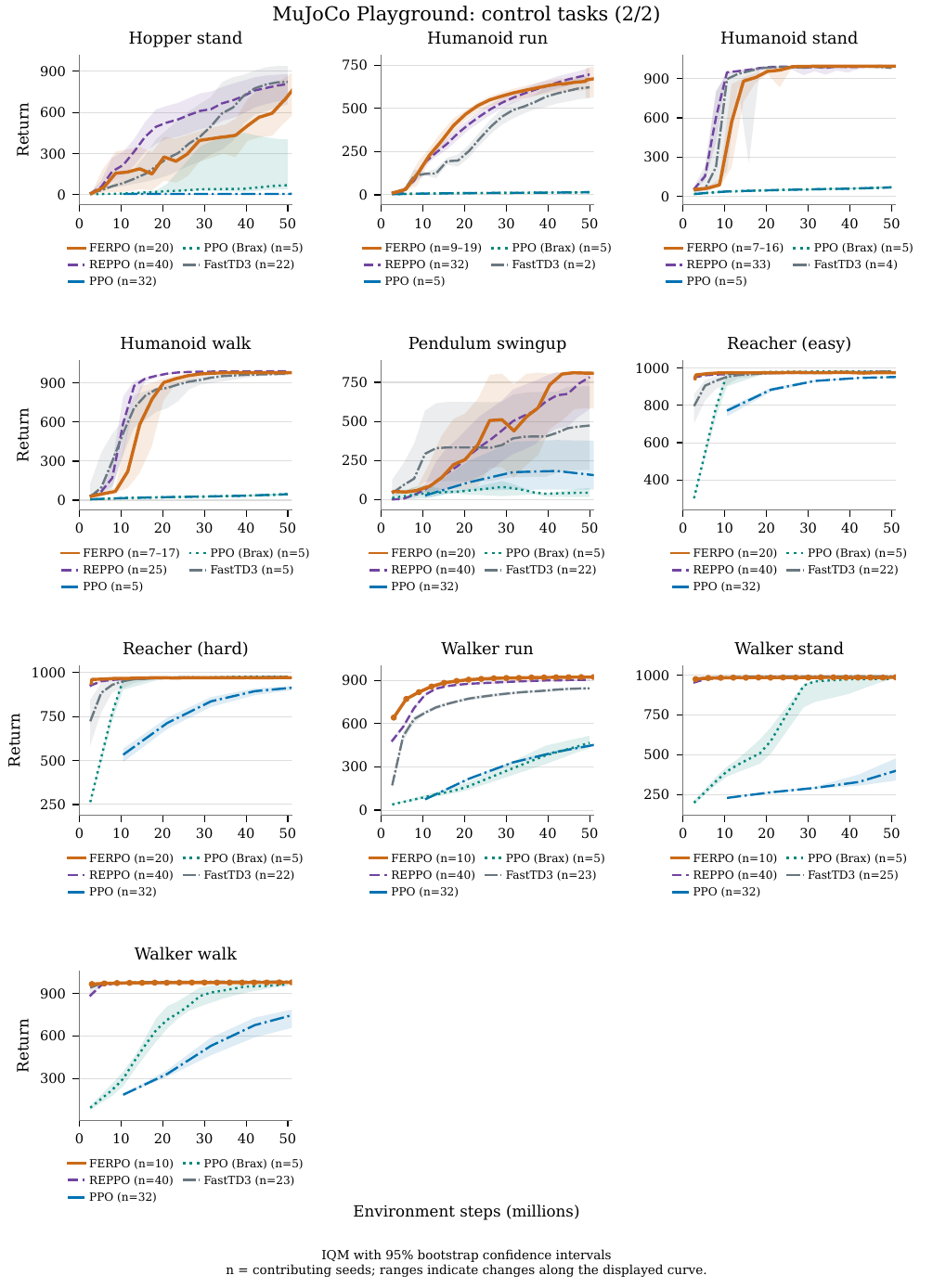}
  \caption{\textbf{DMC learning curves (2/2).} The remaining
  ten tasks, under the same protocol as
  Figure~\ref{fig:learning_control_1}. All methods show IQM with
  pointwise 95\% bootstrap confidence intervals. Walker markers denote
  the available checkpoint evaluations from 3.01M to 50.99M steps,
  with ten seeds per task. Legends report contributing seed counts.}
  \label{fig:learning_control_2}
\end{figure}

\clearpage
\subsubsection{G1/T1 locomotion and ManiSkill manipulation}

{At 50M interactions, FERPO's IQM returns are 35.56 and
30.38 on G1 flat and rough terrain, compared with REPPO's 37.60 and
32.89. On T1 flat and rough terrain, FERPO reaches 37.05 and 30.68,
compared with REPPO's 36.48 and 30.73. Thus FERPO has a slightly higher
point estimate on T1 flat and a similar endpoint on T1 rough, while
REPPO retains a larger lead on both G1 tasks.}

\begin{figure}[htbp]
  \centering
  \includegraphics[width=0.85\textwidth]{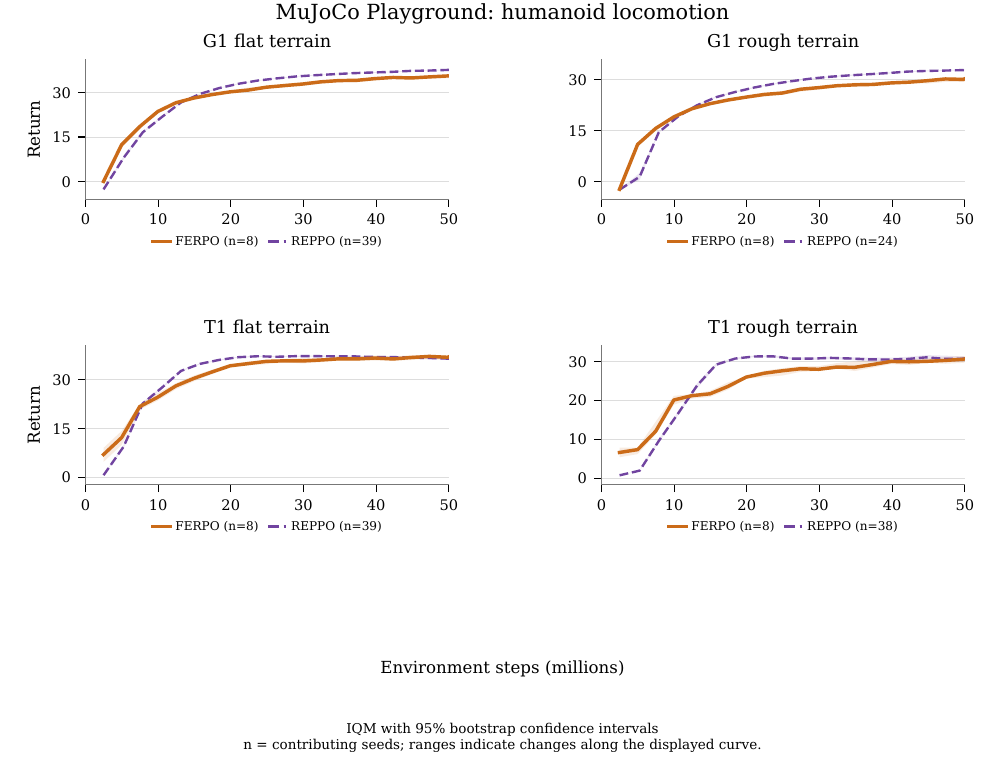}
  \caption{{\textbf{All four G1/T1 locomotion tasks.} FERPO and
  official REPPO trials show IQM with pointwise 95\% bootstrap
  confidence intervals. FERPO uses $K=16$, normalized ESS target $0.65$,
  and eight completed seeds per task; legends report each method's seed
  count. Results use the recorded evaluation protocols
  described in Appendix~\ref{app:learning_protocol}.}}
  \label{fig:learning_locomotion}
\end{figure}

\begin{figure}[htbp]
  \centering
  \includegraphics[width=\textwidth]{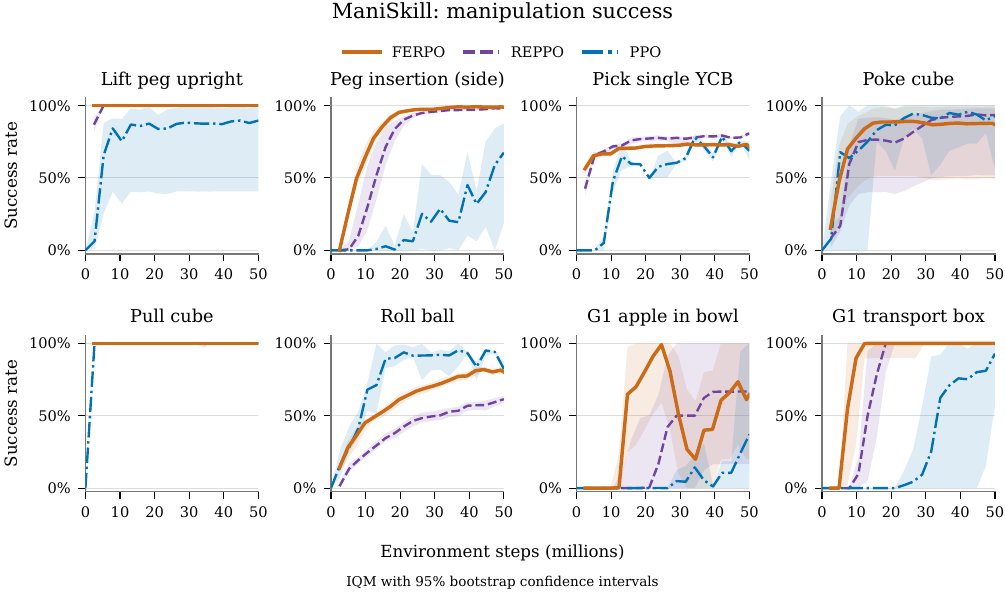}
  \caption{\textbf{All eight ManiSkill tasks.} All methods show
  success-rate IQM with pointwise 95\% bootstrap confidence
  intervals. FERPO uses 20 seeds per task except G1 transport box
  (ten). PPO uses one configuration across all eight tasks;
  REPPO uses the official published trial CSVs.}
  \label{fig:learning_maniskill}
\end{figure}

\clearpage
\subsection{FERPO pseudocode}
\label{app:ferpo_algorithm}
\begingroup

Algorithm~\ref{alg:ferpo} shows the interleaved critic and actor updates
used in the main learning setup. Critic targets are computed once per
rollout; candidate actions and their Q values are refreshed for each
actor update. We write $\langle\cdot\rangle_{\mathcal B}$ for a
minibatch average and $\operatorname{sg}$ for stop-gradient.
Bootstrap states use the final observation before any environment
reset. The frozen parameters
$\bar\phi$ are a snapshot for target construction.

\begin{algorithm}[H]
  \caption{FERPO with interleaved critic and actor updates}
  \label{alg:ferpo}
  \small
  \raggedright
  \begin{algorithmic}[1]
    \REQUIRE Environment, rollout length $T$, discount $\gamma$, trace parameter
      $\lambda_{\mathrm{TD}}$, auxiliary weight $c_{\mathrm{aux}}$,
      candidate count $K\geq2$, epochs $E$, minibatch size,
      entropy target $\mathcal H_{\mathrm{target}}$, ESS target
      $e_{\mathrm{target}}\in(1/K,1)$, critic settings from
      Section~\ref{sec:critic_learning}.
    \STATE Initialize policy $\pi_\theta$, critic $Q_\phi$, log multipliers
      $\eta,\xi$, and their optimizers.
    \WHILE{the environment interaction budget remains}
      \STATE Freeze $\pi_{\mathrm{roll}}\leftarrow\pi_\theta$ and
        $\bar\phi\leftarrow\phi$, $\tau_{\mathrm{roll}}\leftarrow e^\eta$.
      \STATE Collect rollout transitions $\mathcal D$ using
        $a_t\sim\pi_{\mathrm{roll}}(\cdot\mid s_t)$,
        including terminal and truncation flags.
      \STATE For each transition $t$, set $m_t\leftarrow0$ at a true terminal
        and $m_t\leftarrow1$ otherwise, including truncations.
      \STATE Set $c_t\leftarrow1$ if $t<T-1$ and transition $t$ is neither
        terminal nor truncated; otherwise set $c_t\leftarrow0$.
      \STATE At each nonterminal successor, use $a_t^+\leftarrow a_{t+1}$
        if $c_t=1$; otherwise draw
        $a_t^+\sim\pi_{\mathrm{roll}}(\cdot\mid s_{t+1})$.
      \STATE Store $v_t^+\leftarrow Q_{\bar\phi}(s_{t+1},a_t^+)$ and
        $b_t^+\leftarrow\log\pi_{\mathrm{roll}}(a_t^+\mid s_{t+1})$
        without gradients; set both to zero at true terminals.
      \STATE Set $G_T\leftarrow0$ at the rollout end; $c_{T-1}=0$
        ensures bootstrapping uses $v_{T-1}^+$.
      \FOR{$t=T-1,\ldots,0$ in each environment}
        \STATE $G_t\leftarrow r_t+\gamma m_t
          [(1-\lambda_{\mathrm{TD}}c_t)v_t^+
          +\lambda_{\mathrm{TD}}c_tG_{t+1}
          -\tau_{\mathrm{roll}}b_t^+]$.
      \ENDFOR
      \STATE Store detached $G_t$, next-feature targets from the frozen
        encoder $\bar\phi$, and reward targets $r_t$ in $\mathcal D$.
      \FOR{epoch $=1,\ldots,E$}
        \FOR{each shuffled minibatch $\mathcal B\subset\mathcal D$}
          \STATE $\mathcal L_Q\leftarrow
            \bigl\langle\mathbf1_{\{\text{not truncated}\}}
            [\mathcal L_{\mathrm{HL\text{-}Gauss}}(\phi;s_t,a_t,G_t)
            +c_{\mathrm{aux}}\mathcal L_{\mathrm{aux}}(\phi;t)]
            \bigr\rangle_{\mathcal B}$.
          \STATE Take an optimizer step to minimize $\mathcal L_Q$
            with respect to $\phi$, holding the critic targets fixed.
          \STATE $\tau\leftarrow e^\eta$, $\lambda\leftarrow e^\xi$.
          \STATE For each $s\in\mathcal B$, draw
            $a_i(s)\sim\pi_{\mathrm{roll}}(\cdot\mid s)$,
            $i=1,\ldots,K$, without gradients.
          \STATE Evaluate $v_i(s)\leftarrow Q_\phi(s,a_i(s))$ and
            $b_i(s)\leftarrow\log\pi_{\mathrm{roll}}(a_i(s)\mid s)$
            without gradients.
          \STATE $\ell_i(s)\leftarrow[v_i(s)-\tau b_i(s)]/(\tau+\lambda)$.
          \STATE $w_i(s)\leftarrow\operatorname{sg}
            \bigl[\operatorname{softmax}_i(\ell(s))\bigr]$
            \COMMENT{normalize over the $K$ actions at each state}
          \STATE $\widehat{\mathcal L}_{\mathrm F}\leftarrow
            -\bigl\langle\sum_{i=1}^K w_i(s)
            \log\pi_\theta(a_i(s)\mid s)\bigr\rangle_{\mathcal B}$.
          \STATE Draw fresh $\widetilde a_s\sim\pi_\theta(\cdot\mid s)$
            and estimate $\widehat{\mathcal H}\leftarrow
            -\langle\log\pi_\theta(\widetilde a_s\mid s)\rangle_{\mathcal B}$.
          \STATE $\widehat e(s)\leftarrow
            \bigl(K\sum_{i=1}^K w_i(s)^2\bigr)^{-1}$.
          \STATE $\mathcal L_\tau\leftarrow\tau\,
            \operatorname{sg}[\widehat{\mathcal H}-\mathcal H_{\mathrm{target}}]$.
          \STATE $\mathcal L_\lambda\leftarrow\lambda\,
            \operatorname{sg}[\langle\log\widehat e(s)\rangle_{\mathcal B}
            -\log e_{\mathrm{target}}]$.
          \STATE Take an optimizer step to minimize
            $\widehat{\mathcal L}_{\mathrm F}+\mathcal L_\tau+
            \mathcal L_\lambda$ with respect to $(\theta,\eta,\xi)$,
            holding $\phi$ and $\pi_{\mathrm{roll}}$ fixed.
        \ENDFOR
      \ENDFOR
      \STATE Discard $\mathcal D$ before collecting the next rollout.
    \ENDWHILE
    \RETURN $\pi_\theta$.
  \end{algorithmic}
\end{algorithm}

$\mathcal L_{\mathrm{HL\text{-}Gauss}}$ matches the critic's categorical
prediction to a Gaussian-smoothed histogram of the scalar target $G_t$;
$Q_\phi(s,a)$ is the mean of this prediction.
$\mathcal L_{\mathrm{aux}}$ predicts the next critic representation and
reward, with fixed targets and zero loss at episode ends
(Section~\ref{sec:critic_learning}). The target recursion adds the next
action's entropy contribution once, outside the TD mixture. Actor
updates differentiate only policy log probabilities and multiplier
losses; their gradients do not change the critic or rollout policy.

{\textbf{Policy parameterization and action bounds.}
For bounded-action environments, we use a tanh-squashed Gaussian policy
$\pi_\theta$, following REPPO \citep{voelcker2026relative} and SAC
\citep{haarnoja2018soft}:
$z\sim\mathcal N(\mu_\theta(s),\operatorname{diag}(\sigma_\theta^2(s)))$,
$a=\tanh(z)$, where a neural network parameterizes $\mu_\theta$ and
$\sigma_\theta$. Actions are rescaled to the environment's action bounds
where needed. Policy log densities include the corresponding
change-of-variables correction.}

For cached FERPO, complete critic training first, freeze $Q_\phi$, and
sample and store $a_i(s)$, $v_i(s)$, and $b_i(s)$ once for every rollout
state. Reuse these tensors throughout actor training, recomputing
$\ell_i(s)$ and $w_i(s)$ as $\tau$ and $\lambda$ change; release the
cache before the next rollout. Appendix~\ref{app:actor_cost} reports
the runtime and memory costs of this schedule.
\endgroup

\subsection{State-dependent normalization in policy fitting}
\label{app:state_normalization}
\begingroup

AWR's practical regression objective omits the per-state normalizer
\citep[Eqs.~8--10]{peng2019advantage}; AWAC explicitly discusses the
same omission \citep[Appendix~A-B]{nair2020awac}.
To isolate its effect, fix a state distribution $d$, the rollout policy
$\pi_{\mathrm{roll}}(\cdot\mid s)$, advantage estimates $A(s,a)$, and
$\lambda>0$. We consider the exponential weights before
clipping and assume the expectations and gradients below exist.
Define
\begin{equation}
  \begin{aligned}
    u(s,a)&:=\exp(A(s,a)/\lambda),
    &Z(s)&:=\E_{a\sim\pi_{\mathrm{roll}}(\cdot\mid s)}[u(s,a)],\\
    q_A(a\mid s)&:=\frac{\pi_{\mathrm{roll}}(a\mid s)u(s,a)}{Z(s)},
    &0&<Z(s)<\infty.
  \end{aligned}
  \label{eq:advantage_normalized_target}
\end{equation}
Let $\ell_\theta(s):=-\E_{a\sim q_A(\cdot\mid s)}
[\log\pi_\theta(a\mid s)]$. The normalized forward-KL fit minimizes
$L_{\mathrm{norm}}(\theta):=\E_{s\sim d}[\ell_\theta(s)]$,
up to a constant independent of $\theta$. Omitting $Z(s)$ instead gives
\begin{equation}
  \begin{aligned}
    L_{\mathrm{omit}}(\theta)
    &:=-\E_{s\sim d}\E_{a\sim\pi_{\mathrm{roll}}(\cdot\mid s)}
      [u(s,a)\log\pi_\theta(a\mid s)]\\
    &=\E_{s\sim d}[Z(s)\ell_\theta(s)].
  \end{aligned}
  \label{eq:normalizer_state_reweighting}
\end{equation}
Thus, holding the target weights fixed during actor fitting and writing
$g_\theta(s):=\nabla_\theta\ell_\theta(s)$,
\begin{equation}
  \nabla_\theta L_{\mathrm{norm}}=\E_{s\sim d}[g_\theta(s)],
  \qquad
  \nabla_\theta L_{\mathrm{omit}}=\E_{s\sim d}[Z(s)g_\theta(s)].
  \label{eq:normalizer_gradient_reweighting}
\end{equation}
The omission therefore changes the relative influence of states even
with unlimited data. A constant $Z(s)$ only rescales the gradient.
With independent, unrestricted fitting at each state, both objectives
recover $q_A$. With a shared neural policy, varying $Z(s)$ can change
the overall gradient direction and, when the target cannot be fitted
exactly, the optimal approximation. The size of this effect depends on
the variation in $Z(s)$ and the alignment of gradients across states.

An advantage baseline does not generally remove this reweighting:
for $A(s,a)=Q(s,a)-V(s)$,
$Z(s)=e^{-V(s)/\lambda}\E_{a\sim\pi_{\mathrm{roll}}(\cdot\mid s)}[e^{Q(s,a)/\lambda}]$,
which need not equal one. The baseline cancels from $q_A$ after
normalization, but affects state weights when $Z(s)$ is omitted.
AWR's additional weight clipping changes the action weights as well
\citep[Section~3.3]{peng2019advantage}.

FERPO estimates normalized weights separately within each state using
Eq.~\ref{eq:sampled_forward_kl}. SNIS has finite-sample bias but is
consistent under the usual integrability and sampling assumptions
\citep[Section~9.2]{owen2013monte}; see also
Appendix~\ref{app:projection_importance_sampling}.
This establishes a difference in the fitted objective, rather than a
general performance ordering.
\endgroup

\section{Proofs}
\label{app:proofs}

\subsection{Derivation of the KL surrogate}
\label{app:kl_surrogate}

\begingroup
\textbf{State distribution and expectation in Eq.~\ref{eq:maxent_objective}.}
\begingroup
For a fixed policy $\pi$, let $\rho_t^\pi$ denote the state density
at time $t$ in trajectories generated by $s_0\sim\rho_0$,
$a_t\sim\pi(\cdot\mid s_t)$, and
$s_{t+1}\sim P(\cdot\mid s_t,a_t)$. Thus $\rho_0^\pi=\rho_0$. As in
Section~\ref{sec:problem_description}, define the normalized discounted
state-visitation distribution
\begin{equation*}
  d^\pi(s):=(1-\gamma)\sum_{t=0}^{\infty}\gamma^t\rho_t^\pi(s),
  \qquad d_{\mathrm{roll}}:=d^{\pi_{\mathrm{roll}}}.
\end{equation*}
Whereas $\rho_t^\pi$ describes the states at a single time step,
$d^\pi$ averages these distributions over time with normalized
discount weights.
\endgroup
For any function $f(s,a)$ with finite absolute discounted expectation,
the definition of $d^\pi$ gives
\begin{equation*}
  \frac{1}{1-\gamma}
      \E_{\substack{s\sim d^\pi\\a\sim\pi(\cdot\mid s)}}[f(s,a)]
    =\sum_{t=0}^{\infty}\gamma^t
      \E_{\substack{s\sim\rho_t^\pi\\a\sim\pi(\cdot\mid s)}}[f(s,a)] =\E_{\rho_0,P,\pi}\!\left[
      \sum_{t=0}^{\infty}\gamma^t f(s_t,a_t)\right].
\end{equation*}
Taking $f(s,a)=r(s,a)-\tau\log\pi(a\mid s)$ shows that
Eq.~\ref{eq:maxent_objective} is exactly the discounted
maximum-entropy return.

To describe one step of state evolution under the
policy, define the transition kernel
\begin{equation*}
  P_\pi(s'\mid s):=\int\pi(a\mid s)P(s'\mid s,a)\,\mathrm da.
\end{equation*}
\begingroup
The state distribution evolves by averaging the next-state
distribution over the current states:
\begin{equation*}
  \rho_{t+1}^\pi(s')
  =\int\rho_t^\pi(s)P_\pi(s'\mid s)\,\mathrm ds
  =:(\rho_t^\pi P_\pi)(s').
\end{equation*}
Thus $\rho_{t+1}^\pi=\rho_t^\pi P_\pi$ is shorthand for this integral;
it describes one step under the environment dynamics.
\endgroup
Using this recursion and shifting the index in the discounted sum yields
$d^\pi=(1-\gamma)\rho_0+\gamma d^\pi P_\pi$.

\textbf{Gradient of the KL surrogate.}
\endgroup
Applying the policy-gradient theorem \citep{sutton1999policy} to the
entropy-regularized objective gives
\begin{equation}
  \nabla_\theta J_\tau(\pi_\theta)
  =\frac{1}{1-\gamma}
  \E_{\substack{s\sim d^{\pi_\theta}\\a\sim\pi_\theta(\cdot\mid s)}}
  \!\left[
    \nabla_\theta\log\pi_\theta(a\mid s)
    \bigl(Q_\tau^{\pi_\theta}(s,a)-\tau\log\pi_\theta(a\mid s)\bigr)
  \right].
  \label{eq:soft_policy_gradient}
\end{equation}
The additional derivative of the entropy term has zero expectation by
$\E_{a\sim\pi_\theta}[\nabla_\theta\log\pi_\theta(a\mid s)]=0$.

For the frozen rollout critic, define the normalized Boltzmann target
\begin{equation}
  p_{Q,\tau}(a\mid s)
  :=\frac{\exp(Q_{\mathrm{roll}}(s,a)/\tau)}{Z_Q(s)},
  \qquad
  Z_Q(s):=\int_{\mathcal A}
    \exp(Q_{\mathrm{roll}}(s,a)/\tau)\,\mathrm da.
  \label{eq:boltzmann_target}
\end{equation}
Expanding the reverse KL gives the per-state identity
\begin{equation}
  \tau\KL\!\left(\pi_\theta\middle\|p_{Q,\tau}\right)
  =-\E_{a\sim\pi_\theta}[Q_{\mathrm{roll}}(s,a)]
   -\tau\mathcal H(\pi_\theta(\cdot\mid s))
   +\tau\log Z_Q(s).
  \label{eq:reverse_kl_identity}
\end{equation}
Differentiating this identity while holding the rollout critic and
state distribution fixed and comparing with
Eq.~\ref{eq:soft_policy_gradient} yields the gradient-matching condition
stated in Section~\ref{sec:problem_description}.
Thus, minimizing Eq.~\ref{eq:maxent_surrogate} matches the local
return-ascent direction at the rollout parameters. The equivalence is
local because the frozen critic and state distribution do not track
subsequent policy changes.

\subsection{Derivation of the optimal regularized target}
\label{app:optimal_target}

Fix a state $s$ and write $Q(a):=Q_{\mathrm{roll}}(s,a)$ and
$\mu(a):=\pi_{\mathrm{roll}}(a\mid s)$. Assume $\mu>0$ almost everywhere on
$\mathcal A$ and that the integrals below are finite. For $\tau>0$ and
$\lambda\geq0$, the per-state contribution to
Eq.~\ref{eq:constrained_lagrangian} is
\begin{equation}
  \begin{aligned}
    \mathcal L_s(q;\tau,\lambda)
    &:=-\E_{a\sim q}[Q(a)]
      +\tau(\mathcal H_{\mathrm{target}}-\mathcal H(q))
      +\lambda(\KL(q\|\mu)-\delta_{\mathrm{KL}})\\
    &=\int_{\mathcal A}q(a)
      \bigl[(\tau+\lambda)\log q(a)-Q(a)-\lambda\log\mu(a)\bigr]
      \,\mathrm da
      +\tau\mathcal H_{\mathrm{target}}-\lambda\delta_{\mathrm{KL}}.
  \end{aligned}
  \label{eq:target_objective_expansion}
\end{equation}
Introduce a multiplier $\nu$ for the normalization constraint:
$\widetilde{\mathcal L}_s(q,\nu;\tau,\lambda)
:=\mathcal L_s(q;\tau,\lambda)
+\nu(\int_{\mathcal A}q(a)\,\mathrm da-1)$.
The stationarity condition is
\begin{equation}
  \frac{\delta\widetilde{\mathcal L}_s}{\delta q(a)}
  =(\tau+\lambda)(\log q(a)+1)-Q(a)-\lambda\log\mu(a)+\nu=0.
  \label{eq:target_stationarity}
\end{equation}
Solving for $q$ and normalizing gives
\begin{equation}
  \begin{aligned}
    q^*(a)
    &=\frac{1}{Z_*(s)}
      \exp\!\left(\frac{Q(a)}{\tau+\lambda}\right)
      \mu(a)^{\lambda/(\tau+\lambda)},\\
    Z_*(s)
    &:=\int_{\mathcal A}
      \exp\!\left(\frac{Q(a)}{\tau+\lambda}\right)
      \mu(a)^{\lambda/(\tau+\lambda)}\,\mathrm da,
  \end{aligned}
  \label{eq:normalized_optimal_target}
\end{equation}
which is Eq.~\ref{eq:optimal_target}. Substituting $\log q^*$ into
Eq.~\ref{eq:target_objective_expansion} yields the exact identity
\begin{equation}
  \mathcal L_s(q;\tau,\lambda)
  =(\tau+\lambda)\KL(q\|q^*)
    -(\tau+\lambda)\log Z_*(s)
    +\tau\mathcal H_{\mathrm{target}}-\lambda\delta_{\mathrm{KL}}.
  \label{eq:target_kl_decomposition}
\end{equation}
Since KL is nonnegative and vanishes only when its arguments agree
almost everywhere, $q^*$ is the unique global minimizer. Averaging
this identity over $d_{\mathrm{roll}}$ also proves the reverse-KL
projection form of Eq.~\ref{eq:constrained_lagrangian} used in
Section~\ref{sec:target_and_projection}.

For the forward-KL fit, the target-to-proposal density ratio is
\begin{equation}
  \frac{q^*(a)}{\mu(a)}
  =\frac{1}{Z_*(s)}
    \exp\!\left(\frac{Q(a)-\tau\log\mu(a)}{\tau+\lambda}\right).
  \label{eq:target_importance_ratio}
\end{equation}
Self-normalizing these ratios across sampled actions cancels $Z_*(s)$
and gives Eq.~\ref{eq:sampled_forward_kl}, with the exact critic
replaced by $Q_\phi$ in practice.

\subsection{Log ESS and divergence control}
\label{app:log_ess_proof}

\begin{proof}[Target-divergence bound in Proposition~\ref{prop:log_ess_chi_squared} and ESS identities]
Fix a state $s$, and write $q(a)=q_\lambda^*(a\mid s)$ for the
target density and $\mu(a)=\pi_{\mathrm{roll}}(a\mid s)$ for the
rollout-policy density. Define the importance ratio
$r(a):=q(a)/\mu(a)$ wherever $\mu(a)>0$. As assumed in the
proposition, $q\ll\mu$ and $\E_\mu[r^2]<\infty$. The population
normalized ESS is $e_\infty:=1/\E_\mu[r^2]$.
Because $q$ and $\mu$ are probability densities and $q\ll\mu$,
$\E_\mu[r]=\int q(a)\,\mathrm da=1$. Therefore,
\begin{equation*}
  \chi^2(q\|\mu)=\E_\mu[(r-1)^2]=\E_\mu[r^2]-1,
  \qquad e_\infty=\frac{1}{1+\chi^2(q\|\mu)}.
\end{equation*}
Jensen's inequality for the concave logarithm gives
\begin{equation*}
  \KL(q\|\mu)=\E_q[\log r]
  \leq\log\E_q[r]
  =\log\E_\mu[r^2]
  =D_2(q\|\mu)
  =-\log e_\infty.
\end{equation*}
Since $-\log x$ is decreasing, $e_\infty\geq e_{\mathrm{target}}$
implies $\KL(q\|\mu)\leq-\log e_{\mathrm{target}}$.
For independent samples $a_1,\ldots,a_K\sim\mu$, define the
normalized weights $w_i=r(a_i)/\sum_jr(a_j)$ and empirical normalized
ESS $\widehat e:=1/(K\sum_iw_i^2)$. Substituting the weights gives
\begin{equation*}
  \widehat e
  =\frac{\bigl(K^{-1}\sum_{i=1}^K r(a_i)\bigr)^2}
         {K^{-1}\sum_{i=1}^K r(a_i)^2}
  \xrightarrow[K\to\infty]{\mathrm{a.s.}}
  \frac{(\E_\mu[r])^2}{\E_\mu[r^2]}
  =e_\infty.
\end{equation*}
The convergence follows from the strong law of large numbers and the
finite second-moment assumption. If an initial batch has zero total
weight, the normalized estimate is undefined; the sample average of
$r$ converges to one, so the denominator for weight normalization is
positive for all sufficiently large $K$ almost surely.
\end{proof}

\textbf{Finite-batch interpretation.}
There is an exact identity for any finite batch of normalized
weights $w=(w_1,\ldots,w_K)$. With $u_K(i)=1/K$ on sample indices,
\begin{equation*}
  \chi^2(w\|u_K)=K\sum_iw_i^2-1,
  \qquad
  -\log\widehat e=\log\!\left(1+\chi^2(w\|u_K)\right)
                 =D_2(w\|u_K).
\end{equation*}
This finite-batch equality concerns probability vectors on sampled
indices. It does not identify $-\log\widehat e$ with the divergence
between the full action distributions $q$ and $\mu$.

\begingroup
\subsubsection{{From target ESS to policy total variation}}
\label{app:ess_actor_tv}

A decrease in the population forward-KL fitting loss suffices to
transfer the target's divergence control to the actor in total
variation. We use the half-$L^1$ convention in
Section~\ref{sec:problem_description} and natural logarithms throughout.

We restate the actor-control assumption of
Proposition~\ref{prop:log_ess_chi_squared}.
Fix a state and a target $q^*$, suppressing state conditioning.
Assume $q^*\ll\pi_{\mathrm{roll}}$ and
$\E_{\pi_{\mathrm{roll}}}[(q^*/\pi_{\mathrm{roll}})^2]<\infty$,
and let
$e_\infty:=1/\E_{\pi_{\mathrm{roll}}}[(q^*/\pi_{\mathrm{roll}})^2]$.
Suppose the actor update does not increase the population forward-KL
fitting objective:
\begin{equation}
  \KL(q^*\|\pi_{\mathrm{curr}})
  \leq\KL(q^*\|\pi_{\mathrm{roll}}).
  \label{eq:ess_actor_fit_improvement}
\end{equation}
Then Eq.~\ref{eq:ess_actor_tv_bound} holds:
\begin{equation*}
  \begin{aligned}
    \frac12D_{\mathrm{TV}}(\pi_{\mathrm{curr}},\pi_{\mathrm{roll}})^2
    &\leq\KL(q^*\|\pi_{\mathrm{roll}})\\
    &\leq D_2(q^*\|\pi_{\mathrm{roll}})=-\log e_\infty.
  \end{aligned}
\end{equation*}
No exact minimization or Gaussian policy assumption is required.

\begin{proof}[Proof of the actor-TV bound in Proposition~\ref{prop:log_ess_chi_squared}]
The triangle inequality, obtained by adding and subtracting $q^*$
inside the absolute value in the TV definition, gives
\begin{equation*}
  D_{\mathrm{TV}}(\pi_{\mathrm{curr}},\pi_{\mathrm{roll}})
  \leq D_{\mathrm{TV}}(\pi_{\mathrm{curr}},q^*)
      +D_{\mathrm{TV}}(q^*,\pi_{\mathrm{roll}}).
\end{equation*}
TV is symmetric, so Pinsker's inequality
\citep[Theorem~31]{vanerven2014renyi} can bound both terms using
KL divergences with $q^*$ as their first argument. Applying
Eq.~\ref{eq:ess_actor_fit_improvement} then yields
\begin{equation*}
  \begin{aligned}
    D_{\mathrm{TV}}(\pi_{\mathrm{curr}},\pi_{\mathrm{roll}})
    &\leq\sqrt{\tfrac12\KL(q^*\|\pi_{\mathrm{curr}})}
         +\sqrt{\tfrac12\KL(q^*\|\pi_{\mathrm{roll}})}\\
    &\leq 2\sqrt{\tfrac12\KL(q^*\|\pi_{\mathrm{roll}})}
     =\sqrt{2\KL(q^*\|\pi_{\mathrm{roll}})}.
  \end{aligned}
\end{equation*}
Squaring both sides and dividing by two gives
$\tfrac12D_{\mathrm{TV}}(\pi_{\mathrm{curr}},\pi_{\mathrm{roll}})^2
\leq\KL(q^*\|\pi_{\mathrm{roll}})$.
The target-divergence result proved above supplies the remaining
inequality and identity,
$\KL(q^*\|\pi_{\mathrm{roll}})
\leq D_2(q^*\|\pi_{\mathrm{roll}})=-\log e_\infty$.
\end{proof}

\textbf{Meaning of fitting improvement.}
With $q^*$ fixed, the difference of the fitting objectives is
\begin{equation*}
  \KL(q^*\|\pi_{\mathrm{curr}})
   -\KL(q^*\|\pi_{\mathrm{roll}})
  =-\E_{q^*}\!\left[
       \log\frac{\pi_{\mathrm{curr}}}{\pi_{\mathrm{roll}}}\right].
\end{equation*}
Thus Eq.~\ref{eq:ess_actor_fit_improvement} requires only that the
actor increase or preserve the target-weighted population log
likelihood. This is the objective approximated by the SNIS actor
loss; it is distinct from an assumption of higher environment return.

\textbf{Averaging over rollout states.}
The result also holds on average without per-state fitting
improvement. All distributions below are conditioned on $s$.
Squaring the first Pinsker bound above and using
$(\sqrt{x}+\sqrt{y})^2\leq2(x+y)$ gives the pointwise inequality
\begin{equation*}
  D_{\mathrm{TV}}(\pi_{\mathrm{curr}},\pi_{\mathrm{roll}})^2
  \leq\KL(q^*\|\pi_{\mathrm{curr}})
      +\KL(q^*\|\pi_{\mathrm{roll}}).
\end{equation*}
Assume the relevant expectations are finite and that
\begin{equation*}
  \E_{s\sim d_{\mathrm{roll}}}[\KL(q^*\|\pi_{\mathrm{curr}})]
  \leq\E_{s\sim d_{\mathrm{roll}}}[\KL(q^*\|\pi_{\mathrm{roll}})].
\end{equation*}
Averaging the pointwise inequality and applying the target-divergence
bound in Eq.~\ref{eq:ess_actor_tv_bound} yields
\begin{equation*}
  \begin{aligned}
    \frac12\E_{s\sim d_{\mathrm{roll}}}
      [D_{\mathrm{TV}}(\pi_{\mathrm{curr}},\pi_{\mathrm{roll}})^2]
    &\leq\E_{s\sim d_{\mathrm{roll}}}
      [\KL(q^*\|\pi_{\mathrm{roll}})]\\
    &\leq-\E_{s\sim d_{\mathrm{roll}}}[\log e_\infty(s)].
  \end{aligned}
\end{equation*}
If the geometric mean of population ESS meets the controller target,
$\E_{s\sim d_{\mathrm{roll}}}[\log e_\infty(s)]
\geq\log e_{\mathrm{target}}$, the right-hand side is at most
$-\log e_{\mathrm{target}}$.
This controls average squared TV, rather than the largest deviation
across states. Decreases in finite-sample SNIS loss and empirical ESS
control do not by themselves guarantee these population conditions.
Nor does the TV bound imply an upper bound on the fitted actor's KL;
it cannot replace the KL penalty in
Eq.~\ref{eq:trust_region_bound} without further assumptions.
\endgroup

\subsection{Convergence under exact fitting}
\label{app:exact_fit_proof}

The regularized target in Eq.~\ref{eq:optimal_target} interpolates
between the reference policy and the Boltzmann distribution. Repeated
exact updates reach the Boltzmann distribution when the reference is
replaced by the fitted policy after every update. The following result
quantifies how $\lambda$ controls this convergence for a fixed critic.

\begin{proposition}[Convergence of exact FERPO updates for a fixed critic]
\label{prop:exact_fit_convergence}
Fix a state $s$, a critic $Q(s,\cdot)$, and a temperature $\tau>0$.
Suppress state conditioning and let
$p(a):=\exp(Q(s,a)/\tau)/Z(s)$ be normalizable.
Assume $\KL(p\|\pi_0)<\infty$ and fix $0<\lambda<\infty$.
At each iteration, compute the unrestricted target and fit it exactly:
\begin{equation*}
  q_n^*:=\operatorname*{arg\,min}_q
    \left\{\tau\KL(q\|p)+\lambda\KL(q\|\pi_n)\right\},
  \qquad \pi_{n+1}=q_n^*.
\end{equation*}
The policy after $n\geq1$ exact updates is
\begin{equation}
  \pi_n(a)
  =\frac{1}{Z_n(s)}
    \pi_0(a)^{\left(\frac{\lambda}{\tau+\lambda}\right)^n}
    \exp\!\left[
      \frac{1-\left(\frac{\lambda}{\tau+\lambda}\right)^n}{\tau}
      Q(s,a)
    \right],
  \label{eq:exact_fit_closed_form}
\end{equation}
where $Z_n(s)$ normalizes the density. Since
$\left(\lambda/(\tau+\lambda)\right)^n\to0$ as $n\to\infty$,
the initial policy's exponent vanishes and the coefficient of $Q(s,a)$
approaches $1/\tau$. Consequently,
\begin{equation*}
  \lim_{n\to\infty}\pi_n(a)
  =\frac{\exp(Q(s,a)/\tau)}
         {\int_{\mathcal A}\exp(Q(s,b)/\tau)\,\mathrm db}
  =p(a)
  \quad\text{almost everywhere},
\end{equation*}
and convergence also holds in total variation. The limit is the unique
maximizer of $\E_{a\sim q}[Q(s,a)-\tau\log q(a)]$, independently of
the fixed finite value of $\lambda$. For $\lambda=0$, the target is
reached in one exact update.
\end{proposition}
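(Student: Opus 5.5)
The plan is to reduce each iteration to the closed-form target of Eq.~\ref{eq:optimal_target}, solve the resulting recursion in log space, and then pass to the limit.

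\textbf{Step 1: one-step update.} The objective $\tau\KL(q\|p)+\lambda\KL(q\|\pi_n)$ equals $-\E_q[Q]-\tau\mathcal H(q)+\lambda\KL(q\|\pi_n)$ plus a constant $\tau\log Z$. This is the per-state Lagrangian of Appendix~\ref{app:optimal_target} with reference $\mu=\pi_n$. By Eq.~\ref{eq:target_kl_decomposition}, its unique minimizer is
\[
\pi_{n+1}\propto \exp\!\bigl(Q/(\tau+\lambda)\bigr)\,\pi_n^{\lambda/(\tau+\lambda)}.
\]
Write $\beta:=\lambda/(\tau+\lambda)\in(0,1)$ and note $1/(\tau+\lambda)=(1-\beta)/\tau$. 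In log space the update becomes
\[
\log\pi_{n+1}=\beta\log\pi_n+\tfrac{1-\beta}{\tau}Q+c_n .
\]

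\textbf{Step 2: closed form.} I will prove Eq.~\ref{eq:exact_fit_closed_form} by induction. Suppose the coefficient of $Q$ at step $n$ is $(1-\beta^n)/\tau$. Then at step $n+1$ it is $\beta(1-\beta^n)/\tau+(1-\beta)/\tau=(1-\beta^{n+1})/\tau$, and the exponent of $\pi_0$ becomes $\beta^{n+1}$. Constants are absorbed into $Z_n$.

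Normalizability has to be checked at each step. The density $\pi_n$ is proportional to $\pi_0^{\beta^n}(Zp)^{1-\beta^n}$. By H\"older's inequality with exponents $1/\beta^n$ and $1/(1-\beta^n)$, its integral is at most $(\int\pi_0)^{\beta^n}(\int Zp)^{1-\beta^n}<\infty$. The integral is also positive wherever $\pi_0$ and $p$ overlap, which is guaranteed because $\KL(p\|\pi_0)<\infty$ forces $p\ll\pi_0$.

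\textbf{Step 3: limit.} Rewrite $\pi_n\propto p\,(\pi_0/p)^{\beta^n}$ on $\{p>0\}$. Since $\beta^n\to0$, the unnormalized density converges pointwise a.e. to $p$. The main obstacle is showing $Z_n\to1$, i.e.\ that no mass escapes; this is where $\KL(p\|\pi_0)<\infty$ enters.

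\emph{Lower bound.} Jensen's inequality gives $Z_n=\E_p[(\pi_0/p)^{\beta^n}]\ge\exp(-\beta^n\KL(p\|\pi_0))\to1$.

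\emph{Upper bound.} Jensen's inequality gives $Z_n\le(\E_p[\pi_0/p])^{\beta^n}\le1$.

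Hence $Z_n\to1$. Scheff\'e's lemma, applied to the pointwise a.e.\ convergence of densities, then yields $L^1$, i.e.\ total variation, convergence. As an alternative route, $\KL(p\|\pi_n)=\beta^n\KL(p\|\pi_0)+\log Z_n\le\beta^n\KL(p\|\pi_0)\to0$, and Pinsker's inequality gives TV convergence directly.

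\textbf{Step 4: remaining claims.} Uniqueness of the maximizer follows from the identity $\E_q[Q-\tau\log q]=\tau\log Z-\tau\KL(q\|p)$, which is maximized only at $q=p$; this limit does not depend on $\lambda$. For $\lambda=0$ we have $\beta=0$, so $\pi_1=p$ and the target is reached in one exact update.
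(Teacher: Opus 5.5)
Your proposal is correct and follows essentially the paper's route: the same geometric-mean recursion $\pi_{n+1}\propto p^{1-\beta}\pi_n^{\beta}$ solved by induction, a pointwise limit with the normalizer tending to one, and the contraction $\KL(p\|\pi_n)\le\beta^n\KL(p\|\pi_0)$ followed by Pinsker. Your one-shot identity $\KL(p\|\pi_n)=\beta^n\KL(p\|\pi_0)+\log Z_n$ is just the paper's recursion $\KL(p\|\pi_{n+1})=\beta\KL(p\|\pi_n)+\log C_n$ unrolled via $Z_{n+1}=Z_n^{\beta}C_n$.

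The one real variation is how you show $Z_n\to1$. You use a two-sided Jensen sandwich $e^{-\beta^n\KL(p\|\pi_0)}\le Z_n\le1$, which relies on the finite-KL hypothesis but gives an explicit rate, and then apply Scheff\'e. The paper instead uses dominated convergence with $p^{1-t}\pi_0^{t}\le p+\pi_0$, which for this step needs only $p\ll\pi_0$.
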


\textbf{The role of $\lambda$.}
Equation~\ref{eq:exact_fit_closed_form} separates the influence of the
initial policy from the Boltzmann target. Larger $\lambda$ makes
$\lambda/(\tau+\lambda)$ closer to one, so the initial policy's
influence decays more slowly. Smaller $\lambda$ gives faster progress
toward the same target. Thus, $\lambda$ controls the rate of approach,
while $\tau$ determines the limiting Boltzmann distribution.

The proposition assumes exact forward-KL fits and a refreshed reference
policy at every iteration. Repeated fitting to a single frozen target
only recovers that target. It establishes convergence to the Boltzmann
distribution of the fixed critic; convergence of full RL training,
where the critic is re-estimated, requires a separate policy-iteration
argument.

\begin{proof}[Proof of Proposition~\ref{prop:exact_fit_convergence}]
Write $\beta:=\lambda/(\tau+\lambda)\in(0,1)$.
Equation~\ref{eq:optimal_target} gives
$\pi_{n+1}=p^{1-\beta}\pi_n^\beta/C_n$, where
$C_n:=\int p^{1-\beta}\pi_n^\beta\,\mathrm da$.
Induction gives
$\pi_n\propto p^{1-\beta^n}\pi_0^{\beta^n}$.
Substituting $p\propto\exp(Q/\tau)$ and
$\beta=\lambda/(\tau+\lambda)$ yields
Eq.~\ref{eq:exact_fit_closed_form}.
The finite-KL assumption ensures that $\pi_0$ covers the support of $p$;
this property is preserved by the update. As $n\to\infty$,
$p^{1-\beta^n}\pi_0^{\beta^n}\to p$ almost everywhere.
The bound $p^{1-t}\pi_0^t\leq p+\pi_0$ for $0<t<1$ permits dominated
convergence, so the normalizing integral of this geometric mixture
tends to one. This proves the stated almost-everywhere limit.

To quantify convergence, H\"older's inequality and normalization imply
$0<C_n\leq1$. Therefore,
\begin{equation}
  \KL(p\|\pi_{n+1})
  =\beta\KL(p\|\pi_n)+\log C_n
  \leq\beta\KL(p\|\pi_n).
  \label{eq:exact_fit_kl_identity}
\end{equation}
Iterating gives the KL contraction bound
\begin{equation}
  \KL(p\|\pi_{n+1})\leq\beta\KL(p\|\pi_n),
  \qquad
  \KL(p\|\pi_n)\leq\beta^n\KL(p\|\pi_0).
  \label{eq:exact_fit_kl_rate}
\end{equation}
Pinsker's inequality then implies convergence in total variation.
Optimality of $p$ follows
from Eq.~\ref{eq:reverse_kl_identity}. For $\lambda=0$, the update
minimizes $\KL(q\|p)$ and reaches $p$ immediately.
\end{proof}

\textbf{Finite-step accuracy and varying multipliers.}
If $D_0:=\KL(p\|\pi_0)>\varepsilon>0$, then
$n\geq\lceil\log(D_0/\varepsilon)/\log(1+\tau/\lambda)\rceil$
exact updates suffice for $\KL(p\|\pi_n)\leq\varepsilon$.
For varying coefficients $\lambda_k>0$ at fixed $Q$ and $\tau$,
$\beta^n$ is replaced by
$\prod_{k=0}^{n-1}\lambda_k/(\tau+\lambda_k)$; convergence follows if
this product tends to zero, as it does for uniformly bounded
$\lambda_k$.

\subsection{Update directions and importance sampling}
\label{app:projection_geometry}

For a Gaussian policy with fixed covariance, both REPPO and FERPO have
mean gradients that can be expressed as averages of cost gradients.
REPPO averages under the current policy; FERPO averages under its
cost-reweighted target, with a factor $1/\lambda$. We derive these
identities and explain how the different distributions produce the
opposing directions in Figure~\ref{fig:bimodal_actor_losses}.
We then show why recovering the target direction from samples
requires adequate proposal coverage.

\subsubsection{Cost gradients under the policy and the target}
\label{app:actor_direction_comparison}

Fix a state and a cost $c(a)=-Q_\phi(s,a)$ on $\mathbb R^d$.
Let $\pi_m=\mathcal N(m,\Sigma)$ be the reference and initial actor,
with fixed covariance $\Sigma\succ0$. The candidate actor is
$\pi_n=\mathcal N(n,\Sigma)$. Fix $\tau,\lambda>0$ and write
$T=\tau+\lambda$. FERPO constructs
\begin{equation}
  q_m^*(a)=\frac{1}{Z_m}e^{-c(a)/T}\pi_m(a)^{\lambda/T},
  \label{eq:direction_target}
\end{equation}
and holds $q_m^*$ fixed while fitting $\pi_n$.
Let $m^*(m):=\E_{a\sim q_m^*}[a]$ denote the target mean.
Assume $c$ is continuously differentiable and bounded below, and that
$c$ and $\nabla c$ grow at most polynomially. These conditions ensure
finite target moments and justify the differentiation below.

\begin{proposition}[Initial mean-descent directions]
\label{prop:actor_direction_comparison}
At $n=m$, REPPO's KL regularization is inactive. Its mean loss is
$L_{\mathrm R}(n)=\E_{\pi_n}[c]-\tau H(\pi_n)$, while FERPO fits
$L_{\mathrm F}(n;m)=\KL(q_m^*\|\pi_n)$. Their negative gradients are
\begin{equation}
  \begin{aligned}
    d_{\mathrm R}(m)
      &:=-\left.\nabla_n L_{\mathrm R}(n)\right|_{n=m}
        =-\E_{a\sim\pi_m}[\nabla c(a)],\\
    d_{\mathrm F}(m)
      &:=-\left.\nabla_n L_{\mathrm F}(n;m)\right|_{n=m}
        =\Sigma^{-1}\bigl(m^*(m)-m\bigr)
        =-\frac{1}{\lambda}\E_{a\sim q_m^*}[\nabla c(a)].
  \end{aligned}
  \label{eq:actor_direction_comparison}
\end{equation}
The exact forward-KL mean fit sets $n=m^*(m)$.
Reverse KL to the same frozen target has the same initial descent
\emph{direction} as REPPO, with its negative gradient equal to
$d_{\mathrm R}(m)/T$.
\end{proposition}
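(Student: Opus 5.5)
The plan is to compute both gradients in closed form, using that $\pi_n=\mathcal N(n,\Sigma)$ has fixed covariance, so that $\pi_n(a)=\pi_0(a-n)$ is a pure translation. Then $H(\pi_n)$ does not depend on $n$, and $\E_{\pi_n}[c]=\E_{\varepsilon\sim\mathcal N(0,\Sigma)}[c(n+\varepsilon)]$. Differentiating under the integral sign, which the polynomial-growth assumptions on $c$ and $\nabla c$ justify by dominated convergence against the Gaussian tail, gives $\nabla_nL_{\mathrm R}(n)=\E_{\pi_n}[\nabla c]$. This yields $d_{\mathrm R}(m)$.

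For FERPO, write $\KL(q_m^*\|\pi_n)=-\E_{q_m^*}[\log\pi_n]+\mathrm{const}$ with $-\log\pi_n(a)=\tfrac12(a-n)^\top\Sigma^{-1}(a-n)+\mathrm{const}$. The gradient in $n$ is then $-\Sigma^{-1}(m^*(m)-n)$. This is finite because $q_m^*$ has finite first moments: $e^{-c/T}$ is bounded since $c$ is bounded below, and $\pi_m^{\lambda/T}$ is a Gaussian kernel. Evaluating at $n=m$ gives $\Sigma^{-1}(m^*-m)$. The function is a strictly convex quadratic in $n$, so its unique minimizer is $n=m^*(m)$, which is the exact-fit claim.

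The key step is the identity $\Sigma^{-1}(m^*-m)=-\lambda^{-1}\E_{q_m^*}[\nabla c]$, which I would obtain by Gaussian integration by parts (Stein's identity). Write $q_m^*\propto \exp(-c/T)\exp\bigl(-\tfrac{\lambda}{2T}(a-m)^\top\Sigma^{-1}(a-m)\bigr)$, so that
\[
\nabla\log q_m^*(a)=-\tfrac1T\nabla c(a)-\tfrac{\lambda}{T}\Sigma^{-1}(a-m).
\]
Since $\E_{q_m^*}[\nabla\log q_m^*]=\int\nabla q_m^*\,\mathrm da=0$, taking expectations gives $\lambda\Sigma^{-1}(m^*-m)=-\E_{q_m^*}[\nabla c]$. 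This is the claimed identity.

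The main obstacle is justifying $\int\nabla q_m^*=0$, i.e.\ that the boundary terms vanish. I would handle it by applying the divergence theorem on balls $B_R$. On the boundary, $q_m^*\le C e^{-\kappa|a|^2}$, because $c$ is bounded below and the Gaussian factor dominates. Also $|\nabla q_m^*|\le q_m^*\bigl(|\nabla c|/T+\text{linear}\bigr)$ is integrable, because $\nabla c$ grows at most polynomially. Letting $R\to\infty$ then kills the surface integral.

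For the reverse-KL claim, I would use the decomposition in Eq.~\ref{eq:target_kl_decomposition} (equivalently, expand directly):
\[
T\,\KL(\pi_n\|q_m^*)=\E_{\pi_n}[c]-T H(\pi_n)-\lambda\E_{\pi_n}[\log\pi_m]+\mathrm{const}.
\]
The entropy term is constant in $n$. The gradient of the cross term $-\lambda\E_{\pi_n}[\log\pi_m]=\tfrac{\lambda}{2}\E_{\pi_n}\bigl[(a-m)^\top\Sigma^{-1}(a-m)\bigr]+\mathrm{const}$ is $\lambda\Sigma^{-1}(n-m)$, which vanishes at $n=m$. Hence the negative gradient of $\KL(\pi_n\|q_m^*)$ at $n=m$ is $-\E_{\pi_m}[\nabla c]/T=d_{\mathrm R}(m)/T$. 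This shares REPPO's direction.
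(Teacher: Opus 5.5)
Your proposal is correct and follows the paper's proof essentially step for step: reparameterization for REPPO's gradient, the Gaussian quadratic expansion giving $\Sigma^{-1}(m^*(m)-m)$ and the exact-fit minimizer $n=m^*(m)$, the score identity $\E_{q_m^*}[\nabla\log q_m^*]=0$ converting this into $-\lambda^{-1}\E_{q_m^*}[\nabla c]$, and the reverse-KL decomposition whose $\lambda$-dependent term has zero gradient at $n=m$. Your divergence-theorem argument on balls for the vanishing boundary terms is a more explicit version of the paper's appeal to decay at infinity and the polynomial-growth assumptions.
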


\begin{proof}
Using the reparameterization $a=n+\Sigma^{1/2}z$, where
$z\sim\mathcal N(0,I)$, we obtain
\begin{equation*}
  \nabla_n\E_{a\sim\pi_n}[c(a)]
  =\E_{a\sim\pi_n}[\nabla_a c(a)].
\end{equation*}
Since the covariance is fixed, the entropy does not depend on $n$.
This gives REPPO's mean-descent direction.
For FERPO, expansion of the Gaussian log density gives
\begin{equation}
  L_{\mathrm F}(n;m)
   =\frac12\bigl(n-m^*(m)\bigr)^\top\Sigma^{-1}
                  \bigl(n-m^*(m)\bigr)+\mathrm{const}(m).
  \label{eq:gaussian_frozen_fit}
\end{equation}
Differentiating with respect to $n$ gives the FERPO
gradient, and the quadratic is minimized at $n=m^*(m)$.

To express FERPO's gradient in terms of cost gradients, differentiate
the target log density with respect to the action:
\begin{equation}
  \nabla_a\log q_m^*(a)
    =-\frac{1}{T}\bigl(\nabla c(a)+\lambda\Sigma^{-1}(a-m)\bigr).
  \label{eq:target_action_score}
\end{equation}
The Gaussian factor and the lower bound on $c$ make the target density
vanish at infinity; the polynomial growth assumptions ensure
integrability. Integrating its derivative over $\mathbb R^d$ therefore
gives $\E_{q_m^*}[\nabla_a\log q_m^*]=\int\nabla_a q_m^*(a)\,\mathrm da=0$.
Taking expectations in Eq.~\ref{eq:target_action_score} yields
\begin{equation}
  \E_{q_m^*}[\nabla c(a)]
    =\lambda\Sigma^{-1}\bigl(m-m^*(m)\bigr),
  \label{eq:target_cost_gradient_identity}
\end{equation}
which proves the final equality for $d_{\mathrm F}$.

\textbf{Connection to REPPO.}
We now consider fitting the same frozen target using reverse KL,
$\KL(\pi_n\|q_m^*)$. FERPO instead minimizes the forward KL,
$\KL(q_m^*\|\pi_n)$. The following decomposition shows that reverse-KL
fitting gives the same initial descent direction as REPPO, up to a
positive scaling factor. Substituting the target density in
Eq.~\ref{eq:direction_target} yields
\begin{equation}
  T\KL(\pi_n\|q_m^*)
    =\E_{\pi_n}[c]-\tau H(\pi_n)
       +\lambda\KL(\pi_n\|\pi_m)+T\log Z_m.
  \label{eq:reverse_projection_cost}
\end{equation}
At $n=m$, the KL penalty contributes no gradient, and $Z_m$ is
independent of the candidate mean $n$. Differentiating the right-hand
side therefore gives REPPO's loss gradient. Consequently, minimizing
$\KL(\pi_n\|q_m^*)$ gives the initial descent direction
$d_{\mathrm R}(m)/T$: the same direction as REPPO, scaled by $1/T$.
\end{proof}

\textbf{Shared small-variance limit.}
Assume additionally that $c$ is three-times continuously differentiable
and its derivatives through order three grow at most polynomially.
Fix the mean $m$ and positive constants $\tau$ and $\lambda$.
Let $\Sigma=\sigma^2I$ with $\sigma\to0$. The Taylor expansion underlying the delta method
\citep[Appendix~C]{khan2023bayesian} applies to both averages.
Under $\pi_m$, $\E[a-m]=0$ and $\E\|a-m\|^2=O(\sigma^2)$.
The target is proportional to $e^{-c(a)/T}$ times the density of
$\mathcal N(m,(T/\lambda)\sigma^2I)$. The lower bound on $c$ and
the growth assumptions imply
$\E_{q_m^*}\|a-m\|^2=O(\sigma^2)$ and
$\E_{q_m^*}[\nabla c(a)]=O(1)$.
Equation~\ref{eq:target_cost_gradient_identity} then gives
$\E_{q_m^*}[a-m]=O(\sigma^2)$.
Expanding $\nabla c(a)$ around $m$ therefore yields, for the initial
loss gradients $g_{\mathrm R}:=-d_{\mathrm R}(m)$ and
$g_{\mathrm F}:=-d_{\mathrm F}(m)$,
\begin{equation*}
  g_{\mathrm R}=\nabla c(m)+O(\sigma^2),
  \qquad
  \lambda g_{\mathrm F}=\nabla c(m)+O(\sigma^2).
\end{equation*}
Thus both methods approach the same descent direction wherever
$\nabla c(m)\ne0$; in one dimension, their signs agree for sufficiently
small $\sigma$. This shared limit does not imply agreement at finite
width, where the target can be broad or multimodal, or uniformly near
stationary points. Figure~\ref{fig:bimodal_actor_losses} uses the exact
expectations at finite width.

At B in Figure~\ref{fig:bimodal_actor_losses}, $m=-0.6$ and
$\sigma=0.4$. The expected cost derivative is $0.284$, while
$m^*(m)\simeq-0.466$, giving
\begin{equation*}
  d_{\mathrm R}=-0.284,
  \qquad d_{\mathrm F}\simeq\frac{-0.466+0.6}{0.4^2}=0.838.
\end{equation*}
Thus FERPO initially moves toward the better right well while REPPO
moves toward the worse left well. This is the potential benefit
illustrated by the experiment: fitting the reweighted mean can give
an update toward a better region even when the expected-cost gradient
points elsewhere. The comparison concerns directions, not matched
step sizes or eventual optimization success. It applies at the
reference policy with fixed covariance.

\subsubsection{Recovering the target-mean direction from samples}
\label{app:projection_importance_sampling}

The update-direction comparison in
Appendix~\ref{app:actor_direction_comparison} uses the exact target mean. With SNIS,
FERPO instead uses $\widehat m^*=\sum_iw_i a_i$ and the estimated
negative gradient is $\widehat d_{\mathrm F}=\Sigma^{-1}(\widehat m^*-m)$.
The integration-by-parts identity holds for the population target;
it need not hold for a finite weighted sample. In particular, SNIS
is generally biased at finite $K$.
For a proposal $r$ that is positive wherever $q_m^*$ is positive,
independent samples $a_i\sim r$ receive weights
\begin{equation}
  u_i=\frac{e^{-c(a_i)/T}\pi_m(a_i)^{\lambda/T}}{r(a_i)},
  \qquad w_i=\frac{u_i}{\sum_j u_j}.
  \label{eq:general_proposal_weights}
\end{equation}
Changing $r$ changes the estimate's accuracy while preserving the
same target, provided the denominator uses the actual proposal.

\begin{proposition}[Accuracy of the sampled target mean]
\label{prop:projection_snis}
For fixed $q_m^*$ and $r$, $\widehat m^*\to m^*(m)$ almost surely as
$K\to\infty$. If the matrix
\begin{equation}
  \Omega_r:=\int\frac{q_m^*(a)^2}{r(a)}
      (a-m^*(m))(a-m^*(m))^\top\,\mathrm da
  \label{eq:projection_snis_covariance}
\end{equation}
has finite trace, then
$\sqrt K\bigl(\widehat m^*-m^*(m)\bigr)
\Longrightarrow\mathcal N(0,\Omega_r)$.
Here, $\Longrightarrow$ denotes convergence in distribution.
For large $K$, the estimated target mean is approximately Gaussian
around the exact mean, with covariance $\Omega_r/K$. Thus, the typical
estimation error decreases as $1/\sqrt K$, while its magnitude depends
on how well the proposal covers the target distribution.
\end{proposition}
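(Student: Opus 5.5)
The estimator $\widehat m^*=\sum_i w_i a_i$ is a ratio of two sample means, so I would prove both claims with the standard ratio (delta-method) argument for self-normalized importance sampling. Write $\widetilde q(a):=e^{-c(a)/T}\pi_m(a)^{\lambda/T}=Z_m q_m^*(a)$, so that $u_i=\widetilde q(a_i)/r(a_i)$, and define
\begin{equation*}
  \bar u_K:=\frac1K\sum_{i=1}^K u_i,
  \qquad
  \bar v_K:=\frac1K\sum_{i=1}^K u_i a_i,
  \qquad
  \widehat m^*=\frac{\bar v_K}{\bar u_K}.
\end{equation*}
Because $r>0$ wherever $q_m^*>0$, we have $\E_r[u]=Z_m$ and $\E_r[u\,a]=Z_m m^*(m)$. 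The second identity uses the finite target first moment guaranteed by the growth assumptions of Appendix~\ref{app:actor_direction_comparison}.

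\textbf{Consistency.} By the strong law of large numbers, $\bar u_K\to Z_m>0$ and $\bar v_K\to Z_m m^*(m)$ almost surely. Since $(x,y)\mapsto y/x$ is continuous on $\{x>0\}$, the continuous mapping theorem gives $\widehat m^*\to m^*(m)$ almost surely. As in the ESS proof of Appendix~\ref{app:log_ess_proof}, the denominator is eventually positive almost surely, so the ratio is eventually well defined.

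\textbf{Asymptotic normality.} I would center the numerator:
\begin{equation*}
  \widehat m^*-m^*(m)=\frac{\frac1K\sum_i u_i\bigl(a_i-m^*(m)\bigr)}{\bar u_K}.
\end{equation*}
The summands $Y_i:=u_i(a_i-m^*(m))$ are i.i.d.\ with mean $Z_m(m^*(m)-m^*(m))=0$. Their covariance is
\begin{equation*}
  \E_r[Y Y^\top]=\int \frac{\widetilde q(a)^2}{r(a)}\,(a-m^*)(a-m^*)^\top\,\mathrm da=Z_m^2\,\Omega_r .
\end{equation*}
Finite trace of $\Omega_r$ gives finite second moments, so the multivariate CLT yields $K^{-1/2}\sum_i Y_i\Longrightarrow\mathcal N(0,Z_m^2\Omega_r)$. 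Dividing by $\bar u_K\to Z_m$ and applying Slutsky's theorem gives $\sqrt K(\widehat m^*-m^*(m))\Longrightarrow\mathcal N(0,\Omega_r)$.

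\textbf{Main obstacle.} The step needing most care is the second-moment bookkeeping. The centering must be at $m^*(m)$ so that the mean of $Y_i$ vanishes exactly; centering elsewhere leaves a bias term of order $\sqrt K$. The factor $Z_m^2$ must also cancel exactly against the squared denominator, which is what makes the limit covariance the self-normalized quantity $\Omega_r$ rather than an unnormalized one. Note that finiteness of $\operatorname{tr}\Omega_r$ does not require $\E_r[u^2]<\infty$: the CLT is applied only to the centered products $Y_i$, and the denominator needs only the strong law. The closing interpretive sentence about covariance $\Omega_r/K$ and the $1/\sqrt K$ error rate is a restatement of the limit and needs no separate proof.
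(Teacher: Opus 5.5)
Your proposal is correct and matches the paper's proof. Like the paper, you write $\sqrt K(\widehat m^*-m^*(m))$ as $K^{-1/2}\sum_i(\text{weight})_i(a_i-m^*(m))$ divided by the sample mean of the weights, apply the multivariate CLT to the numerator and the strong law to the denominator, and conclude with Slutsky. The only cosmetic difference is that the paper uses the normalized ratio $W=q_m^*/r$, so that $\E_r W=1$ and the numerator covariance is $\Omega_r$ directly, whereas you keep the unnormalized weights $u_i$ and cancel $Z_m^2$ explicitly.
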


\begin{proof}
This is the standard SNIS argument \citep[Section~9.2]{owen2013monte}.
Let $W=q_m^*/r$. Then $\E_rW=1$, $\E_r[Wa]=m^*(m)$, and
$\widehat m^*=(\sum_iW(a_i)a_i)/(\sum_iW(a_i))$.
The strong law of large numbers proves consistency, since the target
has a finite first moment. For the limiting distribution, write
\[
  \sqrt K(\widehat m^*-m^*(m))
  =\frac{K^{-1/2}\sum_i W(a_i)(a_i-m^*(m))}
         {K^{-1}\sum_i W(a_i)}.
\]
The numerator summands have mean zero and covariance $\Omega_r$.
The multivariate central limit theorem and convergence of the
denominator to one give the result.
\end{proof}

\textbf{Why a wider proposal can help.}
The covariance in Eq.~\ref{eq:projection_snis_covariance} is large
when the target places mass far from its mean in a region that the
proposal rarely samples. Such a region contributes to the exact mean
but is often absent from a finite batch. Increasing proposal density
there can reduce this source of error. This explains the improvement
from wider proposals in Figure~\ref{fig:bimodal_snis}; increasing
width is useful only insofar as it improves coverage of influential
target regions.

Although SNIS is consistent, finite-sample error can place the estimated
target mean on the wrong side of the current policy mean, reversing
the update direction.

\subsection{From surrogate improvement to return improvement}
\label{app:trust_region_bound}

\begingroup
\newcommand{\boundsubstep}[3]{\par\noindent\phantomsection\label{#3}\textbf{Substep~#1: #2.}\space\ignorespaces}
{
\noindent\textbf{Statement to be proved.}
We restate Eq.~\ref{eq:trust_region_bound} from the main text.
For a fixed temperature $\tau$, an exact rollout critic, and bounded
soft advantage, define the actual return change, the surrogate loss
change, and the predicted return gain as
\begin{equation*}
  \begin{aligned}
    \Delta J_\tau
      &:=J_\tau(\pi_\theta)-J_\tau(\pi_{\mathrm{roll}}),\\
    \Delta\mathcal L_{\mathrm{ME}}
      &:=\mathcal L_{\mathrm{ME}}(\theta;\theta_{\mathrm{roll}})
        -\mathcal L_{\mathrm{ME}}(\theta_{\mathrm{roll}};\theta_{\mathrm{roll}}),\\
    \Delta J_{\mathrm{surrogate}}&:=-\Delta\mathcal L_{\mathrm{ME}}.
  \end{aligned}
\end{equation*}
Both $\Delta J_\tau$ and $\Delta\mathcal L_{\mathrm{ME}}$ are
defined as candidate minus rollout. Since $J_\tau$ is maximized but
$\mathcal L_{\mathrm{ME}}$ is minimized, a negative loss change gives
a positive predicted return gain $\Delta J_{\mathrm{surrogate}}$.
{Under the assumptions detailed below, we prove the
bound with a TV advantage penalty and a KL entropy correction:}
\begingroup
\begin{equation*}
  \begin{aligned}
    \Delta J_\tau
    &\geq\Delta J_{\mathrm{surrogate}}
      -\frac{4\gamma\epsilon_A}{(1-\gamma)^2}
       (D_{\mathrm{TV}}^{\max})^2
      -\frac{\tau}{1-\gamma}D_{\mathrm{KL}}^{\max},\\
    D_{\mathrm{TV}}^{\max}
    &:=\sup_s D_{\mathrm{TV}}\!\left(
      \pi_\theta(\cdot\mid s),\pi_{\mathrm{roll}}(\cdot\mid s)
    \right),\\
    D_{\mathrm{KL}}^{\max}
    &:=\sup_s\KL\!\left(
      \pi_\theta(\cdot\mid s)\middle\|\pi_{\mathrm{roll}}(\cdot\mid s)
    \right).
  \end{aligned}
  \tag{\ref{eq:trust_region_bound}}
\end{equation*}
\endgroup
Here $\epsilon_A=\sup_{s,a}|A_{\mathrm{roll}}(s,a)|$ bounds the soft
advantage, defined below. If $\Delta J_{\mathrm{surrogate}}$ exceeds
{the sum of the two penalties}, then
$J_\tau(\pi_\theta)>J_\tau(\pi_{\mathrm{roll}})$.
}

\textbf{Proof roadmap.}
The surrogate uses rollout states, whereas the
return depends on states visited by the candidate policy. To bound
the resulting prediction error, we proceed as follows.
{
\begin{enumerate}
  \item \textbf{Express the surrogate gain.}
    Expand the KL losses in
    \hyperref[app:bound_kl_expansion]{Substep~1.2}, then average the
    resulting advantage and entropy correction over rollout states in
    \hyperref[app:bound_rollout_average]{Substep~1.4}.
  \item \textbf{Express the return gain in the same form.}
    Telescope the rollout value terms in
    \hyperref[app:bound_telescoping]{Substep~2.2}. In
    \hyperref[app:bound_candidate_average]{Substep~2.4}, show that the
    two gains differ only in their state distributions.
  \item \textbf{Bound the advantage error.}
    Bound the expected advantage in
    \hyperref[app:bound_advantage_size]{Substep~3.1} and the change in
    visited states in
    \hyperref[app:bound_policy_change]{Substep~3.5}. Combine these
    bounds into a {TV penalty} in
    \hyperref[app:bound_advantage_combine]{Substep~3.6}.
  \item \textbf{Complete the return-improvement bound.}
    Bound the entropy correction in
    \hyperref[app:bound_entropy_error]{Substep~4.1}, combine the two
    penalties in \hyperref[app:bound_combine]{Substep~4.2}, and state
    the sufficient condition for improvement in
    \hyperref[app:bound_interpretation]{Substep~4.3}.
\end{enumerate}
}

We adapt the entropy-regularized performance-difference
identity of \citet[Lemma~B.1]{cayci2024convergence}
(\hyperref[app:bound_candidate_average]{Substep~2.4})
and combine it with the TRPO
state-distribution mismatch argument \citep{schulman2015trust}
(\hyperref[app:bound_advantage_combine]{Substep~3.6}).

\textbf{Notation and assumptions.}
Keep $\tau$ and $\pi_{\mathrm{roll}}$ fixed throughout the update, and
let $\pi_\theta$ be the candidate policy. We use the main text's
$\Delta J_\tau$ and $\Delta J_{\mathrm{surrogate}}$, with a positive
sign denoting improvement in both cases. The exact rollout critic
defines
\begin{equation*}
  A_{\mathrm{roll}}(s,a)
  =Q_{\mathrm{roll}}(s,a)-\tau\log\pi_{\mathrm{roll}}(a\mid s)
   -V_\tau^{\pi_{\mathrm{roll}}}(s),
  \qquad
  \epsilon_A:=\sup_{s,a}|A_{\mathrm{roll}}(s,a)|<\infty.
\end{equation*}
Its expectation under the rollout policy is zero at every state.
Define
\begin{equation}
  \begin{aligned}
    \bar A_\theta(s)
      &:=\E_{a\sim\pi_\theta(\cdot\mid s)}[A_{\mathrm{roll}}(s,a)],\\
    k(s)&:=\KL\!\left(\pi_\theta(\cdot\mid s)
                         \middle\|\pi_{\mathrm{roll}}(\cdot\mid s)\right),
    \qquad
    D:=\sup_s k(s)
      =D_{\mathrm{KL}}^{\max}(\pi_\theta\|\pi_{\mathrm{roll}})<\infty.
  \end{aligned}
  \label{eq:bound_notation}
\end{equation}
Here $\bar A_\theta(s)$ averages the rollout advantage over the
candidate's actions. The divergence $k(s)$ measures change from the
rollout policy; the KL in Eq.~\ref{eq:maxent_surrogate} instead measures
distance to the Boltzmann target. We assume the relevant discounted
expectations are finite and
$\gamma^t\E_{\rho_0,P,\pi_\theta}
[|V_\tau^{\pi_{\mathrm{roll}}}(s_t)|]\to0$.
The latter condition, satisfied for bounded soft
values, removes the terminal term in
\hyperref[app:bound_telescoping]{Substep~2.2}.

\subsubsection*{Step 1: express the surrogate improvement}
We show that the surrogate gain is the rollout-state
average of $\bar A_\theta(s)-\tau k(s)$, scaled by $1/(1-\gamma)$.
This form will allow a direct comparison with the return gain.

\boundsubstep{1.1}{Write the difference of the two losses}{app:bound_loss_difference}
First, negate the candidate-minus-rollout loss
change to obtain the predicted return gain, then expand it as a
difference of KL divergences to the same Boltzmann target.
Both surrogate losses use the same frozen critic $Q_{\mathrm{roll}}$,
Boltzmann target $p_{Q,\tau}$, and state distribution
$d_{\mathrm{roll}}$. Only the policy inside the KL changes. Starting
from the definition of improvement in the main text and substituting
Eq.~\ref{eq:maxent_surrogate},
\begin{equation*}
  \begin{aligned}
    \Delta J_{\mathrm{surrogate}}
    &=-\Delta\mathcal L_{\mathrm{ME}}
     =\mathcal L_{\mathrm{ME}}(\theta_{\mathrm{roll}};
        \theta_{\mathrm{roll}})
      -\mathcal L_{\mathrm{ME}}(\theta;\theta_{\mathrm{roll}})\\
    &=\frac{\tau}{1-\gamma}\E_{s\sim d_{\mathrm{roll}}}\!\left[
      \KL(\pi_{\mathrm{roll}}\|p_{Q,\tau})
      -\KL(\pi_\theta\|p_{Q,\tau})\right].
  \end{aligned}
\end{equation*}
In the following per-state calculations, all action distributions and
action expectations are conditional on the same state $s$.

\boundsubstep{1.2}{Expand the KL losses and cancel the normalizer}{app:bound_kl_expansion}
We turn the KL difference into a difference of
entropy-augmented critic values. The target's normalizing constant
cancels because it is identical in both losses.
From Eq.~\ref{eq:boltzmann_target},
$\log p_{Q,\tau}(a\mid s)=Q_{\mathrm{roll}}(s,a)/\tau-\log Z_Q(s)$.
For either $\pi=\pi_{\mathrm{roll}}$ or $\pi=\pi_\theta$, this gives
\begin{equation*}
  \begin{aligned}
    \tau\KL(\pi\|p_{Q,\tau})
    &=\E_{a\sim\pi}\!\left[
       \tau\log\pi(a\mid s)-\tau\log p_{Q,\tau}(a\mid s)\right]\\
    &=\E_{a\sim\pi}\!\left[
       \tau\log\pi(a\mid s)-Q_{\mathrm{roll}}(s,a)\right]
       +\tau\log Z_Q(s).
  \end{aligned}
\end{equation*}
The last term is outside the action expectation because it depends
only on $s$. Subtracting the candidate's KL from the rollout policy's
KL now makes the cancellation explicit:
\begin{equation*}
  \begin{aligned}
    &\tau\bigl[\KL(\pi_{\mathrm{roll}}\|p_{Q,\tau})
                  -\KL(\pi_\theta\|p_{Q,\tau})\bigr]\\
    &\quad=\E_{a\sim\pi_{\mathrm{roll}}}\!\left[
       \tau\log\pi_{\mathrm{roll}}(a\mid s)-Q_{\mathrm{roll}}(s,a)\right]
       +\tau\log Z_Q(s)\\
    &\qquad\phantom{=}
       -\E_{a\sim\pi_\theta}\!\left[
       \tau\log\pi_\theta(a\mid s)-Q_{\mathrm{roll}}(s,a)\right]
       -\tau\log Z_Q(s)\\
    &\quad=\E_{a\sim\pi_\theta}\!\left[
       Q_{\mathrm{roll}}(s,a)-\tau\log\pi_\theta(a\mid s)\right]
       -V_\tau^{\pi_{\mathrm{roll}}}(s).
  \end{aligned}
\end{equation*}
Here we used the soft-value definition from
Section~\ref{sec:problem_description},
\begin{equation*}
  V_\tau^{\pi_{\mathrm{roll}}}(s)
  =\E_{a\sim\pi_{\mathrm{roll}}}\!\left[
     Q_{\mathrm{roll}}(s,a)-\tau\log\pi_{\mathrm{roll}}(a\mid s)\right].
\end{equation*}
Thus the rollout expectation in the subtraction equals
$-V_\tau^{\pi_{\mathrm{roll}}}(s)$.

\boundsubstep{1.3}{Identify the advantage and entropy correction}{app:bound_advantage_identity}
We now show that the remaining candidate expectation
equals $\bar A_\theta(s)-\tau k(s)$.
Add and subtract
$\tau\log\pi_{\mathrm{roll}}(a\mid s)$ for each action:
\begin{equation*}
  \begin{aligned}
    &Q_{\mathrm{roll}}(s,a)-\tau\log\pi_\theta(a\mid s)
       -V_\tau^{\pi_{\mathrm{roll}}}(s)\\
    &\quad=\underbrace{Q_{\mathrm{roll}}(s,a)
       -\tau\log\pi_{\mathrm{roll}}(a\mid s)
       -V_\tau^{\pi_{\mathrm{roll}}}(s)}_{A_{\mathrm{roll}}(s,a)}
       -\tau\bigl[\log\pi_\theta(a\mid s)
                  -\log\pi_{\mathrm{roll}}(a\mid s)\bigr]\\
    &\quad=A_{\mathrm{roll}}(s,a)
       -\tau\log\frac{\pi_\theta(a\mid s)}
                         {\pi_{\mathrm{roll}}(a\mid s)}.
  \end{aligned}
\end{equation*}
Taking the action expectation under $\pi_\theta$ yields
\begin{equation*}
  \begin{aligned}
    &\E_{a\sim\pi_\theta}\!\left[
       Q_{\mathrm{roll}}(s,a)-\tau\log\pi_\theta(a\mid s)\right]
       -V_\tau^{\pi_{\mathrm{roll}}}(s)\\
    &\quad=\underbrace{\E_{a\sim\pi_\theta}
        [A_{\mathrm{roll}}(s,a)]}_{\bar A_\theta(s)}
      -\tau\underbrace{\E_{a\sim\pi_\theta}\!\left[
        \log\frac{\pi_\theta(a\mid s)}
                  {\pi_{\mathrm{roll}}(a\mid s)}\right]}_{
        \KL(\pi_\theta\|\pi_{\mathrm{roll}})=k(s)}.
  \end{aligned}
\end{equation*}
The action distribution in this expectation determines the direction
of the KL: its first argument is the candidate policy.
\boundsubstep{1.4}{Average over rollout states}{app:bound_rollout_average}
We complete the surrogate identity by inserting the
per-state result from
\hyperref[app:bound_advantage_identity]{Substep~1.3} into the loss
difference from \hyperref[app:bound_loss_difference]{Substep~1.1}:
\begin{equation}
  \begin{aligned}
    \Delta J_{\mathrm{surrogate}}
    &=\frac{1}{1-\gamma}\E_{s\sim d_{\mathrm{roll}}}\!\left[
       \E_{a\sim\pi_\theta}[Q_{\mathrm{roll}}(s,a)
              -\tau\log\pi_\theta(a\mid s)]
       -V_\tau^{\pi_{\mathrm{roll}}}(s)\right]\\
    &=\frac{1}{1-\gamma}\E_{s\sim d_{\mathrm{roll}}}
       [\bar A_\theta(s)-\tau k(s)].
  \end{aligned}
  \label{eq:bound_surrogate_gain}
\end{equation}
Thus, the surrogate gain is the candidate's expected rollout
advantage, minus the correction needed to evaluate entropy under the
candidate policy.

\subsubsection*{Step 2: express the actual return improvement}
We show that $\Delta J_\tau$ averages the same
quantity $\bar A_\theta(s)-\tau k(s)$ as the surrogate, but over the
candidate's state distribution $d^{\pi_\theta}$. We derive this
identity directly from the return definition.

\boundsubstep{2.1}{Subtract the returns using a common initial distribution}{app:bound_return_difference}
We first write the return difference as an
expectation over candidate trajectories. Restating
Eq.~\ref{eq:maxent_objective},
\begin{equation*}
  \begin{aligned}
    J_\tau(\pi_\theta)
    &=\frac{1}{1-\gamma}
      \E_{\substack{s\sim d^{\pi_\theta}\\a\sim\pi_\theta(\cdot\mid s)}}
      [r(s,a)-\tau\log\pi_\theta(a\mid s)]\\
    &=\E_{\rho_0,P,\pi_\theta}\!\left[
      \sum_{t=0}^{\infty}\gamma^t
      \bigl(r(s_t,a_t)-\tau\log\pi_\theta(a_t\mid s_t)\bigr)
      \right].
  \end{aligned}
\end{equation*}
By definition, the rollout policy's soft value is its expected
entropy-augmented return from a given state. Hence
$J_\tau(\pi_{\mathrm{roll}})
=\E_{s_0\sim\rho_0}[V_\tau^{\pi_{\mathrm{roll}}}(s_0)]$.
Both policies start from the same $\rho_0$, so this subtraction can
be placed inside the expectation over candidate trajectories:
\begin{equation*}
  \Delta J_\tau
  =\E_{\rho_0,P,\pi_\theta}\!\left[
    \sum_{t=0}^{\infty}\gamma^t
      \bigl(r(s_t,a_t)-\tau\log\pi_\theta(a_t\mid s_t)\bigr)
    -V_\tau^{\pi_{\mathrm{roll}}}(s_0)\right].
\end{equation*}

\boundsubstep{2.2}{Telescope the rollout values}{app:bound_telescoping}
We replace the subtracted initial value by a sum of
differences between successive values. This puts every contribution
to $\Delta J_\tau$ inside the same discounted sum.
For any trajectory and finite horizon $N$, the intermediate value
terms cancel:
\begin{equation*}
  \begin{aligned}
  &\sum_{t=0}^{N}\gamma^t
       \bigl[\gamma V_\tau^{\pi_{\mathrm{roll}}}(s_{t+1})
                    -V_\tau^{\pi_{\mathrm{roll}}}(s_t)\bigr] =\sum_{t=1}^{N+1}\gamma^t
             V_\tau^{\pi_{\mathrm{roll}}}(s_t)
           -\sum_{t=0}^{N}\gamma^t
             V_\tau^{\pi_{\mathrm{roll}}}(s_t)\\
  &\quad=-V_\tau^{\pi_{\mathrm{roll}}}(s_0)
           +\gamma^{N+1}V_\tau^{\pi_{\mathrm{roll}}}(s_{N+1}).
  \end{aligned}
\end{equation*}
This cancellation also holds along candidate trajectories: it only
uses successive evaluations of the same rollout value function.
Taking expectations and letting $N\to\infty$, the terminal term
vanishes by the assumption stated above. Substituting this sum for
the initial-value subtraction therefore gives
\begin{equation*}
  \Delta J_\tau
  =\E_{\rho_0,P,\pi_\theta}\!\left[
    \sum_{t=0}^{\infty}\gamma^t
    \Bigl(r(s_t,a_t)-\tau\log\pi_\theta(a_t\mid s_t)
       +\gamma V_\tau^{\pi_{\mathrm{roll}}}(s_{t+1})
       -V_\tau^{\pi_{\mathrm{roll}}}(s_t)\Bigr)\right].
\end{equation*}
Each summand is the reward plus entropy contribution at the current
step, plus the discounted next-state value, minus the current-state
value.

\boundsubstep{2.3}{Evaluate the conditional one-step contribution}{app:bound_conditional_return}
We show that each summand, averaged over the
candidate's action and the next state, equals
$\bar A_\theta(s)-\tau k(s)$.
For fixed $s$ and $a$, the rollout Bellman equation is
\begin{equation*}
  Q_{\mathrm{roll}}(s,a)
  =r(s,a)+\gamma\E_{s'\sim P(\cdot\mid s,a)}
       [V_\tau^{\pi_{\mathrm{roll}}}(s')].
\end{equation*}
Thus, averaging over the next state replaces
$r(s,a)+\gamma V_\tau^{\pi_{\mathrm{roll}}}(s')$
by $Q_{\mathrm{roll}}(s,a)$. Averaging next over the candidate's
action and using
\hyperref[app:bound_advantage_identity]{Substep~1.3} gives
\begin{equation*}
  \begin{aligned}
  &\E_{\substack{a\sim\pi_\theta(\cdot\mid s)\\s'\sim P(\cdot\mid s,a)}}
      \!\left[r(s,a)-\tau\log\pi_\theta(a\mid s)
           +\gamma V_\tau^{\pi_{\mathrm{roll}}}(s')
           -V_\tau^{\pi_{\mathrm{roll}}}(s)\right]\\
  &\quad=\E_{a\sim\pi_\theta(\cdot\mid s)}\!\left[
           Q_{\mathrm{roll}}(s,a)-\tau\log\pi_\theta(a\mid s)
           -V_\tau^{\pi_{\mathrm{roll}}}(s)\right]\\
  &\quad=\E_{a\sim\pi_\theta(\cdot\mid s)}\!\left[
           A_{\mathrm{roll}}(s,a)
           -\tau\log\frac{\pi_\theta(a\mid s)}
                             {\pi_{\mathrm{roll}}(a\mid s)}\right] =\bar A_\theta(s)-\tau k(s).
  \end{aligned}
\end{equation*}
These nested averages are the law of total expectation applied to
the candidate trajectories. The value functions remain those of the
rollout policy throughout.

\boundsubstep{2.4}{Average over candidate states and compare the gains}{app:bound_candidate_average}
We now obtain the return-improvement identity and
compare it with Eq.~\ref{eq:bound_surrogate_gain}.
At time $t$, these states follow $\rho_t^{\pi_\theta}$. Using
$d^{\pi_\theta}=(1-\gamma)\sum_{t\geq0}\gamma^t\rho_t^{\pi_\theta}$,
we obtain
\begin{equation}
  \Delta J_\tau
    =\sum_{t=0}^{\infty}\gamma^t
      \E_{s\sim\rho_t^{\pi_\theta}}
        [\bar A_\theta(s)-\tau k(s)] =\frac{1}{1-\gamma}\E_{s\sim d^{\pi_\theta}}
        [\bar A_\theta(s)-\tau k(s)].
  \label{eq:bound_actual_gain}
\end{equation}
For a direct comparison, restate the actual return improvement and
the surrogate improvement from Eq.~\ref{eq:bound_surrogate_gain}:
\begin{equation*}
  \Delta J_\tau
    =\frac{1}{1-\gamma}\E_{s\sim d^{\pi_\theta}}
      [\bar A_\theta(s)-\tau k(s)], \qquad \Delta J_{\mathrm{surrogate}}
    =\frac{1}{1-\gamma}\E_{s\sim d_{\mathrm{roll}}}
      [\bar A_\theta(s)-\tau k(s)].
\end{equation*}
With an exact rollout critic and fixed temperature, the two
improvements differ only in how states are weighted. Both use the
candidate policy's actions inside $\bar A_\theta(s)$ and $k(s)$.
The surrogate evaluates the candidate policy on states visited by
the rollout policy, whereas the actual return reflects states visited
by the candidate itself. If these state distributions coincide, the
two improvements are exactly equal. The remaining proof bounds the
discrepancy caused by changing the visited states.

\subsubsection*{Step 3: bound the error in the advantage term}
We bound the error from averaging $\bar A_\theta$
over rollout states instead of candidate states. The argument has two parts:
a small policy change limits both the expected rollout advantage
and the change in visited states. Their product gives the advantage
penalty in Eq.~\ref{eq:trust_region_bound}.

\boundsubstep{3.1}{Bound the expected rollout advantage}{app:bound_advantage_size}
We show that
$|\bar A_\theta(s)|\leq 2\epsilon_A\alpha$, where $\alpha$ is the
largest policy change in total variation. Recall $\epsilon_A$ and
define $\alpha$ by
\begin{equation*}
  \begin{aligned}
    \epsilon_A&:=\sup_{s,a}|A_{\mathrm{roll}}(s,a)|,\\
    \alpha&:=\sup_s D_{\mathrm{TV}}\!\left(
      \pi_\theta(\cdot\mid s),\pi_{\mathrm{roll}}(\cdot\mid s)\right)
      =\frac12\sup_s\int_{\mathcal A}
        |\pi_\theta(a\mid s)-\pi_{\mathrm{roll}}(a\mid s)|\,\mathrm da.
  \end{aligned}
\end{equation*}
Here $\epsilon_A$ bounds the advantage magnitude, while
$\alpha\in[0,1]$ measures the largest policy change across states in
total variation (TV). By the soft-value definition,
\begin{equation*}
  \int_{\mathcal A}\pi_{\mathrm{roll}}(a\mid s)
      A_{\mathrm{roll}}(s,a)\,\mathrm da
  =\E_{a\sim\pi_{\mathrm{roll}}}\!\left[
      Q_{\mathrm{roll}}(s,a)-\tau\log\pi_{\mathrm{roll}}(a\mid s)\right]
      -V_\tau^{\pi_{\mathrm{roll}}}(s)=0.
\end{equation*}
Subtracting this zero from
$\bar A_\theta(s)=\int_{\mathcal A}\pi_\theta(a\mid s)
A_{\mathrm{roll}}(s,a)\,\mathrm da$ introduces the difference between
the two policies. The triangle inequality and the definitions above give
\begin{equation}
  \begin{aligned}
    |\bar A_\theta(s)|
    &=\left|\int_{\mathcal A}
      [\pi_\theta(a\mid s)-\pi_{\mathrm{roll}}(a\mid s)]
      A_{\mathrm{roll}}(s,a)\,\mathrm da\right|\\
    &\leq\int_{\mathcal A}
      |\pi_\theta(a\mid s)-\pi_{\mathrm{roll}}(a\mid s)|
      |A_{\mathrm{roll}}(s,a)|\,\mathrm da\\
    &\leq\epsilon_A\int_{\mathcal A}
      |\pi_\theta(a\mid s)-\pi_{\mathrm{roll}}(a\mid s)|\,\mathrm da
      \leq 2\epsilon_A\alpha.
  \end{aligned}
  \label{eq:bound_expected_advantage}
\end{equation}
\boundsubstep{3.2}{Derive the balance equation for discounted visits}{app:bound_state_balance}
To compare state distributions, we first derive
$d^\pi=(1-\gamma)\rho_0+\gamma d^\pi P_\pi$ from the definition of
discounted visitation. This identity separates the initial visit
from all later visits. We begin by defining the one-step transition
operator $P_\pi$.
Starting at state $s$, first draw an action from $\pi(\cdot\mid s)$,
then draw the next state from $P(\cdot\mid s,a)$. Averaging over the
action gives the next-state distribution under the policy:
\begin{equation*}
  P_\pi(s'\mid s)
  :=\int_{\mathcal A}\pi(a\mid s)P(s'\mid s,a)\,\mathrm da.
\end{equation*}
This conditional distribution is called the policy's transition
kernel. If the current state has distribution $\rho$, the next-state
distribution is written $\rho P_\pi$, meaning
\begin{equation*}
  (\rho P_\pi)(s')
  :=\int\rho(s)P_\pi(s'\mid s)\,\mathrm ds.
\end{equation*}
Thus multiplication by $P_\pi$ means advancing a state distribution
by one time step. In particular, $\rho_{t+1}^\pi=\rho_t^\pi P_\pi$.

Next, separate the initial visit from the later visits.
The definition of $d^\pi$ includes every time step, starting at
$t=0$. Separating out its first term gives an equivalent expression:
\begin{equation*}
  d^\pi
    :=(1-\gamma)\sum_{t=0}^{\infty}\gamma^t\rho_t^\pi =(1-\gamma)\rho_0
      +(1-\gamma)\sum_{t=1}^{\infty}\gamma^t\rho_t^\pi.
\end{equation*}
The $t=0$ term is $(1-\gamma)\gamma^0\rho_0^\pi
=(1-\gamma)\rho_0$. Both expressions describe the same
distribution; the second writes its initial-state contribution
separately.

Now rewrite the remaining sum. Set $u=t-1$ and use
$\rho_{u+1}^\pi=\rho_u^\pi P_\pi$:
\begin{equation*}
  \begin{aligned}
  (1-\gamma)\sum_{t=1}^{\infty}\gamma^t\rho_t^\pi
    &=\gamma(1-\gamma)\sum_{u=0}^{\infty}
        \gamma^u\rho_{u+1}^\pi\\
  &=\gamma(1-\gamma)\sum_{u=0}^{\infty}
        \gamma^u(\rho_u^\pi P_\pi) =\gamma\underbrace{\left[(1-\gamma)\sum_{u=0}^{\infty}
        \gamma^u\rho_u^\pi\right]}_{d^\pi}P_\pi
     =\gamma d^\pi P_\pi.
  \end{aligned}
\end{equation*}
The same transition operator $P_\pi$ acts linearly on every term, so
we can first form the weighted sum in brackets. This bracket is
exactly the original definition of $d^\pi$, which explains why it
reappears. Applying $P_\pi$ advances every visit by one step, and the
factor $\gamma$ adds the corresponding discount: the first term of
$\gamma d^\pi P_\pi$ is $(1-\gamma)\gamma\rho_1^\pi$.
Thus it collects visits from time $1$ onward. Adding the initial
visit gives
\begin{equation*}
  d^\pi=(1-\gamma)\rho_0+\gamma d^\pi P_\pi.
\end{equation*}

\boundsubstep{3.3}{Separate propagation from policy change}{app:bound_state_difference}
We express the difference in visited states as two
contributions: propagation of an existing state difference, and the
effect of choosing different actions. Apply the balance equation
from \hyperref[app:bound_state_balance]{Substep~3.2} to each policy.
Both policies use the same $\rho_0$, so their initial-state terms
cancel. We then add and subtract
$\gamma d_{\mathrm{roll}}P_{\pi_\theta}$:
\begin{equation*}
  \begin{aligned}
  d^{\pi_\theta}-d_{\mathrm{roll}}
    &=\gamma\bigl[d^{\pi_\theta}P_{\pi_\theta}
                 -d_{\mathrm{roll}}P_{\pi_{\mathrm{roll}}}\bigr] =\gamma\bigl[d^{\pi_\theta}P_{\pi_\theta}
                 -d_{\mathrm{roll}}P_{\pi_\theta}
                 +d_{\mathrm{roll}}P_{\pi_\theta}
                 -d_{\mathrm{roll}}P_{\pi_{\mathrm{roll}}}\bigr]\\
  &=\gamma(d^{\pi_\theta}-d_{\mathrm{roll}})P_{\pi_\theta}
      +\gamma d_{\mathrm{roll}}
         (P_{\pi_\theta}-P_{\pi_{\mathrm{roll}}}).
  \end{aligned}
\end{equation*}
The first term propagates the difference in current-state
distributions through the same candidate transition kernel. The
second starts from the same rollout state distribution and measures
the effect of changing the policy's action probabilities. We bound
these two contributions separately below.

\boundsubstep{3.4}{Bound propagation through a common transition kernel}{app:bound_state_propagation}
We show that one step under the same policy cannot
increase the $L^1$ distance between state distributions. This bounds
the first contribution from
\hyperref[app:bound_state_difference]{Substep~3.3}.
Write $\delta d:=d^{\pi_\theta}-d_{\mathrm{roll}}$ for the signed
difference of the two state distributions, with
$\|\delta d\|_1:=\int|\delta d(s)|\,\mathrm ds$.
Taking norms in the preceding identity and applying the triangle
inequality gives
\begin{equation*}
  \|\delta d\|_1
  \leq\gamma\|\delta dP_{\pi_\theta}\|_1
      +\gamma\|d_{\mathrm{roll}}
        (P_{\pi_\theta}-P_{\pi_{\mathrm{roll}}})\|_1.
\end{equation*}
For the first term, use the nonnegativity and normalization of the
transition probabilities:
\begin{equation*}
  \|\delta dP_{\pi_\theta}\|_1
    =\int\left|\int\delta d(s)P_{\pi_\theta}(s'\mid s)
                       \,\mathrm ds\right|\,\mathrm ds' \leq\int|\delta d(s)|
       \underbrace{\left[\int P_{\pi_\theta}(s'\mid s)
                              \,\mathrm ds'\right]}_{1}
       \,\mathrm ds
     =\|\delta d\|_1.
\end{equation*}
Thus applying the same transition kernel cannot increase the
$L^1$ distance between state distributions.

\boundsubstep{3.5}{Bound policy change and the resulting state shift}{app:bound_policy_change}
We bound the second contribution by $2\alpha$.
Combining it with \hyperref[app:bound_state_propagation]{Substep~3.4}
then gives $\|d^{\pi_\theta}-d_{\mathrm{roll}}\|_1
\leq 2\gamma\alpha/(1-\gamma)$.
First fix a state $s$ and expand the two
policy-averaged transition kernels:
\begin{equation*}
  \begin{aligned}
  &\int|P_{\pi_\theta}(s'\mid s)
           -P_{\pi_{\mathrm{roll}}}(s'\mid s)|\,\mathrm ds' =\int\left|\int_{\mathcal A}
       [\pi_\theta(a\mid s)-\pi_{\mathrm{roll}}(a\mid s)]
       P(s'\mid s,a)\,\mathrm da\right|\,\mathrm ds'\\
  &\quad\leq\int_{\mathcal A}
       |\pi_\theta(a\mid s)-\pi_{\mathrm{roll}}(a\mid s)|
       \underbrace{\left[\int P(s'\mid s,a)\,\mathrm ds'\right]}_{1}
       \,\mathrm da
     \leq 2\alpha.
  \end{aligned}
\end{equation*}
The last inequality is the definition of $\alpha$: at every state,
the $L^1$ distance between the action distributions is at most
$2\alpha$. Averaging over the normalized rollout state distribution
preserves this bound:
\begin{equation*}
  \begin{aligned}
    \|d_{\mathrm{roll}}
      (P_{\pi_\theta}-P_{\pi_{\mathrm{roll}}})\|_1
    &\leq\int d_{\mathrm{roll}}(s)
      \left[\int|P_{\pi_\theta}(s'\mid s)
                   -P_{\pi_{\mathrm{roll}}}(s'\mid s)|
                   \,\mathrm ds'\right]\,\mathrm ds\\
    &\leq 2\alpha\int d_{\mathrm{roll}}(s)\,\mathrm ds
     =2\alpha.
  \end{aligned}
\end{equation*}
Substituting these two bounds into the triangle inequality gives
$\|\delta d\|_1\leq\gamma\|\delta d\|_1+2\gamma\alpha$.
Move the term $\gamma\|\delta d\|_1$ to the left and divide by
$1-\gamma>0$. This gives
\begin{equation}
  (1-\gamma)\|d^{\pi_\theta}-d_{\mathrm{roll}}\|_1
    \leq 2\gamma\alpha, \qquad \|d^{\pi_\theta}-d_{\mathrm{roll}}\|_1
    \leq\frac{2\gamma\alpha}{1-\gamma}.
  \label{eq:bound_state_shift}
\end{equation}
\boundsubstep{3.6}{{Combine the advantage bounds in TV}}{app:bound_advantage_combine}
We now bound the advantage-expectation error by
multiplying $\sup_s|\bar A_\theta(s)|$ by the state-distribution
distance. Both bounds are proportional to $\alpha$, so their product is
{proportional to $\alpha^2$.}
Combining Eqs.~\ref{eq:bound_expected_advantage} and
\ref{eq:bound_state_shift} {gives}
\begingroup
\begin{equation}
  \begin{aligned}
    \frac{1}{1-\gamma}
    \left|\E_{d^{\pi_\theta}}[\bar A_\theta]
           -\E_{d_{\mathrm{roll}}}[\bar A_\theta]\right|
    &\leq\frac{\|d^{\pi_\theta}-d_{\mathrm{roll}}\|_1
                  \sup_s|\bar A_\theta(s)|}{1-\gamma}\\
    &\leq\frac{4\gamma\epsilon_A}{(1-\gamma)^2}\alpha^2.
  \end{aligned}
  \label{eq:bound_advantage_error}
\end{equation}
\endgroup
This step bounds the contribution of the advantage term to the
discrepancy between $\Delta J_\tau$ and
$\Delta J_{\mathrm{surrogate}}$. Small policy changes limit both
the change in visited states and the expected rollout advantage,
producing the quadratic total-variation bound above. Step~4 accounts
for the remaining entropy correction.

\subsubsection*{Step 4: complete the return-improvement bound}
We bound the remaining entropy correction and add
it to the advantage error. This proves Eq.~\ref{eq:trust_region_bound}
and gives a sufficient condition for $J_\tau$ to improve.

\boundsubstep{4.1}{Bound the entropy correction}{app:bound_entropy_error}
Subtracting the two gain identities from
\hyperref[app:bound_candidate_average]{Substep~2.4}, the contribution
from $-\tau k(s)$ is
$\frac{\tau}{1-\gamma}
(\E_{d_{\mathrm{roll}}}[k]-\E_{d^{\pi_\theta}}[k])$.
We show that this contribution is at least $-\tau D/(1-\gamma)$.
Since $0\leq k(s)\leq D$, both state averages lie in $[0,D]$. Hence
{
\begin{equation*}
  \frac{\tau}{1-\gamma}
  \left(\E_{s\sim d_{\mathrm{roll}}}[k(s)]
        -\E_{s\sim d^{\pi_\theta}}[k(s)]\right)
  \geq -\frac{\tau}{1-\gamma}D.
\end{equation*}
}

\boundsubstep{4.2}{Combine the advantage and entropy bounds}{app:bound_combine}
We obtain the claimed lower bound on $\Delta J_\tau$
by subtracting the two gain identities and applying
\hyperref[app:bound_advantage_combine]{Substep~3.6} to the advantage
term, then \hyperref[app:bound_entropy_error]{Substep~4.1} to the
entropy term:
\begingroup
\begin{equation}
  \begin{aligned}
  \Delta J_\tau
    \geq{}&\Delta J_{\mathrm{surrogate}}
      -\frac{4\gamma\epsilon_A}{(1-\gamma)^2}\alpha^2 +\frac{\tau}{1-\gamma}
      \left(\E_{s\sim d_{\mathrm{roll}}}[k(s)]
            -\E_{s\sim d^{\pi_\theta}}[k(s)]\right)\\
  \geq{}&\Delta J_{\mathrm{surrogate}}
      -\frac{4\gamma\epsilon_A}{(1-\gamma)^2}\alpha^2
      -\frac{\tau}{1-\gamma}D.
  \end{aligned}
  \label{eq:bound_final_derivation}
\end{equation}
\endgroup
This proves Eq.~\ref{eq:trust_region_bound}.
The first penalty bounds the error in
the advantage prediction; the second bounds the error in its entropy
correction. Both arise from evaluating the update on rollout states
instead of the states visited by the candidate.

\boundsubstep{4.3}{State the improvement condition and its practical meaning}{app:bound_interpretation}
The bound guarantees an increase in $J_\tau$ when
the surrogate gain exceeds the two error penalties. Explicitly,
\begingroup
\begin{equation*}
  \Delta J_{\mathrm{surrogate}}
  >\frac{4\gamma\epsilon_A}{(1-\gamma)^2}\alpha^2
      +\frac{\tau}{1-\gamma}D
  \quad\Longrightarrow\quad
  J_\tau(\pi_\theta)>J_\tau(\pi_{\mathrm{roll}}).
\end{equation*}
\endgroup
{To express a sufficient condition using only KL,
apply Pinsker's inequality, $2\alpha^2\leq D$:
\begin{equation*}
  \frac{4\gamma\epsilon_A}{(1-\gamma)^2}\alpha^2
      +\frac{\tau}{1-\gamma}D
  \leq\left(\frac{2\gamma\epsilon_A}{(1-\gamma)^2}
            +\frac{\tau}{1-\gamma}\right)D.
\end{equation*}
The surrogate gain exceeding this KL upper bound is therefore also
sufficient for return improvement.}
In words, the predicted improvement must exceed the bound on the
prediction error. A small KL alone does not imply improvement, and
failing this sufficient condition does not imply that the update is
harmful.

This motivates the KL term in Eq.~\ref{eq:constrained_lagrangian}:
minimize the surrogate while limiting policy change. Replacing the
worst-state KL constraint by a rollout-average constraint gives the
average penalty. With the critic and rollout distribution fixed, the
full Lagrangian satisfies
\begin{equation*}
  \begin{aligned}
    \mathcal L(\pi_\theta,\tau,\lambda)
    &=(1-\gamma)\mathcal L_{\mathrm{ME}}(\theta;\theta_{\mathrm{roll}})
      +\lambda\E_{s\sim d_{\mathrm{roll}}}[k(s)]\\
    &\quad+\tau\mathcal H_{\mathrm{target}}-\lambda\delta_{\mathrm{KL}}
      -\tau\E_{s\sim d_{\mathrm{roll}}}[\log Z_Q(s)].
  \end{aligned}
\end{equation*}
The second line is independent of the policy when the multipliers are
fixed. The average constraint is a practical relaxation. With this
relaxation and an estimated critic, minimizing $\mathcal L$ over the
policy does not by itself guarantee the sufficient condition above.

\endgroup

\begingroup
\subsection{Trust regions for Gaussian policies}
\label{app:gaussian_trust_regions}

For Gaussian policies, a bound on either KL direction also bounds the
other. A TV bound strictly below one bounds both KL directions.
The short proofs below give conservative constants sufficient to
control the entropy correction in Eq.~\ref{eq:trust_region_bound}.

\textbf{Setup and notation.}
Fix a state and suppress the conditioning on $s$. Assume positive-definite
covariances, meaning every variance is strictly positive for our diagonal
Gaussian policies, and use natural logarithms. KL and TV are unchanged by a
common invertible transformation: its Jacobian cancels in the density
ratio and the integral. We can therefore undo the common tanh
squashing and action rescaling, then subtract the rollout mean and
divide by the rollout standard deviation in each coordinate. In
these standardized coordinates,
\begin{equation*}
  \pi_{\mathrm{roll}}=\mathcal N(0,I),\qquad
  \pi_{\mathrm{curr}}
    =\mathcal N\!\left(m,\operatorname{diag}(\sigma_1^2,\ldots,\sigma_d^2)\right).
\end{equation*}
Here $m_i$ is the standardized mean change, and $\sigma_i^2$ is the
current variance relative to the rollout variance. Full covariance
matrices reduce to the same form by whitening and rotation.
The Gaussian KL formulas separate variance changes from mean changes:
\begin{equation}
  \begin{aligned}
    \KL(\pi_{\mathrm{roll}}\|\pi_{\mathrm{curr}})
      &=\frac12\sum_i
        \left[\log\sigma_i^2+\frac1{\sigma_i^2}-1
                    +\frac{m_i^2}{\sigma_i^2}\right],\\
    \KL(\pi_{\mathrm{curr}}\|\pi_{\mathrm{roll}})
      &=\frac12\sum_i
        \left[\sigma_i^2-1-\log\sigma_i^2+m_i^2\right].
  \end{aligned}
  \label{eq:gaussian_kl_coordinates}
\end{equation}
Each variance term and each mean term is nonnegative.

\subsubsection{Forward KL bounds reverse KL}
\label{app:gaussian_kl_comparison}

Suppose $\KL(\pi_{\mathrm{roll}}\|\pi_{\mathrm{curr}})\leq\delta$.
We prove the explicit bound
\begin{equation}
  \boxed{\KL(\pi_{\mathrm{curr}}\|\pi_{\mathrm{roll}})
    \leq \delta e^{1+2\delta}.}
  \label{eq:gaussian_reverse_budget}
\end{equation}
This bound tends to zero with $\delta$ and does not depend on the
action dimension. A sharper bound is given by
\citet[Theorem~1]{zhang2023gaussiankl}.

\textbf{Step 1: bound the variance ratios.}
Since every contribution to forward KL is nonnegative,
\begin{equation*}
  \log\sigma_i^2+\frac1{\sigma_i^2}-1\leq2\delta.
\end{equation*}
Rearranging and using $1/\sigma_i^2>0$ gives
\begin{equation*}
  \log\sigma_i^2
  \leq 1+2\delta-\frac1{\sigma_i^2}
  \leq 1+2\delta.
\end{equation*}
Exponentiating preserves the inequality, so
\begin{equation*}
  \sigma_i^2\leq e^{1+2\delta}=:M_\delta.
\end{equation*}
This is a conservative upper bound on the variance; the corresponding
standard deviation satisfies $\sigma_i\leq e^{1/2+\delta}$.

\textbf{Step 2: compare the KL terms.}
For any $x>0$,
\begin{equation}
  x-1-\log x
  \leq \max\{1,x\}\bigl(x^{-1}-1+\log x\bigr).
  \label{eq:gaussian_variance_comparison}
\end{equation}
To see this, both scalar expressions vanish at $x=1$, and
the derivative of the left expression is $x$ times the derivative
of $x^{-1}-1+\log x$. Integrating between $1$ and $x$ bounds this
factor by $\max\{1,x\}$.
Apply Eq.~\ref{eq:gaussian_variance_comparison} with $x=\sigma_i^2$.
Step 1 gives both $M_\delta\geq1$ and $\sigma_i^2\leq M_\delta$,
so $\max\{1,\sigma_i^2\}\leq M_\delta$. Thus the variance contribution
in each coordinate satisfies
\begin{equation*}
  \begin{aligned}
    \sigma_i^2-1-\log\sigma_i^2
    &\leq\max\{1,\sigma_i^2\}
      \left(\frac1{\sigma_i^2}-1+\log\sigma_i^2\right)\\
    &\leq M_\delta
      \left(\frac1{\sigma_i^2}-1+\log\sigma_i^2\right).
  \end{aligned}
\end{equation*}
The mean contribution satisfies the same comparison:
\begin{equation*}
  m_i^2=\sigma_i^2\frac{m_i^2}{\sigma_i^2}
  \leq M_\delta\frac{m_i^2}{\sigma_i^2}.
\end{equation*}
Add these two inequalities in each coordinate, then sum over $i$.
The same $M_\delta$ applies to every coordinate, so it factors out:
\begin{equation*}
  \begin{aligned}
    \KL(\pi_{\mathrm{curr}}\|\pi_{\mathrm{roll}})
    &=\frac12\sum_i
      \left[\sigma_i^2-1-\log\sigma_i^2+m_i^2\right]\\
    &\leq\frac12\sum_i\left[
      M_\delta\left(\frac1{\sigma_i^2}-1+\log\sigma_i^2\right)
      +M_\delta\frac{m_i^2}{\sigma_i^2}\right]\\
    &=M_\delta\,\frac12\sum_i
      \left[\log\sigma_i^2+\frac1{\sigma_i^2}-1
            +\frac{m_i^2}{\sigma_i^2}\right]\\
    &=M_\delta\KL(\pi_{\mathrm{roll}}\|\pi_{\mathrm{curr}})
      \leq M_\delta\delta.
  \end{aligned}
\end{equation*}
The last inequality uses the assumed forward-KL budget
$\KL(\pi_{\mathrm{roll}}\|\pi_{\mathrm{curr}})\leq\delta$,
proving Eq.~\ref{eq:gaussian_reverse_budget}.
Exchanging the policies proves the reverse implication. If their
covariances are equal, the two KL divergences are exactly equal.

\subsubsection{TV bounds both KL directions}
\label{app:gaussian_tv_comparison}

At a fixed state, Gaussians with positive-definite covariances satisfy
$\rho:=D_{\mathrm{TV}}(\pi_{\mathrm{roll}},\pi_{\mathrm{curr}})<1$.
Let $B_{\mathrm G}(\delta)$ denote the dimension-independent reverse-KL
bound given a forward-KL budget $\delta$, established by
\citet[Theorem~1]{zhang2023gaussiankl}. This bound is finite and tends
to zero with $\delta$. Writing $t_\rho=-2\log(1-\rho)$, we obtain the
following corollary:
\begin{equation}
  \boxed{
    \max\!\left\{
      \KL(\pi_{\mathrm{curr}}\|\pi_{\mathrm{roll}}),
      \KL(\pi_{\mathrm{roll}}\|\pi_{\mathrm{curr}})
    \right\}
    \leq\underbrace{2B_{\mathrm G}(t_\rho)+t_\rho}_{C_{\mathrm{TV}}(\rho)}.}
  \label{eq:gaussian_tv_to_kl}
\end{equation}

\textbf{Proof.}
Define the affinity and normalized geometric mean
\begin{equation*}
  A=\int\sqrt{\pi_{\mathrm{roll}}(a)\pi_{\mathrm{curr}}(a)}\,\mathrm da,
  \qquad
  \pi_{\mathrm{mid}}(a)
  =\frac{\sqrt{\pi_{\mathrm{roll}}(a)\pi_{\mathrm{curr}}(a)}}{A}.
\end{equation*}
The distribution $\pi_{\mathrm{mid}}$ is Gaussian because its log density
is quadratic. The TV--Hellinger inequality gives $A\geq1-\rho$
\citep[Section~2]{devroye2023gaussiantv}.
Substituting the definition of $\pi_{\mathrm{mid}}$ yields
\begin{equation*}
  \KL(\pi_{\mathrm{mid}}\|\pi_{\mathrm{roll}})
  +\KL(\pi_{\mathrm{mid}}\|\pi_{\mathrm{curr}})
  =-2\log A\leq t_\rho.
\end{equation*}
Both terms are nonnegative, so each is at most $t_\rho$. Zhang et al.'s
theorem therefore gives
$\KL(\pi_{\mathrm{curr}}\|\pi_{\mathrm{mid}})\leq B_{\mathrm G}(t_\rho)$.
Finally, the same density identity implies
\begin{equation*}
  \begin{aligned}
    \KL(\pi_{\mathrm{curr}}\|\pi_{\mathrm{roll}})
    &=2\KL(\pi_{\mathrm{curr}}\|\pi_{\mathrm{mid}})-2\log A\\
    &\leq2B_{\mathrm G}(t_\rho)+t_\rho.
  \end{aligned}
\end{equation*}
Exchanging the policies proves the other direction.\hfill$\square$

Thus $C_{\mathrm{TV}}(\rho)$ is finite for every $\rho<1$ and tends to
zero as $\rho\to0$. A common invertible tanh transformation preserves
both divergences. Applying this result uniformly across states requires
a common TV bound strictly below one.

\subsubsection{Implications for the return bound}
\label{app:gaussian_trust_region_implications}

If the forward-KL budget $\delta$ holds at every state, Pinsker and
Eq.~\ref{eq:gaussian_reverse_budget} give
$(D_{\mathrm{TV}}^{\max})^2\leq\delta/2$ and
$D_{\mathrm{KL}}^{\max}\leq\delta e^{1+2\delta}$.
A uniform TV budget $\rho<1$ instead gives
$D_{\mathrm{KL}}^{\max}\leq C_{\mathrm{TV}}(\rho)$.
Thus either constraint controls both penalties in
Eq.~\ref{eq:trust_region_bound}. Small temperature further reduces
the entropy penalty. The exact-critic and bounded-soft-advantage
assumptions still apply, and the surrogate gain must exceed the
penalties to guarantee improvement.

These are uniform, population-level statements. REPPO's sampled
forward-KL constraint \citep{voelcker2026relative} and FERPO's
state-average ESS objective do not establish worst-state control.
For FERPO, the fitting condition of Proposition~\ref{prop:ess_actor_tv}
and a population ESS floor $e_\infty(s)\geq e_0>e^{-1/2}$ at every
state would give
$D_{\mathrm{TV}}^{\max}\leq\sqrt{-2\log e_0}<1$; the TV result then
also bounds the entropy-related KL. These comparisons concern the
two Gaussian actors; the unrestricted target $q^*$ need not be Gaussian.
\endgroup

\end{document}